\theoremstyle{plain}
\newtheorem{thm}{\protect\theoremname}
\theoremstyle{remark}
\newtheorem{rem}[thm]{\protect\remarkname}
\theoremstyle{plain}
\newtheorem{prop}[thm]{\protect\propositionname}
\theoremstyle{plain}
\newtheorem{cor}[thm]{\protect\corollaryname}
\theoremstyle{definition}
\newtheorem{example}[thm]{\protect\examplename}
\author{%
  Arthur Jacot \\
  Courant Institute of Mathematical Sciences\\
  New York University\\
  New York, NY 10012 \\
  \texttt{arthur.jacot@nyu.edu}
}
\providecommand{\corollaryname}{Corollary}
\providecommand{\examplename}{Example}
\providecommand{\propositionname}{Proposition}
\providecommand{\remarkname}{Remark}
\providecommand{\theoremname}{Theorem}
\begin{document}
\title{Bottleneck Structure in Learned Features:\\
Low-Dimension vs Regularity Tradeoff}
\maketitle
\begin{abstract}
Previous work \cite{jacot_2022_BN_rank} has shown that DNNs with
large depth $L$ and $L_{2}$-regularization are biased towards learning
low-dimensional representations of the inputs, which can be interpreted
as minimizing a notion of rank $R^{(0)}(f)$ of the learned function
$f$, conjectured to be the Bottleneck rank. We compute finite depth
corrections to this result, revealing a measure $R^{(1)}$ of regularity
which bounds the pseudo-determinant of the Jacobian $\left|Jf(x)\right|_{+}$
and is subadditive under composition and addition. This formalizes
a balance between learning low-dimensional representations and minimizing
complexity/irregularity in the feature maps, allowing the network
to learn the `right' inner dimension. Finally, we prove the conjectured
bottleneck structure in the learned features as $L\to\infty$: for
large depths, almost all hidden representations are approximately
$R^{(0)}(f)$-dimensional, and almost all weight matrices $W_{\ell}$
have $R^{(0)}(f)$ singular values close to 1 while the others are
$O(L^{-\frac{1}{2}})$. Interestingly, the use of large learning rates
is required to guarantee an order $O(L)$ NTK which in turns guarantees
infinite depth convergence of the representations of almost all layers.
\end{abstract}

\section{Introduction}

The representation cost $R(f;L)=\min_{\theta:f_{\theta}=f}\left\Vert \theta\right\Vert ^{2}$
\cite{dai_2021_repres_cost_DLN} can be defined for any model and
describes the bias in function space resulting from the minimization
of the $L_{2}$-norm of the parameters. While it can be computed explicitly
for linear networks \cite{dai_2021_repres_cost_DLN} or shallow nonlinear
ones \cite{bach2017_F1_norm}, the deep non-linear case remains ill-understood
\cite{jacot_2022_L2_reformulation}.

Previous work \cite{jacot_2022_BN_rank} has shown that the representation
cost of DNNs with homogeneous nonlinearity $\sigma$ converges to
a notion of rank over nonlinear functions$\lim_{L\to\infty}\frac{R(f;L)}{L}\to R^{(0)}(f)$.
Over a large set of functions $f$, the limiting representation cost
$R^{(0)}(f)$ was proven the so-called Bottleneck (BN) rank $\mathrm{Rank}_{BN}(f)$
which is the smallest integer $k$ such that $f:\mathbb{R}^{d_{in}}\to\mathbb{R}^{d_{out}}$
can be factored $f=\mathbb{R}^{d_{in}}\stackrel{g}{\longrightarrow}\mathbb{R}^{k}\stackrel{h}{\longrightarrow}\mathbb{R}^{d_{out}}$
with inner dimension $k$ (it is conjectured to match it everywhere).
This suggests that large depth $L_{2}$-regularized DNNs are adapted
for learning functions of the form $f^{*}=g\circ h$ with small inner
dimension.

This can also be interpreted as DNNs learning symmetries, since a
function $f:\Omega\to\mathbb{R}^{d_{out}}$ with symmetry group $G$
(i.e. $f(g\cdot x)=f(x)$) can be defined as mapping the inputs $\Omega$
to an embedding of the modulo space $\nicefrac{\Omega}{G}$ and then
to the outputs $\mathbb{R}^{d_{out}}$. Thus a function with a lot
of symmetries will have a small BN-rank, since $\mathrm{Rank}_{BN}(f)\leq\dim\left(\nicefrac{\Omega}{G}\right)$
where $\dim\left(\nicefrac{\Omega}{G}\right)$ is the smallest dimension
$\nicefrac{\Omega}{G}$ can be embedded into.

A problem is that this notion of rank does not control the regularity
of $f$, but results of \cite{jacot_2022_BN_rank} suggest that a
measure of regularity might be recovered by studying finite depths
corrections to the $R^{(0)}$ approximation. This formalizes the balance
between minimizing the dimension of the learned features and their
complexity.

Another problem is that minimizing the rank $R^{(0)}$ does not uniquely
describe the learned function, as there are many fitting functions
with the same rank. Corrections allow us to identify the learned function
amongst those.

Finally, the theoretical results and numerical experiments of \cite{jacot_2022_BN_rank}
strongly suggest a bottleneck structure in the learned features for
large depths, where the (possibly) high dimensional input data is
mapped after a few layers to a low-dimensional hidden representation,
and keeps the approximately same dimensionality until mapping back
to the high dimensional outputs in the last few layers. We prove the
existence of such a structure, but with potentially multiple bottlenecks.

\subsection{Contributions}

We analyze the Taylor approximation of the representation cost around
infinite depth $L=\infty$:
\[
R(f;\Omega,L)=LR^{(0)}(f;\Omega)+R^{(1)}(f;\Omega)+\frac{1}{L}R^{(2)}(f;\Omega)+O(L^{-2}).
\]
The first correction $R^{(1)}$ measures some notion of regularity
of the function $f$ that behaves sub-additively under composition
$R^{(1)}(f\circ g)\leq R^{(1)}(f)+R^{(1)}(g)$ and under addition
$R^{(1)}(f+g)\leq R^{(1)}(f)+R^{(1)}(g)$ (under some constraints),
and controls the Jacobian of $f$: $\forall x,2\log\left|Jf(x)\right|_{+}\leq R^{(1)}(f)$,
where $\left|\cdot\right|_{+}$ is the \emph{pseudo-determinant},
the product of the non-zero singular values.

This formalizes the balance between the bias towards minimizing the
inner dimension described by $R^{(0)}$ and a regularity measure $R^{(1)}$.
As the depth $L$ grows, the low-rank bias $R^{(0)}$ dominates, but
even in the infinite depth limit the regularity $R^{(1)}$ remains
relevant since there are typically multiple fitting functions with
matching $R^{(0)}$ which can be differentiated by their $R^{(1)}$
value.

For linear networks, the second correction $R^{(2)}$ guarantees infinite
depth convergence of the representations of the network. We recover
several properties of $R^{(2)}$ in the nonlinear case, but we also
give a counterexample that shows that norm minimization does not guarantee
convergence of the representation, forcing us to look at other sources
of bias.

To solve this issue, we show that a $\Theta(L^{-1})$ learning rate
forces the NTK to be $O(L)$ which in turn guarantees the convergence
as $L\to\infty$ of the representations at almost every layer of the
network.

Finally we prove the Bottleneck structure that was only observed empricially
in \cite{jacot_2022_BN_rank}: we show that the weight matrices $W_{\ell}$
are approximately rank $R^{(0)}(f)$, more precisely $W_{\ell}$ has
$R^{(0)}(f)$ singular values that are $O(L^{-\frac{1}{2}})$ close
to $1$ and all the other are $O(L^{-\frac{1}{2}})$. Together with
the $O(L)$ NTK assumption, this implies that the pre-activations
$\alpha_{\ell}(X)$ of a general dataset at almost all layer is approximately
$R^{(0)}(f)$-dimensional, more precisely that the $k+1$-th singular
value of $\alpha_{\ell}(X)$ is $O(L^{-\frac{1}{2}})$.

\subsection{Related Works}

The representation cost has mostly been studied in settings where
an explicit formula can be obtained, such as in linear networks \cite{dai_2021_repres_cost_DLN},
or shallow nonlinear networks \cite{bach2017_F1_norm}, or for deep
networks with very specific structure \cite{ongie2022_linear_layer_in_DNN,le_2022_IB_mix_linear_homogeneous}.
A low rank phenomenon in large depth $L_{2}$-regularized DNNs has
been observed in \cite{timor_2022_implicit_large_depth}.

Regarding deep fully-connected networks, two reformulations of the
representation cost optimization have been given in \cite{jacot_2022_L2_reformulation},
which also shows that the representation becomes independent of the
width as long as the width is large enough.

The Bottleneck structure that we describe in this paper is similar
to the Information Bottleneck theory \cite{tishby2015_info_bottleneck}.
It is not unlikely that the bias towards dimension reduction in the
middle layers of the network could explain the loss of information
that was observed in the first layers of the network in \cite{tishby2015_info_bottleneck}.

\section{Setup}

In this paper, we study fully connected DNNs with $L+1$ layers numbered
from $0$ (input layer) to $L$ (output layer). The layer $\ell\in\{0,\dots,L\}$
has $n_{\ell}$ neurons, with $n_{0}=d_{in}$ the input dimension
and $n_{L}=d_{out}$ the output dimension. The pre-activations $\tilde{\alpha}_{\ell}(x)\in\mathbb{R}^{n_{\ell}}$
and activations $\alpha_{\ell}(x)\in\mathbb{R}^{n_{\ell}}$ are defined
by
\begin{align*}
\alpha_{0}(x) & =x\\
\tilde{\alpha}_{\ell}(x) & =W_{\ell}\alpha_{\ell-1}(x)+b_{\ell}\\
\alpha_{\ell}(x) & =\sigma\left(\tilde{\alpha}_{\ell}(x)\right),
\end{align*}
for the $n_{\ell}\times n_{\ell-1}$ connection weight matrix $W_{\ell}$,
the $n_{\ell}$-dim bias vector $b_{\ell}$ and the nonlinearity $\sigma:\mathbb{R}\to\mathbb{R}$
applied entry-wise to the vector $\tilde{\alpha}_{\ell}(x)$. The
parameters of the network are the collection of all connection weights
matrices and bias vectors $\theta=\left(W_{1},b_{1},\dots,W_{L},b_{L}\right)$.
The network function $f_{\mathbf{\theta}}:\mathbb{R}^{d_{in}}\to\mathbb{R}^{d_{out}}$
is the function that maps an input $x$ to the pre-activations of
the last layer $\tilde{\alpha}_{L}(x)$.

We assume that the nonlinearity is of the form $\sigma_{a}(x)=\begin{cases}
x & \text{if \ensuremath{x\geq0}}\\
ax & \text{otherwise}
\end{cases}$ for some $\alpha\in(-1,1)$ (yielding the ReLU for $\alpha=0$),
as any homogeneous nonlinearity $\sigma$ (that is not proportional
to the identity function, the constant zero function or the absolute
function) matches $\sigma_{a}$ up to scaling and inverting the inputs.

The functions that can be represented by networks with homogeneous
nonlinearities and any finite depth/width are exactly the set of finite
piecewise linear functions (FPLF) \cite{arora_2018_relu_piecewise_lin,he_2018_relu_piecewise_lin}.
\begin{rem}
In most of our results, we assume that the width is sufficiently large
so that the representation cost matches the infinite-width representation
cost. For a dataset of size $N$, a width of $N(N+1)$ suffices, as
shown in \cite{jacot_2022_L2_reformulation} (though a much smaller
width is often sufficient).
\end{rem}

\subsection{Representation Cost}

The representation cost $R(f;\Omega,L)$ is the minimal squared parameter
norm required to represent the function $f$ over the input domain
$\Omega$:
\[
R(f;\Omega,L)=\min_{\theta:f_{\mathbf{\theta}|\Omega}=f_{|\Omega}}\left\Vert \theta\right\Vert ^{2}
\]
where the minimum is taken over all weights $\theta$ of a depth $L$
network (with some finite widths $n_{1},\dots,n_{L-1}$) such that
$f_{\mathbf{\theta}}(x)=f(x)$ for all $x\in\Omega$. If no such weights
exist, we define $R(f;\Omega,L)=\infty$. 

The representation cost describes the natural bias on the represented
function $f_{\mathbf{\theta}}$ induced by adding $L_{2}$ regularization
on the weights $\theta$:
\[
\min_{\mathbf{\theta}}C(f_{\mathbf{\theta}})+\lambda\left\Vert \theta\right\Vert ^{2}=\min_{f}C(f)+\lambda R(f;\Omega,L)
\]
for any cost $C$ (defined on functions $f:\Omega\mapsto\mathbb{R}^{d_{out}}$)
and where the minimization on the right is over all functions $f$
that can be represented by a depth $L$ network with nonlinearity
$\sigma$.

For any two functions $f,g$, we denote $f\to g$ the function $h$
such that $g=h\circ f$, assuming it exists, and we write $R(f\to g;\Omega,L)=R(h;f(\Omega),L)$.
\begin{rem}
The representation cost also describes the implicit bias of networks
trained with the cross-entropy loss \cite{soudry2018implicit,gunasekar_2018_implicit_bias,chizat_2020_implicit_bias}.
\end{rem}

\section{Representation Cost Decomposition}

Since there are no explicit formula for the representation cost of
deep nonlinear networks, we propose to approximate it by a Taylor
decomposition in $\nicefrac{1}{L}$ around $L=\infty$. This is inspired
by the behavior of the representation cost of deep linear networks
(which represent a matrix as a product $A_{\theta}=W_{L}\cdots W_{1}$),
for which an explicit formula exists \cite{dai_2021_repres_cost_DLN}:
\[
R(A;L)=\min_{\theta:A=A_{\theta}}\left\Vert \theta\right\Vert ^{2}=L\left\Vert A\right\Vert _{\nicefrac{2}{L}}^{\nicefrac{2}{L}}=L\sum_{i=1}^{\mathrm{Rank}A}s_{i}(A)^{\nicefrac{2}{L}},
\]
where $\left\Vert \cdot\right\Vert _{p}^{p}$ is the $L_{p}$-Schatten
norm, the $L_{p}$ norm on the singular values $s_{i}(A)$ of $A$.

Approximating $s^{\frac{2}{L}}=1+\frac{2}{L}\log s+\frac{2}{L^{2}}(\log s)^{2}+O(L^{-3})$,
we obtain
\[
R(A;L)=L\mathrm{Rank}A+2\log\left|A\right|_{+}+\frac{1}{2L}\left\Vert \log_{+}A^{T}A\right\Vert ^{2}+O(L^{-2}),
\]
where $\log_{+}$ is the pseudo-log, which replaces the non-zero eigenvalues
of $A$ by their log.

We know that gradient descent will converge to parameters $\theta$
representing a matrix $A_{\theta}$ that locally minimize the loss
$C(A)+\lambda R(A;L)$. The approximation $R(A;L)\approx L\mathrm{Rank}A$
fails to recover the local minima of this loss, because the rank has
zero derivatives almost everywhere. But this problem is alleviated
with the second order approximation $R(A;L)\approx L\mathrm{Rank}A+2\log\left|A\right|_{+}$.
The minima can then be interpreted as first minimizing the rank, and
then choosing amongst same rank solutions the matrix with the smallest
pseudo-determinant. Changing the depth allows one to tune the balance
between minimizing the rank and the regularity of the matrix $A$.

\subsection{First Correction: Regularity Control\label{subsec:First-Correction}}

As a reminder, the dominating term in the representation cost $R^{(0)}(f)$
is conjectured in \cite{jacot_2022_BN_rank} to converge to the so-called
Bottleneck rank $\mathrm{Rank}_{BN}(f;\Omega)$ which is the smallest
integer $k$ such that $f$ can be decomposed as $f=\Omega\stackrel{g}{\longrightarrow}\mathbb{R}^{k}\stackrel{h}{\longrightarrow}\mathbb{R}^{d_{out}}$
with inner dimension $k$, and where $g$ and $h$ are FPLF. A number
of results supporting this conjecture are proven in \cite{jacot_2022_BN_rank}:
a sandwich bound
\[
\mathrm{Rank}_{J}(f;\Omega)\leq R^{(0)}(f;\Omega)\leq\mathrm{Rank}_{BN}(f;\Omega)
\]
for the Jacobian rank $\mathrm{Rank}_{J}(f;\Omega)=\max_{x}\mathrm{Rank}Jf(x)_{|T_{x}\Omega}$,
and three natural properties of ranks that $R^{(0)}$ satisfies:
\begin{enumerate}
\item $R^{(0)}(f\circ g;\Omega)\leq\min\left\{ R^{(0)}(f),R^{(0)}(g)\right\} $,
\item $R^{(0)}(f+g;\Omega)\leq R^{(0)}(f)+R^{(0)}(g)$,
\item $R^{(0)}(x\mapsto Ax;\Omega)=\mathrm{Rank}A$ for any full dimensional
and bounded $\Omega$.
\end{enumerate}
These results imply that for any function $f=\phi\circ A\circ\psi$
that is linear up to bijections $\phi,\psi$, the conjecture is true
$R^{(0)}(f;\Omega)=\mathrm{Rank}_{BN}(f;\Omega)=\mathrm{Rank}A$.

The proof of the aforementioned sandwich bound in \cite{jacot_2022_BN_rank}
actually prove an upper bound of the form $L\mathrm{Rank}_{BN}(f;\Omega)+O(1)$
thus proving that the $R^{(1)}$ term is upper bounded. The following
theorem proves a lower bound on $R^{(1)}$ as well as some of its
properties:
\begin{thm}
\label{thm:properties_first_correction}For all inputs $x$ where
$\mathrm{Rank}Jf(x)=R^{(0)}(x)$, $R^{(1)}(f)\geq2\log\left|Jf(x)\right|_{+}$,
furthermore:
\begin{enumerate}
\item If $R^{(0)}(f\circ g)=R^{(0)}(f)=R^{(0)}(g)$, then $R^{(1)}(f\circ g)\leq R^{(1)}(f)+R^{(1)}(g)$.
\item If $R^{(0)}(f+g)=R^{(0)}(f)+R^{(0)}(g)$, then $R^{(1)}(f+g)\leq R^{(1)}(f)+R^{(1)}(g)$.
\item If $P_{\mathrm{Im}A^{T}}\Omega$ and $A\Omega$ are $k=\mathrm{Rank}A$
dimensional and completely positive (i.e. they can be embedded isometrically
into $\mathbb{R}_{+}^{m}$ for some $m$), then $R^{(1)}(x\mapsto Ax;\Omega)=2\log\left|A\right|_{+}$
.
\end{enumerate}
\end{thm}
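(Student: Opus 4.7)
The plan is to work layer by layer through the four claims, each time leveraging the definition of $R^{(1)}$ as the coefficient of $L^0$ in the asymptotic expansion $R(f;\Omega,L) = L R^{(0)}(f;\Omega) + R^{(1)}(f;\Omega) + O(L^{-1})$, and repeatedly reducing to the exactly solvable deep linear case of Dai--Karney--Srebro, where the formula $R(A;L) = L\sum_i s_i(A)^{2/L}$ already gives $R^{(1)}(x\mapsto Ax) = 2\log|A|_{+}$ on appropriate domains.

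For the Jacobian lower bound (the displayed inequality), I would first exploit that FPLFs are affine on a neighborhood of any point $x$ interior to a linear piece, so there is a small open ball $B\subset\Omega$ on which $f(y)=Jf(x)y + c$. Any depth-$L$ network representing $f$ on $\Omega$ in particular represents the affine map $y\mapsto Jf(x)y+c$ on $B$, so $R(f;\Omega,L)\geq R(y\mapsto Jf(x)y;B,L) - O(1)$ where the $O(1)$ absorbs the (affine) bias cost which is depth-independent. Applying the linear formula on the right and collecting the $L^0$ coefficient gives $R^{(1)}(f)\geq 2\log|Jf(x)|_{+}$ provided $\mathrm{Rank}\, Jf(x) = R^{(0)}(f)$, which ensures the leading $L\,\mathrm{Rank}\, Jf(x)$ term matches $L R^{(0)}(f)$ and does not dominate the comparison. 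The main subtlety here is making the restriction-to-$B$ step rigorous with respect to the precise definition of the representation cost of an affine map on a bounded domain; I would handle this by absorbing translations into biases and scaling $B$ to control constants.

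For the composition and addition parts, the strategy is constructive upper bounds followed by taking the $L\to\infty$ expansion. Given near-optimal depth-$L_f$ and depth-$L_g$ networks for $f$ and $g$, concatenation yields a depth-$(L_f+L_g)$ network for $f\circ g$, so $R(f\circ g;L)\leq R(f;L_f)+R(g;L_g)$; choosing $L_f=L_g=L/2$, expanding both sides to order $L^0$, and using the hypothesis $R^{(0)}(f\circ g)=R^{(0)}(f)=R^{(0)}(g)$ to cancel the leading $L$ terms gives exactly $R^{(1)}(f\circ g)\leq R^{(1)}(f)+R^{(1)}(g)$. For addition, I would place the two networks in parallel via block-diagonal weight matrices of matching depth, prepend an input-duplication layer, and append a linear summation layer; the depth-independent overhead of these two extra layers contributes only to the $O(1)$ term, while the hypothesis $R^{(0)}(f+g)=R^{(0)}(f)+R^{(0)}(g)$ is exactly what makes the leading $L\cdot R^{(0)}$ terms on both sides match so that the $R^{(1)}$ inequality survives subtraction.

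For the linear case, the lower bound $R^{(1)}(x\mapsto Ax;\Omega)\geq 2\log|A|_{+}$ follows from part 1 once we know $R^{(0)}=\mathrm{Rank}\, A$, which holds under the stated conditions on $\Omega$. For the matching upper bound, the plan is to realize the SVD balanced construction $A = U\Sigma V^T = W_L\cdots W_1$ with each $W_\ell$ having the same singular values $s_i(A)^{1/L}$, but to check that the ReLU between layers acts as the identity on the relevant linear subspace. This is exactly where the complete-positivity of $P_{\mathrm{Im}\, A^T}\Omega$ and $A\Omega$ enters: after changing bases via suitable isometric embeddings into $\mathbb{R}^m_+$, all intermediate pre-activations stay in the positive orthant, so ReLU is transparent and the resulting network realizes $A$ with the deep-linear cost $L\sum_i s_i(A)^{2/L}$, whose $L^0$ coefficient is $2\log|A|_{+}$. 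The main obstacle I expect is handling the two extremal layers that perform the isometric embeddings in and out of $\mathbb{R}^m_+$ without contaminating the $L^0$ coefficient; I would absorb their contribution into balanced singular values of the first and last layers, noting that an isometry has all singular values equal to $1$ and so contributes $0$ to $\log|\cdot|_{+}$.
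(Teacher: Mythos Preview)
Your treatment of the Jacobian lower bound, composition, and addition is essentially the paper's, with two minor quibbles. First, for the lower bound the paper does not restrict to a ball and invoke the linear-network formula of Dai--Kingma--Srebro (which a priori says nothing about nonlinear networks representing an affine map); it uses directly the inequality $R(f;\Omega,L)\ge L\|Jf(x)\|_{2/L}^{2/L}$, which follows from $Jf(x)=W_L D_{L-1}(x)\cdots D_1(x)W_1$ with $\|D_\ell(x)\|_{op}\le 1$ and AM--GM, and then $s^{2/L}-1\ge \tfrac{2}{L}\log s$. Second, for addition you do not need extra duplication/summation layers: stacking $\bigl[\begin{smallmatrix}W_1^f\\ W_1^g\end{smallmatrix}\bigr]$ at the first layer, block-diagonal weights in the middle, and $[\,W_L^f\ \ W_L^g\,]$ at the last layer gives a depth-$L$ network with norm exactly $\|\theta_f\|^2+\|\theta_g\|^2$, so $R(f+g;\Omega,L)\le R(f;\Omega,L)+R(g;\Omega,L)$ at equal depth; your extra layers would leave a spurious $O(1)$ constant in the $R^{(1)}$ inequality.

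The real gap is in the upper bound of part~(3). Your SVD-balanced construction, with each $W_\ell$ carrying singular values $s_i(A)^{1/L}$, produces intermediate representations that are (up to orthogonal changes of basis) $A^{\ell/L}x$. For ReLU to be transparent at every layer you would need $A^{p}\Omega$ to be completely positive for every $p\in[0,1]$; the theorem only assumes this at the endpoints $p=0$ and $p=1$. (The stronger all-$p$ hypothesis is exactly what Proposition~\ref{prop:second-correction-properties} on $R^{(2)}$ assumes, and there the SVD construction does work.) The paper gets around this by interpolating \emph{linearly in kernel space} instead of geometrically in the singular values: with $K_{in}(x,y)=x^{T}P_{\mathrm{Im}A^{T}}y$ and $K_{out}(x,y)=x^{T}A^{T}A\,y$, the set of completely positive bilinear kernels over $\Omega$ is convex, so $K_{p}=pK_{in}+(1-p)K_{out}$ is completely positive for all $p\in[0,1]$ once the endpoints are. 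One then picks, for each $\ell$, any $B_{L,\ell}$ with $B_{L,\ell}^{T}B_{L,\ell}=K_{\ell/L}$ and $B_{L,\ell}\Omega\subset\mathbb{R}_{+}^{m}$, sets $W_\ell=B_{L,\ell}B_{L,\ell-1}^{+}$, and a telescoping computation of $\log|B_{L,\ell}^{T}B_{L,\ell}|_{+}-\log|B_{L,\ell-1}^{T}B_{L,\ell-1}|_{+}$ shows the total parameter norm is $L\,\mathrm{Rank}A+2\log|A|_{+}+O(L^{-1})$. Without this convexity idea your construction does not go through under the stated hypothesis.
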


Notice how these properties clearly point to the first correction
$R^{(1)}(f)$ measuring a notion of regularity of $f$ instead of
a notion of rank. One can think of $L_{2}$-regularized deep nets
as learning functions $f$ that minimize
\[
\min_{f}C(f(X))+\lambda LR^{(0)}(f)+\lambda R^{(1)}(f).
\]
The depth determines the balance between the rank regularization and
regularity regularization. Without the $R^{(1)}$-term, the above
minimization would never be unique since there can be multiple functions
$f$ with the same training outputs $f(X)$ with the same rank $R^{(0)}(f)$.

Under the assumption that $R^{(0)}$ only takes integer value the
above optimization can be rewritten as 
\[
\min_{k=0,1,\dots}\lambda Lk+\min_{f:R^{(0)}(f)=k}C(f(X))+\lambda R^{(1)}(f).
\]
Every inner minimization for a rank $k$ that is attained inside the
set $\left\{ f:R^{(0)}(f)=k\right\} $ corresponds to a different
local minimum. Note how these inner minimization do not depend on
the depth, suggesting the existence of sequences of local minima for
different depths that all represent approximately the same function
$f$, as can be seen in Figure \ref{fig:norm_vs_depth}. We can classify
these minima according to whether they recover the true BN-rank $k^{*}$
of the task, underestimate or overestimate it. 

In linear networks \cite{wang_2023_bias_SGD_L2}, rank underestimating
minima cannot fit the data, but it is always possible to fit any data
with a BN-rank 1 function (by mapping injectively the datapoints to
a line and then mapping them nonlinearly to any other configuration).
We therefore need to also differentiate between rank-underestimating
minima that fit or do not fit the data. The non-fitting minima can
in theory be avoided by taking a small enough ridge (along the lines
of \cite{wang_2023_bias_SGD_L2}), but we do observe them empricially
for large depths in Figure \ref{fig:norm_vs_depth}. 

In contrast, we have never observed fitting rank-underestimating minima,
though their existence was proven for large enough depths in \cite{jacot_2022_BN_rank}.
A possible explanation for why GD avoids these minima is their $R^{(1)}$
value explodes with the number of datapoints $N$, since these network
needs to learn a space filling surface (a surface of dimension $k<k^{*}$
that visits random outputs $y_{i}$ that are sampled from a $k^{*}$-dimensional
distribution). More precisely Theorem 2 of \cite{jacot_2022_BN_rank}
implies that the $R^{(1)}$ value of fitting BN-rank 1 minima explodes
at a rate of $2(1-\frac{1}{k^{*}})\log N$ as the number of datapoints
$N$ grows, which could explain why we rarely observe such minima
in practice, but another explanation could be that these minima are
very narrow, as explained in Section \ref{subsec:Narrowness-of-rank-underestimating-minima}.

In our experiments we often encountered rank overestimating minima
and we are less sure about how to avoid them, though it seems that
increasing the depth helps (see Figure \ref{fig:norm_vs_depth}),
and that SGD might help too by analogy with the linear case \cite{wang_2023_bias_SGD_L2}.
Thankfully overestimating the rank is less problematic for generalization,
as supported by the fact that it is possible to approximate BN-rank
$k^{*}$ with a higher rank function with any accuracy, while doing
so with a low rank function requires a pathological function.

\begin{figure}
\subfloat[Parameter norm and depth]{\includegraphics[scale=0.45]{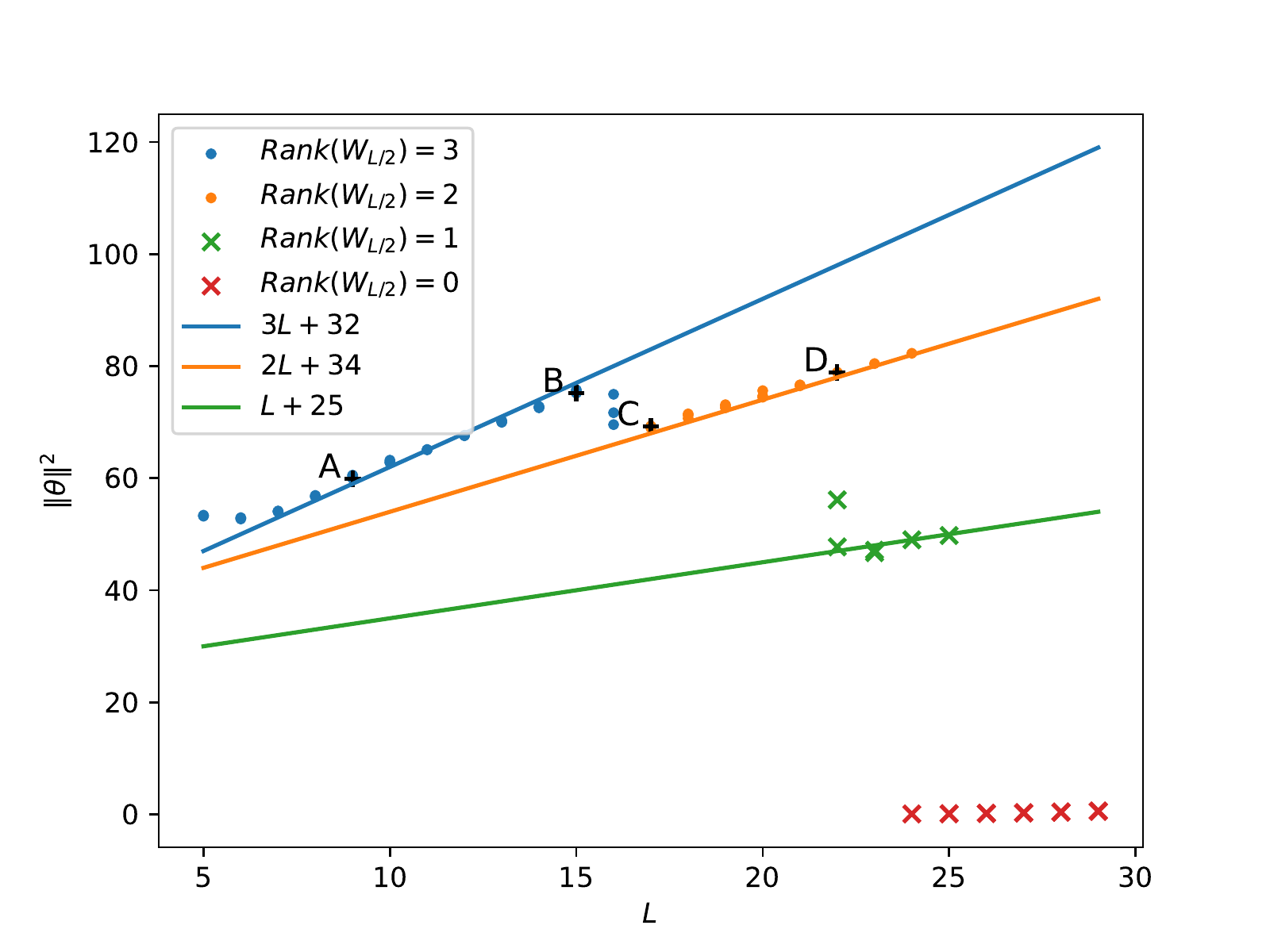}

}\hspace{-0.5cm}\subfloat[Bottleneck structure at different depths.]{\includegraphics[scale=0.45]{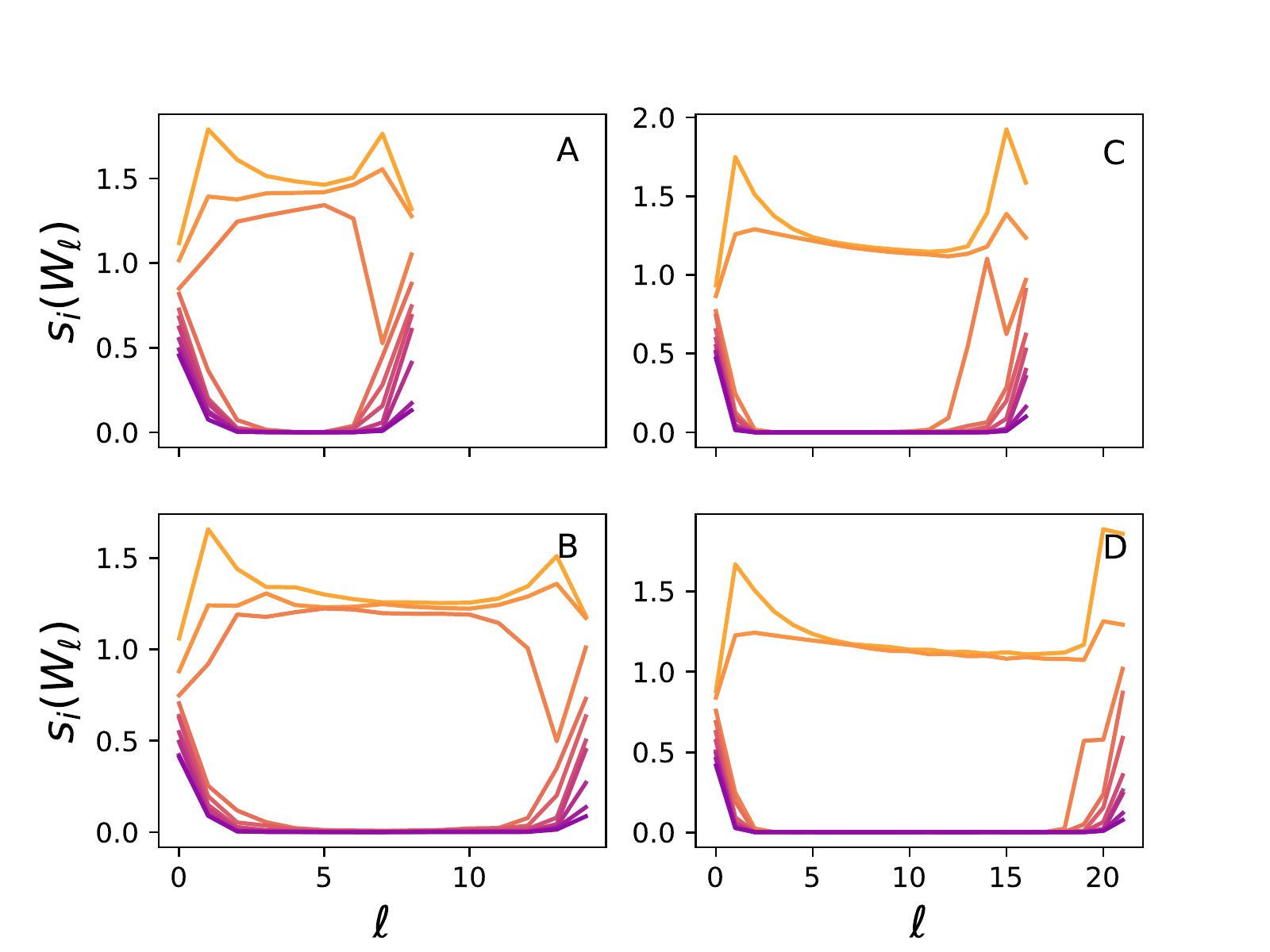}

}\hspace{-0.5cm}

\caption{\label{fig:norm_vs_depth}(a) Plot of the parameter norm at the end
of training ($\lambda=0.001$) over a range of depths, colored acoording
to the rank (\# of sing. vals above 0.1) of the weight matrices $W_{\nicefrac{L}{2}}$
in the middle of the network, and marked with a dot '.' or cross 'x'
depending on whether the final train cost is below or above 0.1. The
training data is synthetic and designed to have a optimal rank $k^{*}=2$.
We see different ranges of depth where the network converges to different
rank, with larger depths leading to smaller rank, until training fails
and recover the zero parameters for $L>25$. Within each range the
norm $\left\Vert \theta\right\Vert ^{2}$ is well approximated by
a affine function with slope equal to the rank. (b) Plot of the singular
values of $W_{\ell}$ throughout the networks for 4 trials, we see
that the bottleneck structure remains essentially the same throughout
each range of depth, with only the middle low-rank part growing with
the depth.}
\end{figure}

\subsection{Second Correction}

We now identify a few properties of the second correction $R^{(2)}$:
\begin{prop}
\label{prop:second-correction-properties}If there is a limiting representation
as $L\to0$ in the optimal representation of $f$, then $R^{(2)}(f)\geq0$.
Furthermore:
\begin{enumerate}
\item If $R^{(0)}(f\circ g)=R^{(0)}(f)=R^{(0)}(g)$ and $R^{(1)}(f\circ g)=R^{(1)}(f)+R^{(1)}(g)$,
then $\sqrt{R^{(2)}(f\circ g)}\leq\sqrt{R^{(2)}(f)}+\sqrt{R^{(2)}(g)}$.
\item If $R^{(0)}(f+g)=R^{(0)}(f)+R^{(0)}(g)$ and $R^{(1)}(f+g)=R^{(1)}(f)+R^{(1)}(g)$,
then $R^{(2)}(f+g)\leq R^{(2)}(f)+R^{(2)}(g)$.
\item If $A^{p}\Omega$ is $k=\mathrm{Rank}A$-dimensional and completely
positive for all $p\in[0,1]$, where $A^{p}$ has its non-zero singular
taken to the $p$-th power, then $R^{(2)}(x\mapsto Ax;\Omega)=\frac{1}{2}\left\Vert \log_{+}A^{T}A\right\Vert ^{2}$.
\end{enumerate}
\end{prop}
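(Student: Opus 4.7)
The plan is to treat this proposition in parallel with how Theorem \ref{thm:properties_first_correction} is proved: each sub-claim is established by exhibiting a concrete depth-$L$ representation of the target function whose parameter norm, when Taylor-expanded in $1/L$, matches the $R^{(0)}$ and $R^{(1)}$ terms on both sides (using the assumed equalities) so that the $1/L$ coefficients can be compared directly. For the base non-negativity claim, I would argue that if the sequence of optimal depth-$L$ representations $\theta_L$ converges as $L\to\infty$, then the singular values of each weight matrix must satisfy $s_{\ell,i}(L)=1+O(L^{-1})$ at almost every layer; writing $s^2=1+2(s-1)+(s-1)^2$ and summing layer-wise, the $L^{-2}$ coefficient of $\|\theta_L\|^2$ is a sum of squares, hence non-negative, and this inequality is inherited by $R^{(2)}(f)$ after passing to the limit.

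For claim (1), I would concatenate an optimal depth-$L_g$ representation of $g$ with an optimal depth-$L_f$ representation of $f$ (over $g(\Omega)$) to obtain a depth-$L=L_f+L_g$ representation of $f\circ g$, giving
\[
R(f\circ g;L) \le L_g R^{(0)}(g)+R^{(1)}(g)+\tfrac{R^{(2)}(g)}{L_g}+L_f R^{(0)}(f)+R^{(1)}(f)+\tfrac{R^{(2)}(f)}{L_f}+O(L^{-2}).
\]
Setting $L_f=\alpha L$ and $L_g=(1-\alpha)L$ and using the hypotheses $R^{(0)}(f\circ g)=R^{(0)}(f)=R^{(0)}(g)$ and $R^{(1)}(f\circ g)=R^{(1)}(f)+R^{(1)}(g)$, the $L$- and constant-order terms of the two Taylor expansions of $R(f\circ g;L)$ cancel, leaving
\[
R^{(2)}(f\circ g)\;\le\;\tfrac{R^{(2)}(f)}{\alpha}+\tfrac{R^{(2)}(g)}{1-\alpha}
\]
for every $\alpha\in(0,1)$. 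Minimizing the right-hand side at $\alpha=\sqrt{R^{(2)}(f)}/(\sqrt{R^{(2)}(f)}+\sqrt{R^{(2)}(g)})$ yields $(\sqrt{R^{(2)}(f)}+\sqrt{R^{(2)}(g)})^2$, which after taking square roots gives the claimed bound.

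For claim (2), I would run the optimal depth-$L$ representations of $f$ and $g$ as two parallel sub-networks with block-diagonal weight matrices, summing their outputs in the final layer by concatenating the two terminal weight matrices; this construction has parameter norm exactly $R(f;L)+R(g;L)$, and a direct coefficient-by-coefficient comparison of the Taylor expansions under the hypotheses $R^{(0)}(f+g)=R^{(0)}(f)+R^{(0)}(g)$ and $R^{(1)}(f+g)=R^{(1)}(f)+R^{(1)}(g)$ yields $R^{(2)}(f+g)\le R^{(2)}(f)+R^{(2)}(g)$. For claim (3), I would start from the explicit linear-network formula $R(A;L)=L\sum_i s_i(A)^{2/L}$, expand each summand as $s^{2/L}=1+\tfrac{2}{L}\log s+\tfrac{2}{L^2}(\log s)^2+O(L^{-3})$, and read off the $1/L$ coefficient as $2\sum_i(\log s_i)^2=\tfrac12\|\log_+ A^TA\|^2$; the completely positive hypothesis on $A^p\Omega$ for all $p\in[0,1]$ is then used (in the same spirit as the proof of property 3 of Theorem \ref{thm:properties_first_correction}) to certify that an optimal linear representation can be realized by a nonlinear ReLU network on $\Omega$ without additional cost, the hypothesis being tailored so that ReLU acts as the identity on every intermediate pre-activation along the geodesic $A^p$.

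The main obstacle I anticipate is in claim (3): the completely-positive condition needs to be verified not just for the identity-like layers but for every intermediate power $A^p$ simultaneously, because the optimal linear path interpolates through these matrices; ensuring that the geodesic construction stays in the positive orthant throughout, and that the resulting ReLU representation actually achieves (rather than merely upper-bounds) the linear value $\tfrac12\|\log_+ A^TA\|^2$, is the delicate point, and the matching lower bound will have to come from applying the sandwich-style arguments of \cite{jacot_2022_BN_rank} at second order. For the composition and addition claims, the nontrivial content is really the clean allocation of depth, and I expect both to go through by the scheme above once the asymptotic expansion is legitimate to compare term-by-term under the stated hypotheses.
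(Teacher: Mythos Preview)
Your arguments for claims (1), (2), and (3) are essentially the same as the paper's: concatenation plus optimization over the depth split for composition, the additive inequality $R(f+g;L)\le R(f;L)+R(g;L)$ for addition, and the explicit $W_\ell=U_\ell S^{1/L}U_{\ell-1}^T$ construction along the $A^p$ geodesic for the linear case. The paper's proof is terser but follows the same route, including the optimization over $p$ that produces the square-root bound.

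Where you diverge is the non-negativity claim. The paper does \emph{not} argue via singular values of the optimal weights; instead it bootstraps from the composition inequality itself. If $K_p$ is the limiting representation at ratio $p\in(0,1)$, optimality forces
\[
R^{(2)}(f)=\frac{1}{p}R^{(2)}(K_p)+\frac{1}{1-p}R^{(2)}(K_p\to f),
\]
with $p$ minimizing the right-hand side. But $\frac{a}{p}+\frac{b}{1-p}$ has its minimum at an interior $p$ only when $a,b\ge 0$; otherwise the infimum is $-\infty$ at a boundary. The hypothesis that a limiting representation exists at an interior ratio then forces both pieces non-negative, hence $R^{(2)}(f)\ge 0$. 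Your proposed route---writing $s^2=1+2(s-1)+(s-1)^2$ and reading off a sum of squares---does not directly connect to $R^{(2)}$: the Taylor coefficients $R^{(0)},R^{(1)},R^{(2)}$ describe how the \emph{optimal} norm scales with $L$, not the layerwise decomposition of a fixed $\theta_L$, and the identification of the $1/L$ coefficient with $\sum_\ell(s_{\ell,i}-1)^2$ is not justified (the non-dominant singular values and the few ``bad'' layers could in principle contribute at the same order). The paper's self-referential argument sidesteps this entirely.
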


While the properties are very similar to those of $R^{(1)}$, the
necessary conditions necessary to apply them are more restrictive.
There might be case where the first two terms $R^{(0)}$ and $R^{(1)}$
do not uniquely determine a minimum, in which case the second correction
$R^{(2)}$ needs to be considered.

In linear networks the second correction $R^{(2)}(A)=\frac{1}{2}\left\Vert \log_{+}A^{T}A\right\Vert ^{2}$
plays an important role, as it bounds the operator norm of $A$ (which
is not bounded by $R^{(0)}(A)=\mathrm{Rank}A$ nor $R^{(1)}(A)=2\log\left|A\right|_{+}$),
thus guaranteeing the convergence of the hidden representations in
the middle of network. We hoped at first that $R^{(2)}$ would have
similar properties in the nonlinear case, but we were not able to
prove anything of the sort. Actually in contrast to the linear setting,
the representations of the network can diverge as $L\to\infty$, which
explains why the $R^{(2)}$ does not give any similar control, which
would guarantee convergence.

\subsection{Representation geodesics}

One can think of the sequence of hidden representations $\tilde{\alpha}_{1},\dots,\tilde{\alpha}_{L}$
as a path from the input representation to the output representation
that minimizes the weight norm $\left\Vert W_{\ell}\right\Vert ^{2}$
required to map from one representation to the next. As the depth
$L$ grows, we expect this sequence to converge to a form of geodesic
in representation space. Such an analysis has been done in \cite{owhadi2020_L2_neuralODE}
for ResNet where these limiting geodesics are continuous. 

Two issues appear in the fully-connected case. First a representation
$\tilde{\alpha}_{\ell}$ remains optimal after any swapping of its
neurons or other symmetries, but this can easily be solved by considering
representations $\tilde{\alpha}_{\ell}$ up to orthogonal transformation,
i.e. to focus on the kernels $K_{\ell}(x,y)=\tilde{\alpha}_{\ell}(x)^{T}\tilde{\alpha}_{\ell}(y)$.
Second the limiting geodesics of fully-connected networks are not
continuous, and as such they cannot be described by a local metric.

We therefore turn to the representation cost of DNNs to describe the
hidden representations of the network, since the $\ell$-th pre-activation
function $\tilde{\alpha}^{(\ell)}:\Omega\to\mathbb{R}^{n_{\ell}}$
in a network which minimizes the parameter norm must satisfy
\[
R(f;\Omega,L)=R(\tilde{\alpha}_{\ell};\Omega,\ell)+R(\sigma(\tilde{\alpha}_{\ell})\to f;\Omega,L-\ell).
\]
Thus the limiting representations $\tilde{\alpha}_{p}=\lim_{L\to\infty}\tilde{\alpha}_{\ell_{L}}$
(for a sequence of layers $\ell_{L}$ such that $\lim_{L\to\infty}\nicefrac{\ell_{L}}{L}=p\in(0,1)$)
must satisfy 
\begin{align*}
R^{(0)}(f;\Omega) & =R^{(0)}(\tilde{\alpha}_{p};\Omega)=R^{(0)}(\sigma(\tilde{\alpha}_{p})\to f;\Omega)\\
R^{(1)}(f;\Omega) & =R^{(1)}(\tilde{\alpha}_{p};\Omega)+R^{(1)}(\sigma(\tilde{\alpha}_{p})\to f;\Omega)
\end{align*}

Let us now assume that the limiting geodesic is continuous at $p$
(up to orthogonal transformation, which do not affect the representation
cost), meaning that any other sequence of layers $\ell'_{L}$ converging
to the same ratio $p\in(0,1)$ would converge to the same representation.
The taking the limits with two sequences $\lim\frac{\ell_{L}}{L}=p=\lim\frac{\ell'_{L}}{L}$
such that $\lim\ell'_{L}-\ell_{L}=+\infty$ and and taking the limit
of the equality 
\[
R(f;\Omega,L)=R(\tilde{\alpha}_{\ell_{L}};\Omega,\ell_{L})+R(\sigma(\tilde{\alpha}_{\ell_{L}})\to\tilde{\alpha}_{\ell'_{L}};\Omega,\ell_{L}'-\ell_{L})+R(\sigma(\tilde{\alpha}_{\ell'_{L}})\to f;\Omega,L-\ell_{L}'),
\]
we obtain that $R^{(0)}(\sigma\left(\tilde{\alpha}_{p}\right)\to\tilde{\alpha}_{p};\Omega)=R^{(0)}(f)$
and $R^{(1)}(\sigma\left(\tilde{\alpha}_{p}\right)\to\tilde{\alpha}_{p};\Omega)=0$.
This implies that $\sigma(\tilde{\alpha}(x))=\tilde{\alpha}(x)$ at
any point $x$ where $\mathrm{Rank}Jf(x)=R^{(0)}(f;\Omega)$, thus
$R^{(0)}(id;\tilde{\alpha}_{p}(\Omega))=R^{(0)}(f;\Omega)$ and $R^{(1)}(id;\tilde{\alpha}_{p}(\Omega))=0$
if $\mathrm{Rank}Jf(x)=R^{(0)}(f;\Omega)$ for all $x\in\Omega$.

\subsubsection{Identity}

When evaluated on the identity, the first two terms $R^{(0)}(id;\Omega)$
and $R^{(1)}(id;\Omega)$ describe properties of the domain $\Omega$.

For any notion of rank, $\mathrm{Rank}(id;\Omega)$ defines a notion
of dimensionality of $\Omega$. The Jacobian rank $\mathrm{Rank}_{J}(id;\Omega)=\max_{x\in\Omega}\dim T_{x}\Omega$
is the maximum tangent space dimension, while the Bottleneck rank
$\mathrm{Rank}_{BN}(id;\Omega)$ is the smallest dimension $\Omega$
can be embedded into. For example, the circle $\Omega=\mathbb{S}^{2-1}$
has $\mathrm{Rank}_{J}(id;\Omega)=1$ and $\mathrm{Rank}_{BN}(id;\Omega)=2$.

On a domain $\Omega$ where the two notions of dimensionality match
$\mathrm{Rank}_{J}(id;\Omega)=\mathrm{Rank}_{BN}(id;\Omega)=k$, the
first correction $R^{(1)}(id;\Omega)$ is non-negative since for any
$x$ with $\dim T_{x}\Omega=k$, we have $R^{(1)}(id;\Omega)\geq\log\left|P_{T_{x}}\right|_{+}=0.$
The $R^{(1)}(id;\Omega)$ value measures how non-planar the domain
$\Omega$ is, being $0$ only if $\Omega$ is $k$-planar, i.e. its
linear span is $k$-dimensional:
\begin{prop}
\label{prop:R1_identity}For a domain with $\mathrm{Rank}_{J}(id;\Omega)=\mathrm{Rank}_{BN}(id;\Omega)=k$,
then $R^{(1)}(id;\Omega)=0$ if and only if $\Omega$ is $k$-planar
and completely positive.
\end{prop}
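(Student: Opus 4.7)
The plan is to handle the two directions separately, using Property 3 of Theorem~\ref{thm:properties_first_correction} for the ``if'' side and an asymptotic analysis of near-optimal depth-$L$ networks for the ``only if'' side.

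For the ``if'' direction, we would apply Property 3 with $A:=P$, the orthogonal projection of the ambient space onto the $k$-dimensional linear span of $\Omega$. Since $\Omega$ is contained in this span, $Px=x$ for every $x\in\Omega$, so the identity restricted to $\Omega$ coincides with $x\mapsto Px$, and $R^{(1)}(id;\Omega)=R^{(1)}(x\mapsto Px;\Omega)$. The hypotheses of Property 3 then reduce to the assumptions of the proposition: both $P_{\mathrm{Im}P^{T}}\Omega$ and $P\Omega$ equal $\Omega$, which is $k$-dimensional and completely positive. All $k$ nonzero singular values of $P$ equal $1$, so $R^{(1)}(id;\Omega)=2\log|P|_{+}=0$.

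For the ``only if'' direction, assume $R^{(1)}(id;\Omega)=0$. By the definition of $R^{(1)}$ as the constant term in the expansion $R(id;\Omega,L)=Lk+R^{(1)}(id;\Omega)+O(L^{-1})$, there exists a sequence of depth-$L$ networks $\theta_{L}=(W_{1},b_{1},\dots,W_{L},b_{L})$ with total squared norm $Lk+O(L^{-1})$. Decomposing the cost as $\sum_\ell\|W_\ell\|^2+\sum_\ell\|b_\ell\|^2$ and combining with the singular-value AM--GM bound $\sum_\ell\|W_\ell\|^2\geq Lk$ (the right-hand side being exactly the deep-linear representation cost of a rank-$k$ identity factor), we deduce that $\sum_\ell\|b_\ell\|^2=O(L^{-1})$ and each $W_\ell$ is asymptotically a rank-$k$ partial isometry with its $k$ nonzero singular values converging to $1$. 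The vanishing of $b_L$ together with $\mathrm{Rank}\,W_L=k$ then forces $\Omega=\tilde{\alpha}_L(\Omega)\subset\mathrm{Im}(W_L)+b_L$ to lie, in the limit, in the $k$-dimensional linear subspace $\mathrm{Im}(W_L)$, proving $k$-planarity. Furthermore, any non-negligible ReLU distortion of some intermediate $\tilde{\alpha}_\ell(\Omega)$ would force a subsequent layer to have norm strictly above $k$ in order to restore the identity, contradicting the singular-value saturation; hence $\tilde{\alpha}_\ell(\Omega)\subset\mathbb{R}_+^{n_\ell}$, and the composition of the first few layers (up to vanishing biases) is an isometric embedding of $\Omega$ into the positive orthant, which witnesses complete positivity.

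The hard part will be the rigor of the limiting arguments: translating the global optimality gap $O(L^{-1})$ into per-layer control of singular values and biases (which requires distributing the slack uniformly across the $L$ layers and using the strict convexity of the deep-linear lower bound to rule out concentration of slack on a single layer), controlling the propagation of small ReLU distortions through depth without amplification, and extracting an exact isometric embedding of $\Omega$ into some $\mathbb{R}_+^m$ from the sequence of approximately isometric ones by a compactness argument on $\Omega$ together with the sequence of partial isometries. Once these approximation-to-limit passages are made precise, both halves of the proposition follow.
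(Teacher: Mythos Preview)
Your ``if'' direction is fine and essentially coincides with the paper's: applying Property~3 of Theorem~\ref{thm:properties_first_correction} to the projection onto $\mathrm{span}\,\Omega$ is exactly the abstract version of the explicit construction the paper writes down (take $W_1=O$, $W_\ell=P_{\mathrm{Im}O}$, $W_L=O^T$ for an isometric embedding $O$ of $\Omega$ into a positive orthant).

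For the ``only if'' direction your approach is genuinely different from the paper's, and it has a real gap. The paper never looks at the activations or at the image of $W_L$; instead it isolates $W_L$ via the \emph{Jacobian} constraint. At two points $x,y$ with $k$-dimensional tangent spaces one has $W_L A=P_{T_x\Omega}$ and $W_L B=P_{T_y\Omega}$ for $A=J\alpha_{L-1}(x)|_{T_x\Omega}$, $B=J\alpha_{L-1}(y)|_{T_y\Omega}$, so $A=W_L^{+}P_{T_x\Omega}$ and $B=W_L^{+}P_{T_y\Omega}$. Lower bounding the first $L-1$ layers by the deep-linear cost of $A$ (resp.\ $B$) and letting $L\to\infty$ gives
\[
R^{(1)}(id;\Omega)\ \geq\ \min_{\|W_L\|_F^2=k}\ \max\bigl\{\,2\log|W_L^{+}P_{T_x\Omega}|_{+},\ 2\log|W_L^{+}P_{T_y\Omega}|_{+}\,\bigr\},
\]
and the right-hand side is strictly positive whenever $T_x\Omega\neq T_y\Omega$. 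This argument is purely about derivatives and never needs the hidden representations to stay bounded.

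Your route, by contrast, tries to read $k$-planarity off the inclusion $\Omega=W_L\alpha_{L-1}(\Omega)+b_L$. To pass from ``$W_L$ is $O(\sqrt{c_1})$-close to rank $k$'' to ``$\Omega$ is close to a $k$-plane'' you must control $s_{k+1}(W_L)\,\|\alpha_{L-1}(x)\|$. From $R^{(1)}=0$ you only know $c_1:=\|\theta_L\|^2-Lk\to0$, not $c_1=O(L^{-1})$; the latter would need $R^{(2)}<\infty$, which is precisely unavailable before you know $\Omega$ is planar. With only $c_1=o(1)$ the per-layer operator norms satisfy $\|W_\ell\|_{op}\le 1+\sqrt{\epsilon_\ell}$ with $\sum_\ell\epsilon_\ell\le c_1$, so the best you get is $\|\alpha_{L-1}(x)\|\lesssim e^{\sqrt{Lc_1}}$, and $s_{k+1}(W_L)\,\|\alpha_{L-1}(x)\|\lesssim\sqrt{c_1}\,e^{\sqrt{Lc_1}}$ need not vanish. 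This is not a technicality: the paper's own counterexample (the Regularity Counterexample in the appendix) shows that optimal representations can have activations growing like $e^{L^{1-\gamma}}$ even when $R^{(1)}=0$, so the boundedness you need cannot be taken for granted. The Jacobian/last-layer argument sidesteps this entirely and is what you should use for $k$-planarity.

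As for complete positivity in the ``only if'' direction, note that the paper's own proof also does not address it; your sketch (``ReLU distortion would cost extra'') is at the same level of rigor, so neither argument is complete on that point.
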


This proposition shows that the $R^{(1)}$ term does not only bound
the Jacobian of $f$ as shown in Theorem \ref{thm:properties_first_correction},
but also captures properties of the curvature of the domain/function.

Thus at ratios $p$ where the representation geodesics converge continuously,
the representations $\tilde{\alpha}_{p}(\Omega)$ are $k=R^{(0)}(f;\Omega)$-planar,
proving the Bottleneck structure that was only observed empirically
in \cite{jacot_2022_BN_rank}. But the assumption of convergence over
which we build this argument does not hold in general, actually we
give in the appendix an example of a simple function $f$ whose optimal
representations diverges in the infinite depth limit. This is in stark
contrast to the linear case, where the second correction $R^{(2)}(A)=\frac{1}{2}\left\Vert \log_{+}A^{T}A\right\Vert ^{2}$
guarantees convergence, since it bounds the operator norm of $A$.
To prove and describe the bottleneck structure in nonlinear DNNs,
we therefore need to turn to another strategy.

\section{Bottleneck Structure in Large Depth Networks\label{sec:Bottleneck-Structure}}

Up to now we have focused on one aspect of the Bottleneck structure
observed in \cite{jacot_2022_BN_rank}: that the representations $\alpha_{\ell}(X)$
inside the Bottleneck are approximately $k$-planar. But another characteristic
of this phenomenon is that the weight matrices $W_{\ell}$ in the
bottleneck have $k$ dominating singular values, all close to $1$.
This property does not require the convergence of the geodesics and
can be proven with finite depth rates:
\begin{thm}
\label{thm:BN-structure-weights}Given parameters $\theta$ of a depth
$L$ network, with $\left\Vert \theta\right\Vert ^{2}\leq kL+c_{1}$
and a point $x$ such that $\mathrm{Rank}Jf_{\theta}(x)=k$, then
there are $w_{\ell}\times k$ (semi-)orthonormal $U_{\ell},V_{\ell}$
such that $\sum_{\ell=1}^{L}\left\Vert W_{\ell}-U_{\ell}V_{\ell-1}^{T}\right\Vert _{F}^{2}\leq c_{1}-2\log\left|Jf_{\theta}(x)\right|_{+}$
thus for any $p\in(0,1)$ there are at least $(1-p)L$ layers $\ell$
with
\[
\left\Vert W_{\ell}-U_{\ell}V_{\ell-1}^{T}\right\Vert _{F}^{2}\leq\frac{c_{1}-2\log\left|Jf_{\theta}(x)\right|_{+}}{pL}.
\]
\end{thm}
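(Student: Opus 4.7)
The plan is to turn the bound $\|\theta\|^2\leq kL+c_1$ into a sum of per-layer deviations from ``rank-$k$ partial isometry'' by feeding it through two classical facts: Horn's multiplicative inequality for singular values and the elementary inequality $s^2-1-2\log s\geq(s-1)^2$. The first ingredient will give a lower bound on the total log-top-$k$-singular-mass $\sum_\ell\sum_{i=1}^{k}\log s_i(W_\ell)$ in terms of $\log|Jf_\theta(x)|_+$, and the second will force this log-mass to cost at least $k+\|W_\ell-U_\ell V_{\ell-1}^T\|_F^2$ in $\|W_\ell\|_F^2$, which is exactly what the theorem asks.

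First I would use homogeneity of $\sigma_a$ to write the Jacobian along $x$ as a product $Jf_\theta(x)=W_L D_{L-1}W_{L-1}\cdots D_1 W_1$, where each $D_\ell=\mathrm{diag}(\sigma'_a(\tilde\alpha_\ell(x)))$ has entries in $\{a,1\}$ and hence operator norm at most $1$. Because $\mathrm{Rank}\,Jf_\theta(x)=k$, the pseudo-determinant is the product of the top $k$ singular values, and Horn's multiplicative inequality together with $s_i(D_\ell W_\ell)\leq s_i(W_\ell)$ yields
\[
\log|Jf_\theta(x)|_+=\sum_{i=1}^{k}\log s_i\bigl(Jf_\theta(x)\bigr)\leq\sum_{\ell=1}^{L}\sum_{i=1}^{k}\log s_i(W_\ell).
\]

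Next I would estimate $\|W_\ell\|_F^2=\sum_i s_i(W_\ell)^2$ layerwise. The scalar inequality $s^2-1-2\log s\geq(s-1)^2$ (equivalent to $\log s\leq s-1$) applied to the top $k$ singular values, together with the trivial bound $s_i^2\geq s_i^2$ for $i>k$, gives
\[
\|W_\ell\|_F^2\geq k+2\sum_{i=1}^{k}\log s_i(W_\ell)+\sum_{i=1}^{k}\bigl(s_i(W_\ell)-1\bigr)^2+\sum_{i>k}s_i(W_\ell)^2.
\]
The last two sums are precisely the minimum of $\|W_\ell-U_\ell V_{\ell-1}^T\|_F^2$ over semi-orthonormal $U_\ell\in\mathbb{R}^{n_\ell\times k}$ and $V_{\ell-1}\in\mathbb{R}^{n_{\ell-1}\times k}$, achieved by taking them to be the top-$k$ left and right singular vectors of $W_\ell$ (this follows from von Neumann's trace inequality applied to $\|W_\ell\|_F^2-2\mathrm{tr}(U_\ell^TW_\ell V_{\ell-1})+k$). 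Summing over $\ell$, inserting the previous Horn bound, and using $\|\theta\|^2\geq\sum_\ell\|W_\ell\|_F^2$ with the hypothesis $\|\theta\|^2\leq kL+c_1$, gives the advertised inequality
\[
\sum_{\ell=1}^{L}\|W_\ell-U_\ell V_{\ell-1}^T\|_F^2\leq c_1-2\log|Jf_\theta(x)|_+.
\]
The per-layer bound for $(1-p)L$ layers then follows immediately from Markov's inequality applied to this sum.

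The only real obstacle is making sure that the chain of inequalities is simultaneously tight enough to close the circle: one uses Horn for the product, AM-GM-like convexity for each factor, and the variational characterization of the Frobenius distance to the manifold of ``unit-singular-value'' rank-$k$ matrices, and it is a small miracle that the leftover slack from $s^2\geq 1+2\log s$ matches exactly the quantity $(s-1)^2$ arising from the nearest-orthonormal approximation. Everything else (absorbing the $D_\ell$'s, the fact that pseudo-determinant equals the top-$k$ product, and the per-layer SVD choice of $U_\ell,V_\ell$) is mechanical once that alignment is identified.
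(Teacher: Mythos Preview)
Your proposal is correct and follows essentially the same argument as the paper: factor the Jacobian as a product $W_L D_{L-1}\cdots D_1 W_1$, use the multiplicative (Horn) inequality to bound $\log|Jf_\theta(x)|_+$ by $\sum_\ell\sum_{i\le k}\log s_i(W_\ell)$, apply the scalar inequality $(s-1)^2\le s^2-1-2\log s$ to the top $k$ singular values, and identify the resulting per-layer slack with $\|W_\ell-U_\ell V_{\ell-1}^T\|_F^2$ for $U_\ell,V_{\ell-1}$ the top-$k$ singular vectors. The only cosmetic difference is that the paper disposes of the $D_\ell$'s via $|D_\ell|_k=1$ (for ReLU) rather than your $\|D_\ell\|_{op}\le 1$, and it also records that the bias norms $\|b_\ell\|^2$ can be absorbed into the same bound.
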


Note how we not only obtain finite depth rates, but our result has
the advantage of being applicable to any parameters with a sufficiently
small parameter norm (close to the minimal norm solution). The bound
is tighter at optimal parameters in which case $c_{1}=R^{(1)}(f_{\theta})$,
but the theorem shows that the Bottleneck structure generalizes to
points that are only almost optimal.

To prove that the pre-activations $\tilde{\alpha}_{\ell}(X)$ are
approximately $k$-dimensional for some dataset $X$ (that may or
may not be the training set) we simply need to show that the activations
$\alpha_{\ell-1}(X)$ do not diverge, since $\tilde{\alpha}_{\ell}(X)=W_{\ell}\alpha_{\ell-1}(X)+b_{\ell}$
(and one can show that the bias will be small at almost every layer
too). By our counterexample we know that we cannot rule out such explosion
in general, however if we assume that the NTK \cite{jacot2018neural}
$\Theta^{(L)}(x,x)$ is of order $O(L)$, then we can guarantee to
convergence of the activations $\alpha_{\ell-1}(X)$ at almost every
layer:
\begin{thm}
\label{thm:NTK-implies-bounded-representations}Given balanced parameters
$\theta$ of a depth $L$ network, with $\left\Vert \theta\right\Vert ^{2}\leq kL+c_{1}$
and a point $x$ such that $\mathrm{Rank}Jf_{\theta}(x)=k$ then if
$\mathrm{Tr}\left[\Theta^{(L)}(x,x)\right]\leq cL$, then $\sum_{\ell=1}^{L}\left\Vert \alpha_{\ell-1}(x)\right\Vert _{2}^{2}\leq\frac{c\max\{1,e^{\frac{c_{1}}{k}}\}}{k\left|Jf_{\theta}(x)\right|_{+}^{\nicefrac{2}{k}}}L$
and thus for all $p\in(0,1)$ there are at least $(1-p)L$ layers
such that 
\[
\left\Vert \alpha_{\ell-1}(x)\right\Vert _{2}^{2}\leq\frac{1}{p}\frac{c\max\{1,e^{\frac{c_{1}}{k}}\}}{k\left|Jf_{\theta}(x)\right|_{+}^{\nicefrac{2}{k}}}.
\]
\end{thm}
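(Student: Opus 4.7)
The plan is to reduce the claim to a uniform lower bound on the Frobenius norm of the backward Jacobian $P_\ell := \partial f_\theta/\partial \tilde{\alpha}_\ell$ at each layer $\ell$. The standard NTK-trace identity for fully-connected nets gives
\[
\mathrm{Tr}\bigl[\Theta^{(L)}(x,x)\bigr] \;=\; \sum_{\ell=1}^L \|P_\ell\|_F^2\bigl(\|\alpha_{\ell-1}(x)\|^2+1\bigr),
\]
so if I can establish $\|P_\ell\|_F^2 \geq k\,|Jf_\theta(x)|_+^{2/k}/\max\{1,e^{c_1/k}\}$ uniformly in $\ell$, then the hypothesis $\mathrm{Tr}[\Theta^{(L)}(x,x)] \leq cL$ immediately yields the summed bound, and the per-layer conclusion follows from Markov's inequality (at most $pL$ summands in a sum bounded by $S$ can exceed $S/(pL)$).

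The lower bound on $\|P_\ell\|_F^2$ will come from the Jacobian factorization $Jf_\theta(x) = P_\ell\, Q_\ell$, where $Q_\ell := W_\ell D_{\ell-1} W_{\ell-1} \cdots D_1 W_1$ is the forward Jacobian up to layer $\ell$. Since $\mathrm{Rank}\,Jf_\theta(x)=k$, both factors have rank at least $k$. Horn's multiplicative singular-value inequality $\prod_{i=1}^k s_i(AB) \leq \prod_{i=1}^k s_i(A)\,s_i(B)$ yields
\[
|Jf_\theta(x)|_+ \;\leq\; \pi_k(P_\ell)\,\pi_k(Q_\ell), \qquad \pi_k(M):=\prod_{i=1}^k s_i(M),
\]
while AM--GM on squared singular values gives $\pi_k(P_\ell)^{2/k} \leq \|P_\ell\|_F^2/k$. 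Combining the two gives $\|P_\ell\|_F^2 \geq k\,|Jf_\theta(x)|_+^{2/k}\,\pi_k(Q_\ell)^{-2/k}$, so the whole argument reduces to proving the uniform upper bound $\pi_k(Q_\ell)^{2/k} \leq \max\{1,e^{c_1/k}\}$.

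For that, iterated Horn together with $\|D_j\|_{\mathrm{op}} \leq 1$ (valid for $\sigma_a$ with $|a|<1$) gives $\pi_k(Q_\ell) \leq \prod_{j=1}^\ell \pi_k(W_j)$, and another AM--GM gives $\pi_k(W_j)^{2/k} \leq \|W_j\|_F^2/k$. This is where the \emph{balanced} hypothesis becomes essential: it forces the per-layer norms to be (essentially) uniform so that $\|W_j\|_F^2 \leq k + c_1/L$, and telescoping then gives
\[
\pi_k(Q_\ell)^{2/k} \;\leq\; \bigl(1+c_1/(kL)\bigr)^\ell \;\leq\; e^{c_1\ell/(kL)} \;\leq\; \max\{1,e^{c_1/k}\}
\]
for every $\ell\in\{1,\dots,L\}$, closing the argument.

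The main obstacle is precisely this last step. Without balancedness a few layers could carry a disproportionately large fraction of the norm budget (compensated by tiny norms elsewhere), in which case the AM--GM/Jensen estimate becomes too loose and $\pi_k(Q_\ell)$ blows up at those layers, breaking the uniform lower bound on $\|P_\ell\|_F^2$. Converting the \emph{global} budget $\|\theta\|^2 \leq kL+c_1$ into a \emph{layer-wise} constraint through balancedness is the crux; once that is available, the rest of the proof is just Horn's inequality, AM--GM, and Markov.
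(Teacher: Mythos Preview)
Your proposal is correct and follows essentially the same route as the paper: the NTK trace identity, a uniform lower bound on the backward Jacobian $\|P_\ell\|_F^2$ obtained from the factorization $Jf_\theta(x)=P_\ell Q_\ell$ via Horn's product inequality and AM--GM, an upper bound on $\pi_k(Q_\ell)$ coming from balancedness and the norm budget, and finally Markov's inequality for the per-layer statement.

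One small correction on the balancedness step. Balancedness (with biases) gives $\|W_\ell\|_F^2+\|b_\ell\|^2=\|W_{\ell+1}\|_F^2$, hence the sequence $\|W_\ell\|_F^2$ is \emph{non-decreasing}, not uniform; in particular the last few layers can exceed $k+c_1/L$. What you actually need---and what the paper uses---is that monotonicity forces the partial averages to be bounded by the total average,
\[
\frac{1}{\ell}\sum_{j=1}^\ell \|W_j\|_F^2 \;\leq\; \frac{1}{L}\sum_{j=1}^L\bigl(\|W_j\|_F^2+\|b_j\|^2\bigr)\;=\;\frac{\|\theta\|^2}{L}\;\leq\; k+\frac{c_1}{L},
\]
and then AM--GM on the product $\prod_{j=1}^\ell \pi_k(W_j)^{2/k}\leq\prod_{j=1}^\ell \|W_j\|_F^2/k$ yields exactly your bound $(1+c_1/(kL))^\ell$. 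So the displayed inequality is right; only the verbal justification (``per-layer norms uniform'', ``telescoping'') should be adjusted.
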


Note that the balancedness assumption is not strictly necessary and
could easily be weakened to some form of approximate balancedness,
since we only require the fact that the parameter norm $\left\Vert W_{\ell}\right\Vert _{F}^{2}$
is well spread out throughout the layers, which follows from balancedness.

The NTK describes the narrowness of the minima \cite{jacot2019asymptotic},
and the assumption of bounded NTK is thus related to stability under
large learning rates. There are multiple notions of narrowness that
have been considered: 
\begin{itemize}
\item The operator norm of the Hessian $H$ (which is closely related to
the top eigenvalue of the NTK Gram matrix $\Theta^{(L)}(X,X)$ especially
in the MSE case where at any interpolating function $\left\Vert H\right\Vert _{op}=\frac{1}{N}\left\Vert \Theta^{(L)}(X,X)\right\Vert _{op}$)
which needs to be bounded by $\nicefrac{2}{\eta}$ to have convergence
when training with gradient descent with learning rate $\eta$.
\item The trace of the Hessian (in the MSE case $\mathrm{Tr}H=\frac{1}{N}\mathrm{Tr}\Theta^{(L)}(X,X)$)
which has been shown to describe the bias of stochastic gradient descent
or approximation thereof \cite{damian2021_label_noise_trace_Hessian,li2021_SGD_zero_loss}.
\end{itemize}
Thus boundedness of almost all activations as $L\to\infty$ can be
guaranteed by assuming either $\frac{1}{N}\left\Vert \Theta^{(L)}(X,X)\right\Vert _{op}\leq cL$
(which implies $d_{out}\mathrm{Tr}\Theta^{(L)}(X,X)\leq cL$) or $\frac{1}{N}\mathrm{Tr}\Theta^{(L)}(X,X)\leq cL$
directly, corresponding to either gradient descent with $\eta=\nicefrac{2}{cL}$
or stochastic gradient descent with a similar scaling of $\eta$).

Note that one can find parameters that learn a function with a NTK
that scales linearly in depth, but it is not possible to represent
non-trivial functions with a smaller NTK $\Theta^{(L)}\ll L$. This
is why we consider a linear scaling in depth to be the `normal' size
of the NTK, and anything larger to be narrow.

Putting the two Theorems together, we can prove the Bottleneck structure
for almost all representations $\tilde{\alpha}_{\ell}(X)$:
\begin{cor}
\label{cor:BN-structure-alpha}Given balanced parameters $\theta$
of a depth $L$ network with $\left\Vert \theta\right\Vert ^{2}\leq kL+c_{1}$
and a set of points $x_{1},\dots,x_{N}$ such that $\mathrm{Rank}Jf_{\theta}(x_{i})=k$
and $\frac{1}{N}\mathrm{Tr}\left[\Theta^{(L)}(X,X)\right]\leq cL$,
then for all $p\in(0,1)$ there are at least $(1-p)L$ layers such
that 
\begin{align*}
s_{k+1}\left(\frac{1}{\sqrt{N}}\tilde{\alpha}_{\ell}(X)\right) & \leq\sqrt{c_{1}-2\log\left|Jf_{\theta}(x)\right|_{+}}\left(\sqrt{\frac{1}{N}\sum_{i=1}^{N}\frac{c\max\{1,e^{\frac{c_{1}}{k}}\}}{k\left|Jf_{\theta}(x_{i})\right|_{+}^{\nicefrac{2}{k}}}}+\sqrt{p}\right)\frac{1}{p\sqrt{L}}.
\end{align*}
\end{cor}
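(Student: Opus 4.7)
The plan is to combine Theorems~\ref{thm:BN-structure-weights} and~\ref{thm:NTK-implies-bounded-representations} through three Markov-type estimates, then read off the $(k{+}1)$-th singular value from a rank-$k$-plus-perturbation decomposition of $\tilde\alpha_\ell(X)$.

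First I apply Theorem~\ref{thm:BN-structure-weights} at the reference point $x$ to extract semi-orthonormal $U_\ell, V_{\ell-1}$ satisfying $\sum_\ell \|W_\ell - U_\ell V_{\ell-1}^T\|_F^2 \leq A := c_1 - 2\log|Jf_\theta(x)|_+$. The same accounting also controls the biases: since $\|U_\ell V_{\ell-1}^T\|_F^2 = k$ per layer, the weights account for at least $kL - O(A)$ of the parameter budget $\|\theta\|^2\leq kL+c_1$, leaving $\sum_\ell\|b_\ell\|_2^2 = O(A)$; balancedness prevents this residual from concentrating on a single layer. In parallel, summing the per-point bound of Theorem~\ref{thm:NTK-implies-bounded-representations} over $i=1,\dots,N$ gives $\sum_\ell\|\alpha_{\ell-1}(X)\|_F^2/N \leq L\cdot\tfrac{1}{N}\sum_i M_i^2$ with $M_i^2=c\max\{1,e^{c_1/k}\}/(k|Jf_\theta(x_i)|_+^{2/k})$.

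Next, for any $p\in(0,1)$, three applications of Markov's inequality with fractions summing to $p$ produce at least $(1-p)L$ layers on which each of these three quantities sits within $O(1/(pL))$ times its total. On any such $\ell$ I decompose
\[
\tilde\alpha_\ell(X) = U_\ell V_{\ell-1}^T \alpha_{\ell-1}(X) + (W_\ell - U_\ell V_{\ell-1}^T)\alpha_{\ell-1}(X) + b_\ell\,\mathbf{1}^T,
\]
observe that the first summand has rank at most $k$, and apply Weyl's inequality for singular values together with $\|E\alpha\|_{op}\leq\|E\|_F\|\alpha\|_F$ and $\|b\mathbf{1}^T/\sqrt N\|_{op}=\|b\|_2$ to get
\[
s_{k+1}\bigl(\tilde\alpha_\ell(X)/\sqrt N\bigr) \leq \|W_\ell - U_\ell V_{\ell-1}^T\|_F \cdot \|\alpha_{\ell-1}(X)\|_F/\sqrt N + \|b_\ell\|_2.
\]
Substituting the three per-layer bounds and factoring $\sqrt A/(p\sqrt L)$ yields the claim: the product contributes the $\sqrt{\tfrac{1}{N}\sum_i M_i^2}$ factor, while $\|b_\ell\|_2 \leq \sqrt{O(A)/(pL)} = O(\sqrt A \cdot \sqrt p/(p\sqrt L))$ supplies the additive $\sqrt p$ inside the parentheses.

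The main obstacle is the bias bookkeeping: Theorem~\ref{thm:BN-structure-weights} controls only $\|W_\ell - U_\ell V_{\ell-1}^T\|_F$, yet an unchecked heavily-biased layer would contribute a rank-one term $b_\ell\mathbf{1}^T$ of operator norm $\|b_\ell\|_2\sqrt N$ large enough to swamp the $s_{k+1}$ bound by itself. Turning balancedness into a uniform per-layer tail bound on $\|b_\ell\|_2^2$, with constants compatible with the $\sqrt p$ factor in the statement, is the delicate finite-depth step; the remaining Weyl/Markov manipulations are routine.
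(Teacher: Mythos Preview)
Your approach is essentially the paper's: decompose $\tilde\alpha_\ell(X)=W_\ell\alpha_{\ell-1}(X)+b_\ell\mathbf 1^T$, use Weyl to bound $s_{k+1}$ by a rank-$k$ part plus perturbation, and apply Markov to the summed bounds from Theorems~\ref{thm:BN-structure-weights} and~\ref{thm:NTK-implies-bounded-representations}. The paper writes the singular-value step as $s_{k+1}(\tilde\alpha_\ell(X)/\sqrt N)\leq s_{k+1}(W_\ell)\,\|\alpha_{\ell-1}(X)/\sqrt N\|_{op}+\|b_\ell\|$, then uses $s_{k+1}(W_\ell)\leq\|W_\ell-U_\ell V_{\ell-1}^T\|_F$; this is equivalent to your decomposition.

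The one place you diverge is the bias bookkeeping, which you flag as the ``main obstacle'' and for which you invoke balancedness. This is unnecessary. The inequality behind Theorem~\ref{thm:BN-structure-weights} is
\[
\sum_{\ell=1}^L\Bigl(\|W_\ell\|_F^2+\|b_\ell\|^2-k-2\log|W_\ell|_k\Bigr)\leq\|\theta\|^2-kL-2\log|Jf_\theta(x)|_+\leq A,
\]
since the biases sit inside $\|\theta\|^2$ but contribute nothing to $Jf_\theta(x)$. Hence the same bound that controls $\sum_\ell\|W_\ell-U_\ell V_{\ell-1}^T\|_F^2$ already controls $\sum_\ell\|b_\ell\|^2$ (the appendix restatement of Theorem~\ref{thm:BN-structure-weights} includes $\|b_\ell\|^2$ explicitly). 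A single Markov application then handles the weight error and the bias together, with no appeal to balancedness for this step; balancedness is used only inside Theorem~\ref{thm:NTK-implies-bounded-representations}. So the paper needs two Markov applications rather than your three, but otherwise the arguments coincide.
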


\subsection{Narrowness of rank-underestimating minima\label{subsec:Narrowness-of-rank-underestimating-minima}}

We know that the large $R^{(1)}$ value of BN-rank 1 fitting functions
is related to the explosion of its derivative, but a large Jacobian
also leads to a blow up of the NTK:
\begin{prop}
\label{prop:blowup-NTK-Jacobian}For any point $x$, we have
\[
\left\Vert \partial_{xy}^{2}\Theta(x,x)\right\Vert _{op}\geq2L\left\Vert Jf_{\theta}(x)\right\Vert _{op}^{2-\nicefrac{2}{L}}
\]
where $\partial_{xy}^{2}\Theta(x,x)$ is understood as a $d_{in}d_{out}\times d_{in}d_{out}$
matrix.

Furthermore, for any two points $x,y$ such that the pre-activations
of all neurons of the network remain constant on the segment $[x,y]$,
then either $\left\Vert \Theta(x,x)\right\Vert _{op}$ or $\left\Vert \Theta(y,y)\right\Vert _{op}$
is lower bounded by $\frac{L}{4}\left\Vert x-y\right\Vert ^{2}\left\Vert Jf_{\theta}(x)\frac{y-x}{\left\Vert x-y\right\Vert }\right\Vert _{2}^{2-\nicefrac{2}{L}}.$
\end{prop}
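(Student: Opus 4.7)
The plan is to exploit the $1$-homogeneity of $f_\theta$ in each weight matrix $W_\ell$ together with the factorization $Jf_\theta(x)=B_\ell W_\ell\tilde{F}_{\ell-1}$, where $B_\ell=\partial_{\tilde{\alpha}_\ell}f_\theta$ is the backward Jacobian (ending in $D_\ell=\mathrm{diag}(\sigma'(\tilde{\alpha}_\ell))$) and $\tilde{F}_{\ell-1}=\partial_x\alpha_{\ell-1}$ is the forward Jacobian up to layer $\ell-1$. On the interior of any piecewise-linear region both factors are constant in $x$, so the smooth computations of $\Theta$ below go through and the kinks contribute nothing.

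For the first inequality, I would expand $\partial_{xy}^2\Theta(x,x)=\sum_{\ell=1}^{L}\tilde{F}_{\ell-1}^T\tilde{F}_{\ell-1}\otimes B_\ell B_\ell^T$, so that the quadratic form evaluated on a rank-one test matrix $V=uv^T\in\mathbb{R}^{d_{out}\times d_{in}}$ reads $\sum_\ell\|B_\ell^T u\|^2\|\tilde{F}_{\ell-1}v\|^2$. Choosing $v$ to be a top right singular vector of $Jf_\theta(x)$ and $u=Jf_\theta(x)v/\|Jf_\theta(x)v\|$ makes $\|Jf_\theta(x)\|_{op}=u^T Jf_\theta(x)v=(B_\ell^T u)^T W_\ell(\tilde{F}_{\ell-1}v)$, and Cauchy--Schwarz yields the per-layer bound $\|B_\ell^T u\|\|\tilde{F}_{\ell-1}v\|\ge\|Jf_\theta(x)\|_{op}/\|W_\ell\|_{op}$. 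Summing the squared bounds and applying AM--GM, $\sum_\ell\|W_\ell\|_{op}^{-2}\ge L(\prod_\ell\|W_\ell\|_{op})^{-2/L}\ge L\|Jf_\theta(x)\|_{op}^{-2/L}$ (the last step using $\|D_\ell\|_{op}\le 1$), gives the stated bound up to the constant.

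For the second inequality, on the linear region containing $[x,y]$ the map $f_\theta$ is affine with $f_\theta(y)-f_\theta(x)=Jf_\theta(x)(y-x)$, and the gradient differences are rank-one: $\partial_{W_\ell}h(y)-\partial_{W_\ell}h(x)=(B_\ell^T u)(\tilde{F}_{\ell-1}(y-x))^T$ for the scalar $h:=u^T f_\theta$. Taking $u=Jf_\theta(x)(y-x)/\|Jf_\theta(x)(y-x)\|$ gives $h(y)-h(x)=\|Jf_\theta(x)(y-x)\|$ and $\Delta_\ell:=\|B_\ell^T u\|^2\|\tilde{F}_{\ell-1}(y-x)\|^2$. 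The crucial per-layer inequality $\Delta_\ell\ge|h(y)-h(x)|^2/\rho_\ell^2$, where $\rho_\ell:=\|\tilde{F}_\ell v\|/\|\tilde{F}_{\ell-1}v\|$ is the directional stretch at layer $\ell$ for $v=(y-x)/\|y-x\|$, comes from rewriting $(B_\ell^T u)^T W_\ell \tilde{F}_{\ell-1}(y-x)=\|y-x\|(B_\ell^T u)^T v_\ell$ via the identity $D_\ell B_\ell^T=B_\ell^T$ that reflects $B_\ell$ ending with $D_\ell$, followed by Cauchy--Schwarz. Multiplying over $\ell$, applying AM--GM, and using the exact telescoping $\prod_\ell\rho_\ell=\|Jf_\theta(x)v\|$ produce $\sum_\ell\Delta_\ell\ge L\|y-x\|^2\|Jf_\theta(x)v\|^{2-2/L}$; on the other side, $|u^T\Theta(x,y)u|\le\sqrt{u^T\Theta(x,x)u\cdot u^T\Theta(y,y)u}$ bounds $\sum_\ell\Delta_\ell=u^T[\Theta(x,x)+\Theta(y,y)-2\Theta(x,y)]u\le 4\max(\|\Theta(x,x)\|_{op},\|\Theta(y,y)\|_{op})$, giving the claim.

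The main technical obstacle is isolating the \emph{directional} product $\prod_\ell\rho_\ell=\|Jf_\theta(x)v\|$ rather than the naive product of operator norms $\prod_\ell\|W_\ell\|_{op}$, which would only yield a $\|Jf_\theta(x)\|_{op}^{2-2/L}$ dependence that is insufficient when the chosen direction $v$ is not the top singular direction. The refinement hinges on the identity $(B_\ell^T u)^T W_\ell v_{\ell-1}=(B_\ell^T u)^T v_\ell$, which uses that $B_\ell^T u$ lies in the image of $D_\ell$ (immediate for ReLU, where $D_\ell^2=D_\ell$; for general leaky $\sigma_a$ the analogous identity carries an $a$-dependent factor that must be tracked). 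Carrying this directional refinement through the layer-wise Cauchy--Schwarz and telescoping $\rho_\ell$'s is what upgrades the coarse operator-norm estimate into the sharp exponent $2-2/L$ appearing in both inequalities.
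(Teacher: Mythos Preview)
Your part (2) is sound and essentially equivalent to the paper's argument: the paper bounds $\partial_t^2\big(v^T\Theta(\gamma(t),\gamma(t))v\big)$ from below by the same quantity $2\|x-y\|^2 L\|Jf_\theta(x)v\|^{2-2/L}$ and then uses nonnegativity of $v^T\Theta v$ at the endpoints, whereas you compute the second difference $u^T[\Theta(x,x)+\Theta(y,y)-2\Theta(x,y)]u=\sum_\ell\Delta_\ell$ directly and bound the cross term by Cauchy--Schwarz. Both routes rest on the same per-layer identity $(B_\ell^Tu)^TW_\ell\tilde F_{\ell-1}v=(B_\ell^Tu)^T\tilde F_\ell v$ coming from $D_\ell B_\ell^T=B_\ell^T$, followed by AM--GM on the telescoping ratios $\rho_\ell$.

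Part (1), however, has a genuine gap. In your final chain
\[
\sum_\ell\|W_\ell\|_{op}^{-2}\ \ge\ L\Big(\prod_\ell\|W_\ell\|_{op}\Big)^{-2/L}\ \ge\ L\,\|Jf_\theta(x)\|_{op}^{-2/L},
\]
the last inequality goes the wrong way: submultiplicativity and $\|D_\ell\|_{op}\le 1$ give $\|Jf_\theta(x)\|_{op}\le\prod_\ell\|W_\ell\|_{op}$, hence $(\prod_\ell\|W_\ell\|_{op})^{-2/L}\le\|Jf_\theta(x)\|_{op}^{-2/L}$, not $\ge$. This is exactly the ``naive product of operator norms'' trap you correctly diagnose in your last paragraph for part (2), but you fall into it in part (1). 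The remedy is to reuse your own directional telescoping in part (1) as well: split Cauchy--Schwarz at $D_\ell$ (via $D_\ell B_\ell^T=B_\ell^T$) rather than at $W_\ell$, obtaining $\|B_\ell^Tu\|^2\|\tilde F_{\ell-1}v\|^2\ge|u^TJf_\theta(x)v|^2/\rho_\ell^2$ with $\rho_\ell=\|\tilde F_\ell v\|/\|\tilde F_{\ell-1}v\|$, so that AM--GM collapses to the exact product $\prod_\ell\rho_\ell=\|Jf_\theta(x)v\|$. This is precisely how the paper proceeds for part (1).
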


With some additional work, we can show the the NTK of such rank-underestimating
functions will blow up, suggesting a narrow minimum:
\begin{thm}
\label{thm:BN-rank-1-fitting-NTK}Let $f^{*}:\Omega\to\mathbb{R}^{d_{out}}$
be a function with Jacobian rank $k^{*}>1$ (i.e. there is a $x\in\Omega$
with $\mathrm{Rank}Jf^{*}(x)=k^{*}$), then with high probability
over the sampling of a training set $x_{1},\dots,x_{N}$ (sampled
from a distribution with support $\Omega$), we have that for any
parameters $\theta$ of a deep enough network that represent a BN-rank
1 function $f_{\theta}$ that fits the training set $f_{\theta}(x_{i})=f^{*}(x_{i})$
with norm $\left\Vert \theta\right\Vert ^{2}=L+c_{1}$ then there
is a point $x\in\Omega$ where the NTK satisfies
\[
\mathrm{Tr}\left[\Theta^{(L)}(x,x)\right]\geq c''Le^{-c_{1}}N^{4-\frac{4}{k^{*}}}.
\]
\end{thm}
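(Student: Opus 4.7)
The strategy is to combine a Beardwood--Halton--Hammersley (BHH) arc-length lower bound with Proposition~\ref{prop:blowup-NTK-Jacobian}(2): any BN-rank $1$ function that fits $N$ points drawn from a $k^*$-dimensional distribution must stretch enough to force NTK blowup at some point.

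First I would establish the arc-length lower bound. Because $f^*$ has Jacobian rank $k^*>1$ at some $x_0\in\Omega$, the pushforward of the sampling distribution has a non-degenerate $k^*$-dimensional absolutely continuous part on a $k^*$-dim submanifold of $\mathbb{R}^{d_{out}}$. The BHH theorem (adapted to this submanifold) then gives, with high probability, that the optimal Hamiltonian path through $\{f^*(x_i)\}$ has length $\geq c_0 N^{1-1/k^*}$. Because $f_\theta$ has BN-rank $1$, its image $f_\theta(\Omega)$ is a $1$-dimensional PL curve passing through every $f_\theta(x_i)=f^*(x_i)$, so its total arc length is at least $c_0 N^{1-1/k^*}$.

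Next I would convert the norm assumption into a Jacobian bound. From $\|\theta\|^2=L+c_1$ and $R^{(0)}(f_\theta)=1$, the Taylor decomposition yields $R^{(1)}(f_\theta)\le c_1+O(1/L)$, and Theorem~\ref{thm:properties_first_correction} then gives $\|Jf_\theta(x)\|_{op}=|Jf_\theta(x)|_+\le e^{c_1/2+O(1/L)}$ wherever $Jf_\theta$ has rank $1$, so $f_\theta$ is essentially $e^{c_1/2}$-Lipschitz. Combining this with the arc-length bound, I would locate two points $x,y\in\Omega$ lying in the same linear region of $f_\theta$ and satisfying $\|f_\theta(y)-f_\theta(x)\|\gtrsim e^{-c_1/2}N^{2-2/k^*}$. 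For a deep enough network this is made possible by Theorem~\ref{thm:BN-structure-weights} (applied with $k=1$): most weight matrices are $O(L^{-1/2})$-close to rank-one matrices, so the image curve behaves like that of a low-complexity network and its full arc length cannot be dispersed across $\Theta(N)$ tiny segments. Once such $(x,y)$ is fixed, Proposition~\ref{prop:blowup-NTK-Jacobian}(2) applies because $f_\theta$ is affine on $[x,y]$ with $\|Jf_\theta(x)\cdot u\|=\|f_\theta(y)-f_\theta(x)\|/\|y-x\|$, giving
\[
\max\bigl(\|\Theta^{(L)}(x,x)\|_{op},\,\|\Theta^{(L)}(y,y)\|_{op}\bigr)\;\ge\;\tfrac{L}{4}\|y-x\|^{2/L}\|f_\theta(y)-f_\theta(x)\|^{2-2/L}.
\]
For $L$ large, $\|y-x\|^{2/L}\to 1$ and $2-2/L\to 2$, so the right-hand side is $\ge c''Le^{-c_1}N^{4-4/k^*}$; since $\mathrm{Tr}\,\Theta^{(L)}(x,x)\ge\|\Theta^{(L)}(x,x)\|_{op}$, the claimed trace bound follows at either $x$ or $y$.

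The main obstacle is the segment estimate. A naive pigeonhole over linear pieces of a fitting BN-rank $1$ interpolation only yields an average piece length of $\sim N^{1-1/k^*}/M$, where a priori $M$ could scale as $\Theta(N)$; in the critical regime $c_1\sim 2(1-1/k^*)\log N$ one really needs a single PL piece to absorb essentially the full arc length, so the argument cannot sidestep the deep-network BN structure. The technical heart of the proof is therefore tracking how the $O(L^{-1/2})$-near-rank-one weight matrices provided by Theorem~\ref{thm:BN-structure-weights} compose through the $L$ layers to yield an image curve whose arc length concentrates in $O(1)$ macroscopic pieces, so that some single piece is long enough to feed Proposition~\ref{prop:blowup-NTK-Jacobian}(2).
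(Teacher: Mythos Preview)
Your route through Proposition~\ref{prop:blowup-NTK-Jacobian}(2) is different from the paper's and, as you yourself flag, the segment estimate is the crux---but the resolution you propose does not go through. For large $L$ the bound in Proposition~\ref{prop:blowup-NTK-Jacobian}(2) is essentially $\tfrac{L}{4}\|f_\theta(y)-f_\theta(x)\|^2$, so matching $Le^{-c_1}N^{4-4/k^*}$ requires a single linear region with output displacement $\gtrsim e^{-c_1/2}N^{2-2/k^*}$. In the critical regime $c_1\approx 2(1-\tfrac{1}{k^*})\log N$ this equals the \emph{entire} BHH arc length $\sim N^{1-1/k^*}$. But nothing forces a BN-rank~$1$ interpolant to carry that length in one piece: the most natural such $f_\theta$ orders the $y_i$ along a near-TSP path with $\Theta(N)$ legs each of output length $\Theta(N^{-1/k^*})$, so every linear region has output displacement $O(N^{-1/k^*})$---exactly the dispersion you need to rule out. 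Theorem~\ref{thm:BN-structure-weights} says most $W_\ell$ are $O(L^{-1/2})$-close to rank one, but this in no way bounds the number of linear pieces of $f_\theta$ (even exactly rank-one layers composed with ReLUs produce arbitrarily many pieces), so the proposed mechanism for arc-length concentration is not there.

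The paper does not use Proposition~\ref{prop:blowup-NTK-Jacobian}(2) at all. It works directly with the trace formula $\mathrm{Tr}\,\Theta^{(L)}(x,x)=\sum_\ell\|\alpha_{\ell-1}(x)\|^2\,\|J(\tilde\alpha_\ell\to f_\theta)(x)\|_F^2$ and shows that \emph{both} factors are of order $N^{1-1/k^*}$ at a positive fraction of layers, for a well-chosen $x$. The backward-Jacobian factor follows easily from $\|Jf_\theta(x)\|_{op}\gtrsim N^{1-1/k^*}$ (your pigeonhole on slopes). The substantive new step is proving that the \emph{hidden activations} $\|\alpha_{\ell-1}(\tilde x_i)\|$ themselves blow up to order $N^{1-1/k^*}$. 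This is where Theorem~\ref{thm:BN-structure-weights} is actually deployed: on a good layer $\ell$ one projects the $N$ hidden points $z_i=\alpha_{\ell-1}(\tilde x_i)$ onto the dominant direction $u_\ell$, and then a BHH-type argument \emph{at that hidden layer} (consecutive output gaps $\gtrsim N^{-1/k^*}$, the layer-$\ell$-to-output map $e^{O(c_1)}$-Lipschitz, off-rank-one part of $W_\ell$ contributing only $O(L^{-1/2})$) forces the projected coordinates $u_\ell^T z_i$ to spread over a range $\gtrsim N^{1-1/k^*}$. The exponent $4-\tfrac{4}{k^*}$ thus arises from the product of two independent $N^{1-1/k^*}$ blowups---activation norm and backward Jacobian---summed over $\Theta(L)$ layers, not from a single long output segment.
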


One the one hand, we know that $c_{1}$ must satisfy $c_{1}\geq R^{(1)}(f_{\theta})\geq2(1-\frac{1}{k^{*}})\log N$
but if $c_{1}$ is within a factor of 2 of this lower bound $c_{1}<4(1-\frac{1}{k^{*}})\log N$,
then the above shows that the NTK will blow up a rate $N^{\alpha}L$
for a positive $\alpha$.

The previous explanation for why GD avoids BN-rank 1 fitting functions
was that when $N$ is much larger than the depth $L$ (exponentially
larger), there is a rank-recovering function with a lower parameter
norm than any rank-underestimating functions. But this relies on the
assumption that GD converges to the lower norm minima, and it is only
true for sufficiently small depths. In contrast the narrowness argument
applies for any large enough depth and does not assume global convergence.

Of course the complete explanation might be a mix of these two reasons
and possbily some other phenomenon too. Proving why GD avoids minima
that underestimate the rank with a rank $1<k<k^{*}$ also remains
an open question.

\begin{figure}
\hspace{-0.6cm}\subfloat[Spectra of the $W_{\ell}$s]{\includegraphics[scale=0.35]{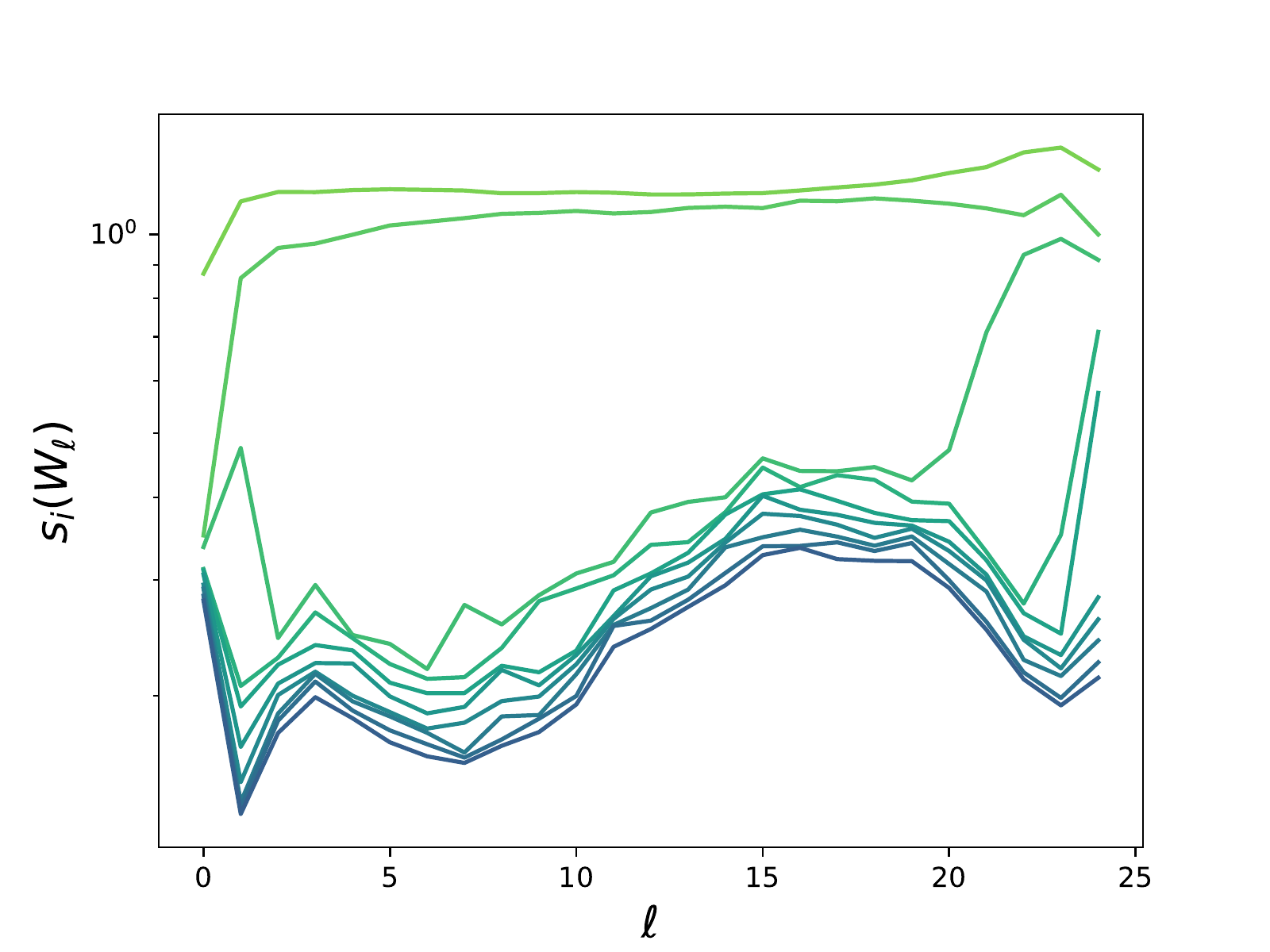}

}\hspace{-0.5cm}\subfloat[PCA of $\alpha_{6}(X)$.]{\includegraphics[scale=0.35]{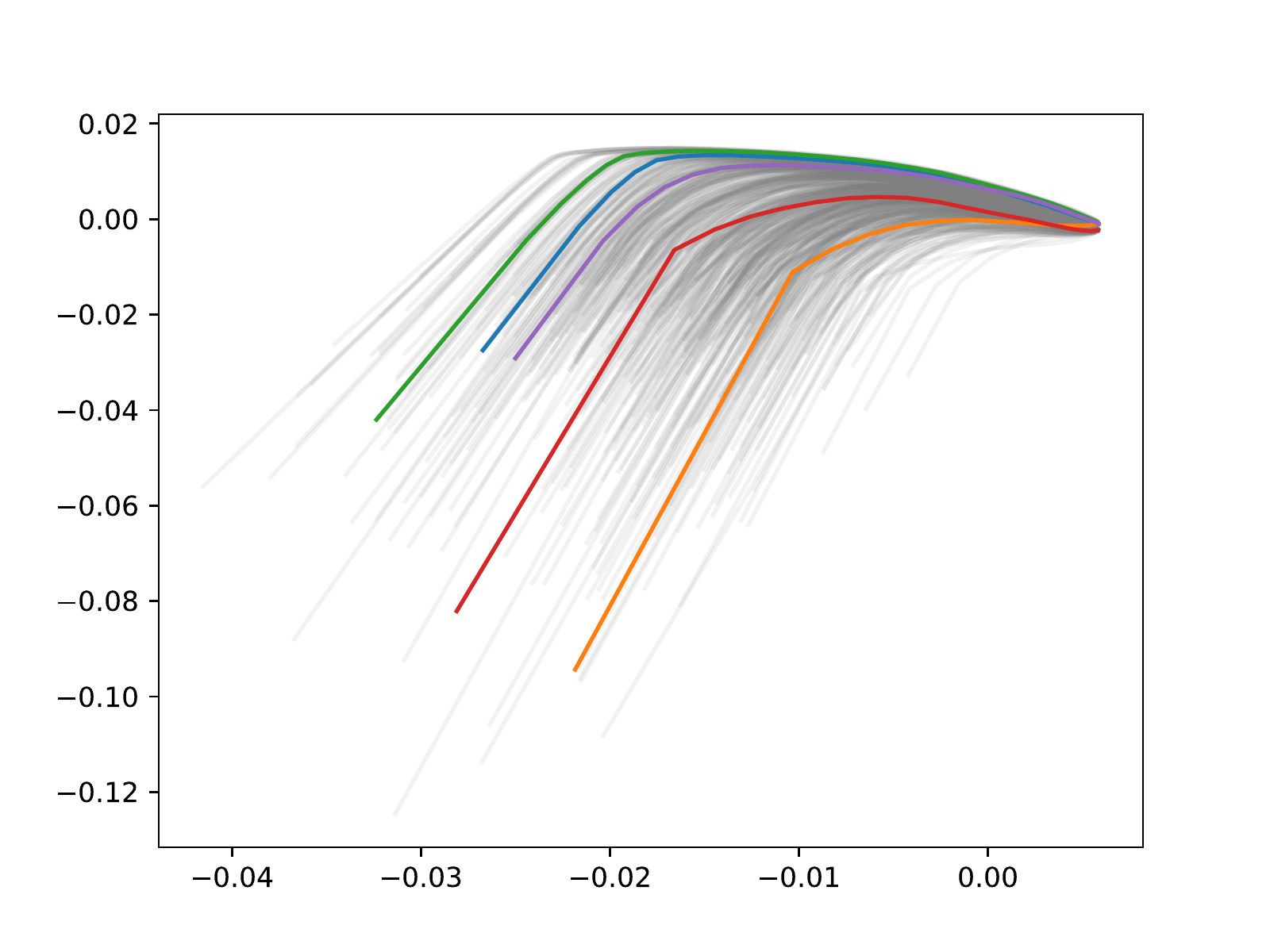}

}\hspace{-0.5cm}\subfloat[Dist. $\left\Vert \alpha_{2}(x)-\alpha_{2}(x_{0})\right\Vert $.]{\includegraphics[scale=0.32]{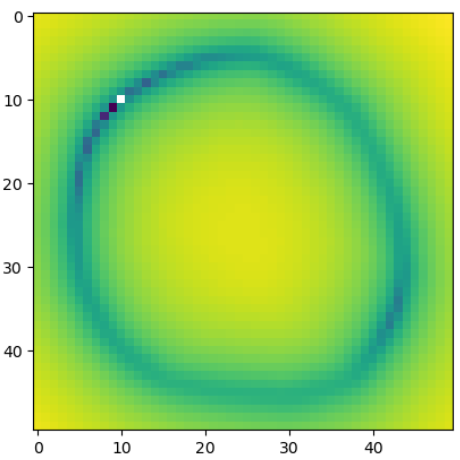}

}\hspace{-0.5cm}

\caption{\label{fig:symmetry_learning}A depth $L=25$ network with a width
of $200$ trained on the task described in Section \ref{subsec:Symmery-Learning}
with a ridge $\lambda=0.0002$. (a) Singular values of the weight
matrices of the network, showing two outliers in the bottleneck, which
implies that the network has recovered the true rank of 2. (b) Hidden
representation of the $6$-th layer projected to the first two dimensions,
we see how images of GD paths do not cross in this space, showing
that the dynamics on these two dimensions are self-consistent. (c)
The distance $\left\Vert \alpha_{2}(x_{0})-\alpha_{2}(x)\right\Vert $
in the second hidden layer between the representations at a fixed
point $x_{0}$ (at the white pixel) and another point $x$ on a plane
orthogonal to the axis $w$ of rotation, we see that all points on
the same symmetry orbit are collapsed together, proving that the network
has learned the rotation symmetry.}
\end{figure}

\section{Numerical Experiment: Symmetry Learning\label{subsec:Symmery-Learning}}

In general, functions with a lot of symmetries have low BN-rank since
a function $f$ with symmetry group $G$ can be decomposed as mapping
the inputs $\Omega$ to the inputs module symmetries $\nicefrac{\Omega}{G}$
and then mapping it to the outputs, thus $\mathrm{Rank}_{BN}(f;\Omega)\leq\dim\nicefrac{\Omega}{G}$
where $\dim\nicefrac{\Omega}{G}$ is the smallest dimension $\nicefrac{\Omega}{G}$
can be embedded into. Thus the bias of DNNs to learn function with
a low BN-rank can be interpreted as the network learning symmetries
of the task. With this interpretation, overestimating the rank corresponds
to failing to learn all symmetries of the task, while underestimating
the rank can be interpreted as the network learning spurious symmetries
that are not actual symmetries of the task.

To test this idea, we train a network to predict high dimensional
dynamics with high dimensional symmetries. Consider the loss $C(v)=\left\Vert vv^{T}-\left(ww^{T}+E\right)\right\Vert _{F}^{2}$
where $w\in\mathbb{R}^{d}$ is a fixed unit vector and $E$ is a small
noise $d\times d$ matrix. We optimize $v$ with gradient descent
to try and fit the true vector $w$ (up to a sign). One can think
of these dynamics as learning a shallow linear network $vv^{T}$ with
a single hidden neuron. We will train a network to predict the evolution
of the cost in time $C(v(t))$.

For small noise matrix $E$, the GD dynamics of $v(t)$ are invariant
under rotation around the vector $w$. As a result, the high-dimensional
dynamics of $v(t)$ can captured by only two \emph{summary statistics
}$u(v)=((w^{T}v)^{2},\left\Vert (I-ww^{T})v\right\Vert ^{2})$: the
first measures the position along the axis formed by $w$ and the
second the distance to this axis \cite{arous2022summary_stats}. The
evolution of the summary statistics is (approximately) self-consistent
(using the fact that $\left\Vert v\right\Vert ^{2}=(w^{T}v)^{2}+\left\Vert (I-ww^{T})v\right\Vert ^{2}$):
\begin{align*}
\partial_{t}(w^{T}v)^{2} & =-8(\left\Vert v\right\Vert ^{2}-1)(w^{T}v)^{2}+O(\left\Vert E\right\Vert )\\
\partial_{t}\left\Vert (I-ww^{T})v\right\Vert ^{2} & =-8\left\Vert v\right\Vert ^{2}\left\Vert (I-ww^{T})v\right\Vert ^{2}+O(\left\Vert E\right\Vert ).
\end{align*}

Our goal now is to see whether a DNN can learn these summary statistics,
or equivalently learn the underlying rotation symmetry. To test this,
we train a network on the following supervised learning problem: given
the vector $v(0)$ at initialization, predict the loss $\left(C(v(1)),\dots,C(v(T))\right)$
over the next $T$ GD steps. For $E=0$ , the function $f^{*}:\mathbb{R}^{d}\to\mathbb{R}^{T}$
that is to be learned has BN-rank 2, since one can first map $v(0)$
to the corresponding summary statistics $u(v(0))\in\mathbb{R}^{2}$,
and then solve the differential equation on the summary statistics
$(u(1),\dots,u(T))$ over the next $T$ steps, and compute the cost
$C(v)=\left\Vert v\right\Vert ^{4}-2(w^{T}v)^{2}+1+O(\left\Vert E\right\Vert )$
from $u$. 

We observe in Figure \ref{fig:symmetry_learning} that a large depth
$L_{2}$-regularized network trained on this task learns the rotation
symmetry of the task and learns two dimensional hidden representations
that are summary statistics (summary statistics are only defined up
to bijections, so the learned representation match $u(v)$ only up
to bijection but they can be recognized from the fact that the GF
paths do not cross on the 2D representation).

\section{Conclusion}

We have computed corrections to the infinite depth description of
the representation cost of DNNs given in \cite{jacot_2022_BN_rank},
revealing two regularity $R^{(1)},R^{(2)}$ measures that balance
against the dominating low rank/dimension bias $R^{(0)}$. We have
also partially described another regularity inducing bias that results
from large learning rates. We argued that these regularity bias play
a role in stopping the network from underestimating the `true' BN-rank
of the task (or equivalently overfitting symmetries).

We have also proven the existence of a bottleneck structure in the
weight matrices and under the condition of a bounded NTK of the learned
representations, where most hidden representations are approximately
$k=R^{(0)}(f_{\theta})$-dimensional, with only a few high-dimensional
representations.

\subsection*{Acknowledgement}

Thank you to Peter Sukenik who identified an error in the proof of
Theorem \ref{thm:BN-structure-weights} and proposed a simple fix.

\bibliographystyle{alpha}
\bibliography{./../../../projects/main}

\newpage{}

\appendix

\section{Properties of the Corrections}

\subsection{First Correction}
\begin{thm}[Theorem \ref{thm:properties_first_correction} from the main]
\label{thm:properties_first_correction-1}For all inputs $x$ where
$\mathrm{Rank}Jf(x)=R^{(0)}(f;\Omega)$, we have $R^{(1)}(f)\geq2\log\left|Jf(x)\right|_{+}$,
furthermore:
\begin{enumerate}
\item If $R^{(0)}(f\circ g)=R^{(0)}(f)=R^{(0)}(g)$, then $R^{(1)}(f\circ g)\leq R^{(1)}(f)+R^{(1)}(g)$.
\item If $R^{(0)}(f+g)=R^{(0)}(f)+R^{(0)}(g)$, then $R^{(1)}(f+g)\leq R^{(1)}(f)+R^{(1)}(g)$.
\item If $P_{\mathrm{Im}A^{T}}\Omega$ and $A\Omega$ are $k=\mathrm{Rank}A$
dimensional and completely positive (i.e. they can be embedded with
an isometric linear map into $\mathbb{R}_{+}^{m}$ for some $m$),
then $R^{(1)}(x\mapsto Ax;\Omega)=2\log\left|A\right|_{+}$ .
\end{enumerate}
\end{thm}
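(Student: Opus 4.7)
My plan is to reduce to the linear-network formula by locally linearizing $f_\theta$. Since $\sigma_a$ is piecewise linear with slopes in $\{a,1\}\subset[-1,1]$, on a neighborhood of $x$ every preactivation has a fixed sign pattern, so $f_\theta$ is affine there with $Jf_\theta(x) = W_L D_{L-1} W_{L-1}\cdots D_1 W_1$ for diagonal $D_\ell$ of entries in $\{a,1\}$. Absorbing each $D_\ell$ into the next weight matrix, setting $\tilde W_1 := W_1$ and $\tilde W_\ell := W_\ell D_{\ell-1}$ for $\ell\ge 2$, yields a linear factorization $\tilde W_L\cdots\tilde W_1 = Jf_\theta(x)$ with $\|\tilde W_\ell\|_F \leq \|W_\ell\|_F$ since $\|D_{\ell-1}\|_{op}\leq 1$. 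The explicit linear formula $R(A;L) = L\|A\|_{2/L}^{2/L}$ then gives $\|\theta\|^2 \geq \sum_\ell\|\tilde W_\ell\|_F^2 \geq L\|Jf(x)\|_{2/L}^{2/L}$. Taylor-expanding $s^{2/L} = 1 + (2/L)\log s + O(L^{-2})$ and using $\mathrm{Rank}\,Jf(x) = R^{(0)}(f)$ to truncate the sum to $k=R^{(0)}(f)$ nonzero terms gives $L\|Jf(x)\|_{2/L}^{2/L} = LR^{(0)}(f) + 2\log|Jf(x)|_+ + O(L^{-1})$; comparing with the defining expansion of $R(f;\Omega,L)$ and reading off the $O(1)$ coefficient yields the claimed bound.

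\textbf{Composition and addition.} Both subadditivity statements follow by constructing a network for the combined function from networks for $f$ and $g$ and matching Taylor coefficients. For property~1, concatenating a depth-$\lfloor L/2\rfloor$ network for $f$ with a depth-$\lceil L/2\rceil$ network for $g$ gives $R(f\circ g;L)\leq R(f;\lfloor L/2\rfloor) + R(g;\lceil L/2\rceil)$; expanding both sides and using $R^{(0)}(f\circ g) = R^{(0)}(f) = R^{(0)}(g)$ makes the order-$L$ terms cancel, leaving $R^{(1)}(f\circ g)\leq R^{(1)}(f) + R^{(1)}(g) + O(L^{-1})$, and letting $L\to\infty$ concludes. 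For property~2, run the two networks in parallel: take $W_1 = \binom{W_1^{(f)}}{W_1^{(g)}}$ (shared input), block-diagonal middle layers, and $W_L = [W_L^{(f)}\;W_L^{(g)}]$ (summing the two outputs). Frobenius norms then add exactly, so $R(f+g;L)\leq R(f;L) + R(g;L)$, and the same matching argument under $R^{(0)}(f+g) = R^{(0)}(f) + R^{(0)}(g)$ yields the bound.

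\textbf{Linear case.} The lower bound $R^{(1)}(x\mapsto Ax;\Omega)\geq 2\log|A|_+$ follows from the first part applied at any $x\in\Omega$ with $\mathrm{Rank}\,A|_{T_x\Omega}=k$, which exists since $P_{\mathrm{Im}A^T}\Omega$ is $k$-dimensional. For the matching upper bound, use the complete positivity hypotheses to pick isometric linear embeddings of $P_{\mathrm{Im}A^T}\Omega$ and $A\Omega$ into the positive orthant $\mathbb{R}_+^m$. Writing $A = U\Sigma V^T$ with $\Sigma$ the positive $k\times k$ diagonal of nonzero singular values, build a network whose first layer implements the positive embedding of $V^T$, whose middle $L-2$ layers realize a balanced $L$-th-root factorization of $\Sigma$ entirely within the positive orthant (so $\sigma_a$ acts as the identity), and whose last layer inverts the output embedding into $U$. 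A balanced choice of singular values gives $\sum_\ell\|W_\ell\|_F^2 = L\sum_i s_i(A)^{2/L} + O(L^{-1}) = Lk + 2\log|A|_+ + O(L^{-1})$, matching the lower bound in the $O(1)$ coefficient.

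\textbf{Main obstacle.} The hard part is the upper-bound construction for property~3: the network must realize $x\mapsto Ax$ \emph{exactly} on $\Omega$ while routing every intermediate activation through a region on which $\sigma_a$ is linear (otherwise the induced map is not linear in $x$) and must balance the singular values across $L$ layers so that no spurious $O(1)$ overhead appears on top of $Lk$. The complete positivity hypotheses on $P_{\mathrm{Im}A^T}\Omega$ and $A\Omega$ provide exactly the positive-orthant embeddings that prevent the $\sigma_a$ nonlinearity from clipping intermediate representations, which is precisely what makes the construction realizable.
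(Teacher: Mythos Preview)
Your arguments for the lower bound and for properties~1 and~2 are essentially the paper's: linearize at $x$ to reduce to the Schatten formula, then use the obvious concatenation/parallel constructions together with the matching $R^{(0)}$ hypotheses to cancel the order-$L$ terms. Those parts are fine.

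The gap is in your upper-bound construction for property~3. You propose to route the middle $L-2$ layers through a ``balanced $L$-th-root factorization of $\Sigma$ entirely within the positive orthant''. But the hypothesis only gives you complete positivity at the two \emph{endpoints} $P_{\mathrm{Im}A^T}\Omega$ and $A\Omega$; it says nothing about the intermediate representations $\Sigma^{\ell/L}V^T\Omega$ along the multiplicative/geodesic path. There is no reason these should embed into $\mathbb{R}_+^m$, so you cannot guarantee that $\sigma_a$ acts as the identity on them, and the construction need not represent $x\mapsto Ax$. Notice that the analogous statement for $R^{(2)}$ in the paper (Proposition~\ref{prop:second-correction-properties}) requires precisely the stronger hypothesis that $A^p\Omega$ be completely positive for \emph{all} $p\in[0,1]$ --- that is exactly what your construction would need, and it is strictly stronger than what property~3 assumes.

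The paper circumvents this by interpolating \emph{additively in the Gram matrices} rather than multiplicatively in the singular values: set $K_p = pK_{\mathrm{in}} + (1-p)K_{\mathrm{out}}$ with $K_{\mathrm{in}}(x,y)=x^TP_{\mathrm{Im}A^T}y$ and $K_{\mathrm{out}}(x,y)=x^TA^TAy$. Complete positivity is preserved under convex combination (stack the two representing matrices), so every $K_p$ is completely positive and one can choose $B_{L,\ell}$ with $B_{L,\ell}\Omega\subset\mathbb{R}_+^{n_\ell}$ realizing $K_{\ell/L}$. Taking $W_\ell = B_{L,\ell}B_{L,\ell-1}^+$ then represents $x\mapsto Ax$ with $\sigma_a$ inactive, and a telescoping computation of $\log\bigl|B_{L,\ell}^TB_{L,\ell}\bigr|_+ - \log\bigl|B_{L,\ell-1}^TB_{L,\ell-1}\bigr|_+$ shows that $\sum_\ell\|W_\ell\|_F^2 = Lk + 2\log|A|_+ + O(L^{-1})$. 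Your ``main obstacle'' paragraph correctly identifies the difficulty; what is missing is the observation that the \emph{linear} kernel path, unlike the geometric one, automatically inherits complete positivity from the endpoints.
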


\begin{proof}
For the first bound, we remember that $R(f;\Omega,L)\geq L\left\Vert Jf\right\Vert _{\nicefrac{2}{L}}^{\nicefrac{2}{L}}$,
therefore
\begin{align*}
R^{(1)}(f;\Omega) & =\lim_{L\to\infty}R(f;\Omega,L)-LR^{(0)}(f;\Omega)\\
 & \geq\lim_{L\to\infty}L\sum_{i=1}^{\mathrm{Rank}Jf(x)}s_{i}(Jf(x))^{\frac{2}{L}}-1\\
 & \geq\sum_{i=1}^{\mathrm{Rank}Jf(x)}2\log s_{i}(Jf(x))
\end{align*}
where we used $s^{\frac{2}{L}}-1=e^{\frac{2}{L}\log s}-1\geq\frac{2}{L}\log s$.

(1) Since $R(f\circ g;\Omega,L_{1}+L_{2})\leq R(f;L_{1})+R(g;L_{2})$,
we have
\begin{align*}
R^{(1)}(f\circ g;\Omega) & =\lim_{L_{1}+L_{2}\to\infty}R(f\circ g;\Omega,L_{1}+L_{2})-(L_{1}+L_{2})R^{(0)}(f\circ g;\Omega)\\
 & \leq\lim_{L_{1}\to\infty}R(f;\Omega,L_{1})-L_{1}R^{(0)}(f;\Omega)+\lim_{L_{2}\to\infty}R(g;\Omega,L_{2})-L_{2}R^{(0)}(f;\Omega)\\
 & =R^{(1)}(f;\Omega)+R^{(1)}(g;\Omega).
\end{align*}

(2) Since $R(f+g;\Omega,L)\leq R(f;\Omega,L)+R(g;\Omega,L)$, we have
\begin{align*}
R^{(1)}(f+g;\Omega) & =\lim_{L\to\infty}R(f+g;\Omega,L)-LR^{(0)}(f+g;\Omega)\\
 & \leq\lim_{L\to\infty}R(f;\Omega,L)-LR^{(0)}(f;\Omega)+\lim_{L\to\infty}R(g;\Omega,L)-LR^{(0)}(g;\Omega)\\
 & =R^{(1)}(f;\Omega)+R^{(1)}(g;\Omega).
\end{align*}

(3) By the first bound, we know that $R^{(1)}(x\mapsto Ax;\Omega)\geq2\log\left|A\right|_{+}$,
we now need to show $R^{(1)}(x\mapsto Ax;\Omega)\leq2\log\left|A\right|_{+}$.
Let us define the set of completely positive representations as the
set of bilinear kernels $K(x,y)=x^{T}B^{T}By$ such that $Bx$ has
non-negative entries for all $x\in\Omega$ (we say that a kernel $K$
is completely positive over $\Omega$ if it can be represented in
this way for some choice of $B$). The set of completely positive
representations is convex, since for $K(x,y)=x^{T}B^{T}By$ and $\tilde{K}(x,y)=x^{T}\tilde{B}^{T}\tilde{B}y$,
we have 
\[
\frac{K(x,y)+\tilde{K}(x,y)}{2}=x^{T}\left(\begin{array}{c}
\frac{1}{\sqrt{2}}B\\
\frac{1}{\sqrt{2}}\tilde{B}
\end{array}\right)^{T}\left(\begin{array}{c}
\frac{1}{\sqrt{2}}B\\
\frac{1}{\sqrt{2}}\tilde{B}
\end{array}\right)y.
\]
The conditions that there are $O_{in}$ and $O_{out}$ with $O_{in}^{T}O_{in}=P_{\mathrm{Im}A^{T}}$
and $O_{out}^{T}O_{out}=P_{\mathrm{Im}A}$ such that $O_{in}\Omega\in\mathbb{R}_{+}^{k_{1}}$
and $O_{out}A\Omega\in\mathbb{R}_{+}^{k_{2}}$ is equivalent to saying
that the kernels $K_{in}(x,y)=x^{T}P_{\mathrm{Im}A^{T}}y$ and $K_{out}(x,y)=x^{T}A^{T}Ax$
are completely positive over $\Omega$.

By the convexity of completely positive representations, the interpolation
$K_{p}=pK_{in}+(1-p)K_{out}$ is completely positive for all $p\in[0,1]$.
Now choose for all depths $L$ and all layers $\ell=1,\dots,L-1$
a matrix $B_{L,\ell}$ such that $K_{p=\frac{\ell}{L}}(x,y)=x^{T}B_{L,\ell}^{T}B_{L,\ell}y$
and then choose the weights $W_{\ell}$ of the depth $L$ network
as 
\[
W_{\ell}=B_{L,\ell}B_{L,\ell-1}^{+},
\]
using the convention $B_{L,0}=I_{d_{in}}$ and $B_{L,L}=I_{out}$.
By induction, we show that for any input $x\in\Omega$ the activation
of the $\ell$-th hidden layer is $B_{L,\ell}x$. This is true for
$\ell=1$, since $W_{1}=B_{L,1}$ and therefore $p^{(1)}(x)=B_{L,1}x$
which has positive entries so that $q^{(1)}(x)=\sigma\left(p^{(1)}(x)\right)=B_{L,1}x$.
Then by induction 
\[
p^{(\ell)}(x)=W_{\ell}q^{(\ell-1)}(x)=B_{L,\ell}B_{L,\ell-1}^{+}B_{L,\ell-1}x=B_{L,\ell}x,
\]
which has positive entries, so that again $q^{(\ell)}(x)=\sigma\left(p^{(\ell)}(x)\right)=B_{L,\ell}x$.
In the end, we get $p^{(L)}(x)=Ax$ as needed.

Let us now compute the Frobenius norms of the weight matrices $\left\Vert W_{\ell}\right\Vert _{F}^{2}=\mathrm{Tr}\left[B_{L,\ell}^{T}B_{L,\ell}\left(B_{L,\ell-1}^{T}B_{L,\ell-1}\right)^{+}\right]$
as $L\to\infty$, remember that $B_{L,\ell}^{T}B_{L,\ell}=\frac{\ell}{L}P_{\mathrm{Im}A^{T}}+(1-\frac{\ell}{L})A^{T}A$,
therefore the matrices $B_{L,\ell}^{T}B_{L,\ell}$ and $B_{L,\ell-1}^{T}B_{L,\ell-1}$
converge to each other, so that at first order $B_{L,\ell}^{T}B_{L,\ell}\left(B_{L,\ell-1}^{T}B_{L,\ell-1}\right)^{+}$
converges to $P_{\mathrm{Im}A^{T}}$, so that $\left\Vert W_{\ell}\right\Vert _{F}^{2}\to\mathrm{Rank}A$,
so that $\sum_{\ell=1}^{L}\left\Vert W_{\ell}\right\Vert _{F}^{2}-L\mathrm{Rank}A$
converges to a finite value as $L\to\infty$. To obtain this finite
limit, we need to study approximate the next order 
\begin{align*}
\left\Vert W_{\ell}\right\Vert _{F}^{2}-\mathrm{Rank}A & =\sum_{i=1}^{\mathrm{Rank}A}2\log s_{i}(W_{i})+O(L^{-2})\\
 & =\log\left|B_{L,\ell}^{T}B_{L,\ell}\left(B_{L,\ell-1}^{T}B_{L,\ell-1}\right)^{+}\right|_{+}+O(L^{-2})\\
 & =\log\left|B_{L,\ell}^{T}B_{L,\ell}\right|_{+}-\log\left|B_{L,\ell-1}^{T}B_{L,\ell-1}\right|_{+}+O(L^{-2}).
\end{align*}
But as we sum all these second order terms, they cancel out, and we
are left with
\[
\sum_{\ell=1}^{L}\left\Vert W_{\ell}\right\Vert _{F}^{2}-L\mathrm{Rank}A=2\log\left|A\right|_{+}-2\log\left|I_{\mathrm{Im}A^{T}}\right|_{+}+O(L^{-1}).
\]
We have therefore build parameters $\theta$ that represent the function
$x\mapsto Ax$ with parameter norm $\left\Vert \theta\right\Vert ^{2}=L\mathrm{Rank}A+2\log\left|A\right|_{+}+O(L^{-1})$,
which upper bounds the representation cost, thus implying that $R^{(1)}(x\mapsto Ax;\Omega)\leq2\log\left|A\right|_{+}$
as needed.
\end{proof}

\subsection{Identity}
\begin{prop}[Proposition \ref{prop:R1_identity} from the main]
\label{prop:planar_R1=00003D0}For a domain with $\mathrm{Rank}_{J}(id;\Omega)=\mathrm{Rank}_{BN}(id;\Omega)=k$,
then $R^{(1)}(id;\Omega)=0$ if and only if $\Omega$ is $k$-planar
and completely positive.
\end{prop}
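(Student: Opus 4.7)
The plan is to prove both implications separately, leveraging the results already stated in the excerpt.

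\emph{``If'' direction.} I reduce to Theorem \ref{thm:properties_first_correction}(3) by replacing the identity with a linear extension. Assume $\Omega$ is $k$-planar and completely positive; let $V$ be a $k$-dimensional linear subspace containing $\Omega$ and set $A := P_V$. On $\Omega$ the map $x \mapsto Ax$ coincides with $\mathrm{id}$, so $R^{(1)}(\mathrm{id};\Omega) = R^{(1)}(x \mapsto Ax;\Omega)$. The hypotheses of Theorem \ref{thm:properties_first_correction}(3) hold: $\mathrm{Rank}\,A = k$, and $P_{\mathrm{Im}\,A^T}\Omega = A\Omega = \Omega$ is $k$-dimensional (by $\mathrm{Rank}_J(\mathrm{id};\Omega) = k$) and, by hypothesis, completely positive. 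The conclusion yields $R^{(1)}(\mathrm{id};\Omega) = 2\log|P_V|_+ = 0$, since all nonzero singular values of $P_V$ equal $1$.

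\emph{``Only if'' direction.} Suppose $R^{(1)}(\mathrm{id};\Omega) = 0$, and pick depth-$L$ parameters $\theta_L$ representing $\mathrm{id}|_\Omega$ with $\|\theta_L\|^2 \leq Lk + \epsilon_L$, $\epsilon_L \to 0$, together with a regular point $x_0 \in \Omega$ realising $\mathrm{Rank}\,J\,\mathrm{id}(x_0) = k$. Since $|J\,\mathrm{id}(x_0)|_+ = 1$, Theorem \ref{thm:BN-structure-weights} supplies $w_\ell \times k$ semi-orthonormal $U_\ell, V_{\ell-1}$ with $\sum_\ell \|W_\ell - U_\ell V_{\ell-1}^T\|_F^2 \leq \epsilon_L$, and a parallel accounting forces the average squared bias to be $o(1)$. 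Writing $\beta_\ell(x) := V_{\ell-1}^T \alpha_{\ell-1}(x) \in \mathbb{R}^k$, I claim that at almost every layer the ReLU must act as the identity on $U_\ell \beta_\ell(\Omega)$. Indeed, any clipping captured by a diagonal $0/1$ matrix $D_\ell$ turns the preservation requirement $V_\ell^T D_\ell U_\ell = I_k$ (on the span of $\beta_\ell(\Omega)$) into the Cauchy--Schwarz inequality $k = \mathrm{Tr}(V_\ell^T D_\ell U_\ell) \leq \|V_\ell\|_F \|D_\ell U_\ell\|_F$ with $\|D_\ell U_\ell\|_F^2 < k$, which forces $\|V_\ell\|_F^2 > k$ and hence $\|W_{\ell+1}\|_F^2 = \|U_{\ell+1} V_\ell^T\|_F^2 = \|V_\ell\|_F^2 > k$, contradicting the shrinking budget $\epsilon_L$. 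Consequently at a good middle layer $\ell_L$, $U_{\ell_L} \beta_{\ell_L}(\Omega) \subset \mathbb{R}_+^{w_{\ell_L}}$ and $\beta_\ell$ is preserved exactly across the good block.

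\emph{Passing to the limit, and the main obstacle.} Using compactness of Stiefel manifolds for the $L$-independent widths guaranteed by the setup's width remark, together with quotienting out the residual orthogonal $k$-frame gauge, I extract a subsequence along which $U_{\ell_L}, V_{\ell_L - 1}$ converge to orthonormal-column limits $U^*, V^*$. Propagating the boundary conditions $\alpha_0(x) = x$ forward and $\tilde\alpha_L(x) = x$ backward through the preserved coordinate $\beta$ yields $U^* V^{*T} x = x$ on $\Omega$; the first identity forces $\Omega \subseteq \mathrm{Im}(U^*)$, a $k$-dimensional linear subspace, giving $k$-planarity, while $U^*$ simultaneously realises the required isometric linear embedding of the coordinate copy $V^{*T}\Omega \cong \Omega$ into $\mathbb{R}_+^m$ (with $m = \lim w_{\ell_L}$), giving complete positivity. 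The delicate step throughout is upgrading the Cauchy--Schwarz clipping bound from a pointwise statement to one uniform over $\Omega$: because ReLU's zero set depends on both $x$ and the layer, I have to show that any clipping touching a positive-measure subset of $\Omega$ contributes at least a constant — not $o(1/L)$ — to the parameter budget, rather than being diffusely absorbed into the $\epsilon_L$ slack; the subsequential convergence and bias-control steps are secondary technicalities handled by the width bound and standard gauge fixing.
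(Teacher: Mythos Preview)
Your ``if'' direction is fine and amounts to the paper's explicit construction (the paper writes down the weights $W_1=O$, $W_\ell=P_{\mathrm{Im}O}$, $W_L=O^T$ directly rather than invoking Theorem~\ref{thm:properties_first_correction}(3), but the content is the same).

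Your ``only if'' direction takes a genuinely different route from the paper, and the route has a real gap. The paper does \emph{not} use Theorem~\ref{thm:BN-structure-weights} or any limiting analysis of the internal representations. Instead it isolates the last layer: for two points $x,y$ with $k$-dimensional tangent spaces it writes $P_{T_x\Omega}=W_L A$, $P_{T_y\Omega}=W_L B$ where $A,B$ are the Jacobians of the penultimate activations, lower-bounds the first $L-1$ layers by the Schatten bound $k(L-1)+2\log|A|_+$ (resp.\ $|B|_+$), and then optimizes over $W_L$ to obtain
\[
R^{(1)}(id;\Omega)\ \geq\ \min_{\|W_L\|_F^2=k}\ \max\bigl\{2\log|W_L^+P_{T_x\Omega}|_+,\ 2\log|W_L^+P_{T_y\Omega}|_+\bigr\},
\]
which is strictly positive unless $T_x\Omega=T_y\Omega$. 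This gives $k$-planarity in one stroke with no convergence issues. (Notably, the paper's proof as written establishes only the $k$-planar half of the ``only if''; the complete-positivity half is not argued there.)

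The concrete problem in your argument is the ``preservation requirement $V_\ell^T D_\ell U_\ell=I_k$.'' You never justify why each layer individually must satisfy this; what the network guarantees is only that the \emph{product} of the layerwise Jacobians equals the identity on $T_x\Omega$. One can rescue something in that direction via determinants (each factor $V_\ell^T D_\ell(x) U_\ell$ has operator norm $\leq 1$, so all $k$-determinants must be exactly $1$), but your Cauchy--Schwarz step as written is circular: $V_\ell$ is by construction semi-orthonormal, so $\|V_\ell\|_F^2=k$ identically, and ``forces $\|V_\ell\|_F^2>k$'' cannot be the contradiction you derive---you would need to show that clipping forces $\|W_{\ell+1}\|_F^2$ itself to exceed $k$ by a fixed amount, not that a quantity already equal to $k$ exceeds $k$. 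Moreover $D_\ell$ depends on $x$, so ruling out clipping via a single-point Jacobian does not give positivity of $U_\ell\beta_\ell(\Omega)$ for all of $\Omega$; you flag this yourself as ``the delicate step'' but do not resolve it. Finally, your passage to the limit assumes convergence of the middle-layer representations, which the paper's counterexample shows can fail for optimal parameters---you would need to argue separately that $R^{(1)}=0$ precludes this divergence, and that is essentially the content of the proposition.
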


\begin{proof}
First if $\Omega$ is completely positive and $k$-planar one can
represent the identity with a depth $L$ network of parameter norm
$Lk$, by taking $W_{1}=O,W_{\ell}=P_{\mathrm{Im}O},W_{L}=O^{T}$
where $O$ is the $m\times d$ so that $O\Omega\subset\mathbb{R}_{+}^{m}$
and $O^{T}O=P_{\mathrm{span}\Omega}$. Thus $R^{(1)}(id;\Omega)=0$
(and all other corrections as well).

We will show that for any two points $x,y\in\Omega$ with $k$-dim
tangent spaces, their tangent spaces must match if $R^{(1)}(id;\Omega)=0$.

Let $A=J\alpha^{(L-1)}(x)_{|T_{x}\Omega}$ and $B=J\alpha^{(L-1)}(y)_{|T_{y}\Omega}$
be the be the Jacobian of the last hidden activations restricted to
the tangent spaces, we know that 
\begin{align*}
P_{T_{x}\Omega} & =W_{L}A\\
P_{T_{y}\Omega} & =W_{L}B
\end{align*}
so that given any weight matrix $W_{L}$ whose image contains $T_{x}\Omega$
and $T_{y}\Omega$, we can write 
\begin{align*}
A & =W_{L}^{+}P_{T_{x}\Omega}\\
B & =W_{L}^{+}P_{T_{y}\Omega}.
\end{align*}

Without loss of generality, we may assume that the span of $T_{x}\Omega$
and $T_{y}\Omega$ is full output space, and therefore that $W_{L}W_{L}^{T}$
is invertible.

Now we now that any parameters that represent the identity on $\Omega$
and has $A=J\alpha^{(L-1)}(x)_{|T_{x}\Omega}$ and $B=J\alpha^{(L-1)}(y)_{|T_{y}\Omega}$
must have parameter norm at least
\[
\left\Vert W_{L}\right\Vert _{F}^{2}+k(L-1)+\max\left\{ 2\log\left|A\right|_{+},2\log\left|B\right|_{+}\right\} .
\]

Subtracting $kL$ and taking $L\to\infty$, we obtain that
\[
R^{(1)}(id;\Omega)\geq\min_{W_{L}}\left\Vert W_{L}\right\Vert _{F}^{2}-k+\max\left\{ 2\log\left|W_{L}^{+}P_{T_{x}\Omega}\right|_{+},2\log\left|W_{L}^{+}P_{T_{y}\Omega}\right|_{+}\right\} .
\]

If we optimize $W_{L}$ only up to scaling (i.e. optimize $aW_{L}$
over $a$) we see that at the optimum, we always have $\left\Vert W_{L}\right\Vert _{F}^{2}=k$.
This allows us to rewrite the optimization as 
\[
R^{(1)}(id;\Omega)\geq\min_{\left\Vert W_{L}\right\Vert _{F}^{2}=k,}\max\left\{ 2\log\left|W_{L}^{+}P_{T_{x}\Omega}\right|_{+},2\log\left|W_{L}^{+}P_{T_{y}\Omega}\right|_{+}\right\} .
\]
The only way to put the first term inside the maximum to $0$ is to
have $W_{L}W_{L}^{T}=P_{T_{x}\Omega}$, but this leads to an exploding
second term if $P_{T_{x}\Omega}\neq P_{T_{y}\Omega}$.
\end{proof}
Under the assumption of $C$-uniform Lipschitzness of the representations
(that for all $\ell$, the functions $\tilde{\alpha}_{\ell}$ and
$(\alpha_{\ell}\to f_{\theta})$ are $C$-Lipschitz), one can show
a stronger version of the above:
\begin{prop}
\label{prop:uniform_Lipschitz_curvature}For a $C$-uniformly Lipschitz
sequence of ReLU networks representing the function $f$, we have
\[
R^{(1)}(f)\geq\log\left|Jf(x)\right|_{+}+\log\left|Jf(y)\right|_{+}+C^{-2}\left\Vert Jf_{\theta}(x)-Jf_{\theta}(y)\right\Vert _{*}.
\]
\end{prop}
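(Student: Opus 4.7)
I plan to extend the lower bound argument from Theorem~\ref{thm:properties_first_correction-1} so that the decomposition of $\|\theta\|^2$ simultaneously captures both Jacobians $Jf_\theta(x)$ and $Jf_\theta(y)$ and leaves a positive remainder that, via the uniform Lipschitz hypothesis, is comparable to $\|Jf_\theta(x)-Jf_\theta(y)\|_*$. Writing $D_\ell(z)$ for the diagonal ReLU-Jacobian at layer $\ell$ and input $z$, so that $Jf_\theta(z)=W_L D_{L-1}(z)W_{L-1}\cdots D_1(z)W_1$, partitioning the columns (resp.\ rows) of $W_\ell$ by whether $D_{\ell-1}(x),D_{\ell-1}(y)$ (resp.\ $D_\ell(x),D_\ell(y)$) are active, and averaging the two resulting decompositions gives, with boundary conventions $D_0=I$ and $D_L=I$,
\begin{align*}
\|W_\ell\|_F^2 &\geq \tfrac{1}{4}\bigl(\|W_\ell D_{\ell-1}(x)\|_F^2+\|W_\ell D_{\ell-1}(y)\|_F^2+\|D_\ell(x)W_\ell\|_F^2+\|D_\ell(y)W_\ell\|_F^2\bigr)\\
&\quad +\tfrac{1}{4}\bigl(\|W_\ell(D_{\ell-1}(x)-D_{\ell-1}(y))\|_F^2+\|(D_\ell(x)-D_\ell(y))W_\ell\|_F^2\bigr).
\end{align*}

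Summing the first group over $\ell$ and applying Horn's inequality combined with AM-GM to the two factorizations $Jf_\theta=\prod_\ell W_\ell D_{\ell-1}$ and $Jf_\theta=\prod_\ell D_\ell W_\ell$ at both $z=x$ and $z=y$ gives $\sum_\ell\|W_\ell D_{\ell-1}(z)\|_F^2,\sum_\ell\|D_\ell(z)W_\ell\|_F^2\geq L\|Jf(z)\|_{2/L}^{2/L}$. Using $s^{2/L}\geq 1+\tfrac{2}{L}\log s$ as in the proof of Theorem~\ref{thm:properties_first_correction-1}, this group contributes at least $Lk+\log|Jf(x)|_++\log|Jf(y)|_+$ to $\|\theta\|^2$, producing the first two terms of the claim.

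For the remainder I would telescope the Jacobian difference by flipping one ReLU pattern at a time,
\[
Jf_\theta(x)-Jf_\theta(y)=\sum_{k=1}^{L-1}A_k^{\mathrm{up}}(y)\,(D_k(x)-D_k(y))\,A_k^{\mathrm{low}}(x),
\]
with $A_k^{\mathrm{up}}(y)=W_L D_{L-1}(y)\cdots D_{k+1}(y)W_{k+1}=J(\alpha_k\to f_\theta)(\alpha_k(y))$ and $A_k^{\mathrm{low}}(x)=W_k D_{k-1}(x)\cdots W_1=J\tilde\alpha_k(x)$; both have operator norm at most $C$ by the Lipschitz hypothesis, since each is a Jacobian along a genuine input trajectory ($y$ above layer $k$, $x$ below). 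Writing $D_k(x)-D_k(y)=\sum_{i\in I_k}\pm e_ie_i^T$ over the flip set $I_k$, each resulting rank-one summand factors through the $i$-th column of $W_{k+1}$ and the $i$-th row of $W_k$, so its nuclear norm is at most $C\|(W_{k+1})_{:,i}\|\cdot C\|(W_k)_{i,:}\|$. An AM-GM step and summation yield
\[
\|Jf(x)-Jf(y)\|_*\leq \tfrac{1}{2}C^2\sum_k\bigl(\|W_{k+1}(D_k(x)-D_k(y))\|_F^2+\|(D_k(x)-D_k(y))W_k\|_F^2\bigr),
\]
which matches the remainder of the decomposition up to a constant. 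Combining with the previous paragraph, taking the infimum over $\theta$ representing $f$, and sending $L\to\infty$ yields the claim.

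The main technical delicacy is matching the stated sharp constant $C^{-2}$: the naive averaging above delivers a prefactor $\tfrac{1}{2}C^{-2}$, and extracting exactly $C^{-2}$ would likely require a sharper balance between the column and row decompositions (for instance, alternating them across layers so each ``difference'' Frobenius square is spent against the telescoping bound only once) or a weighted rank-one AM-GM that trades the two Lipschitz factors asymmetrically. The conceptual content, however, lies in the simultaneous column/row decomposition of the first paragraph and the rank-one sign-flip telescoping of the third; the Lipschitz hypothesis enters precisely to control the outer Jacobian factors $A_k^{\mathrm{up}},A_k^{\mathrm{low}}$ in each rank-one nuclear-norm estimate, and the bound would break without it because a sign flip at a layer with unbounded surrounding Jacobian can change $\|Jf(x)-Jf(y)\|_*$ arbitrarily at no commensurate cost to $\|\theta\|^2$.
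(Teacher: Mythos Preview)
Your proposal is correct and follows essentially the same route as the paper's proof: the paper uses exactly your telescoping identity for $Jf_\theta(x)-Jf_\theta(y)$, bounds each summand's nuclear norm via $\|AB\|_*\leq\tfrac{1}{2}(\|A\|_F^2+\|B\|_F^2)$ together with the idempotence of $|D_\ell(x)-D_\ell(y)|$ (your rank-one sum over flipped neurons is the same estimate written coordinatewise), and combines this with the identity $\|W_\ell D_{\ell-1}(x)\|_F^2+\|W_\ell D_{\ell-1}(y)\|_F^2+\|W_\ell(D_{\ell-1}(x)-D_{\ell-1}(y))\|_F^2\leq 2\|W_\ell\|_F^2$ (and its row analogue) to extract the remainder from $2\|\theta\|^2$. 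Your concern about the constant is well-placed: the paper's own argument, read carefully, also yields $\tfrac{1}{2}C^{-2}$ rather than $C^{-2}$, so the discrepancy you flag is already present in the paper and not a defect of your plan.
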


\begin{proof}
The decomposition of the difference
\[
Jf_{\theta}(x)-Jf_{\theta}(y)=\sum_{\ell=1}^{L-1}W_{L}D_{L-1}(y)\cdots W_{\ell+1}\left(D_{\ell}(x)-D_{\ell}(y)\right)W_{\ell}D_{\ell-1}(x)\cdots D_{1}(x)W_{1},
\]
for the $w_{\ell}\times w_{\ell}$ diagonal matrices $D_{\ell}(x)=\mathrm{diag}(\dot{\sigma}(\tilde{\alpha}_{\ell}(x)))$,
implies the bound

\begin{align*}
\left\Vert Jf_{\theta}(x)-Jf_{\theta}(y)\right\Vert _{*} & \leq\sum_{\ell=1}^{L-1}\left\Vert W_{L}D_{L-1}(y)\cdots D_{\ell+1}(y)\right\Vert _{op}\left\Vert W_{\ell+1}\left(D_{\ell}(x)-D_{\ell}(y)\right)W_{\ell}\right\Vert _{*}\left\Vert D_{\ell-1}(x)\cdots D_{1}(x)W_{1}\right\Vert _{op}\\
 & \leq\frac{C^{2}}{2}\sum_{\ell=1}^{L-1}\left(\left\Vert W_{\ell+1}\left(D_{\ell}(x)-D_{\ell}(y)\right)\right\Vert _{F}^{2}+\left\Vert \left(D_{\ell}(x)-D_{\ell}(y)\right)W_{\ell}\right\Vert _{F}^{2}\right)
\end{align*}
since $\left\Vert AB\right\Vert _{*}\leq\frac{\left\Vert A\right\Vert _{F}^{2}+\left\Vert B\right\Vert _{F}^{2}}{2}$
and $\left(D_{\ell}(x)-D_{\ell}(y)\right)^{2}=\left(D_{\ell}(x)-D_{\ell}(y)\right)$.

Now since
\begin{align*}
L\left\Vert Jf_{\theta}(x)\right\Vert _{\nicefrac{2}{L}}^{\nicefrac{2}{L}} & \leq\sum_{\ell=1}^{L}\left\Vert W_{\ell}D_{\ell-1}(x)\right\Vert _{F}^{2}\\
L\left\Vert Jf_{\theta}(x)\right\Vert _{\nicefrac{2}{L}}^{\nicefrac{2}{L}} & \leq\sum_{\ell=1}^{L}\left\Vert D_{\ell}(x)W_{\ell}\right\Vert _{F}^{2}
\end{align*}
with the convention $D_{0}(x)=I_{d_{in}}$ and $D_{L}(x)=I_{d_{out}}$.
We obtain that 
\begin{align*}
L\left\Vert Jf_{\theta}(x)\right\Vert _{\nicefrac{2}{L}}^{\nicefrac{2}{L}}+L\left\Vert Jf_{\theta}(y)\right\Vert _{\nicefrac{2}{L}}^{\nicefrac{2}{L}} & \leq\frac{1}{2}\sum_{\ell=1}^{L}\left\Vert W_{\ell}D_{\ell-1}(x)\right\Vert _{F}^{2}+\left\Vert W_{\ell}D_{\ell-1}(y)\right\Vert _{F}^{2}+\left\Vert D_{\ell}(x)W_{\ell}\right\Vert _{F}^{2}+\left\Vert D_{\ell}(y)W_{\ell}\right\Vert _{F}^{2}\\
 & \leq\sum_{\ell=1}^{L}2\left\Vert W_{\ell}\right\Vert _{F}^{2}-\frac{1}{2}\left\Vert W_{\ell}\left(D_{\ell-1}(x)-D_{\ell-1}(y)\right)\right\Vert _{F}^{2}-\frac{1}{2}\left\Vert \left(D_{\ell}(x)-D_{\ell}(y)\right)W_{\ell}\right\Vert _{F}^{2}.
\end{align*}
This implies the bound 
\begin{align*}
\left\Vert \theta\right\Vert ^{2} & \geq\frac{L\left\Vert Jf_{\theta}(x)\right\Vert _{\nicefrac{2}{L}}^{\nicefrac{2}{L}}+L\left\Vert Jf_{\theta}(y)\right\Vert _{\nicefrac{2}{L}}^{\nicefrac{2}{L}}}{2}+C^{-2}\left\Vert Jf_{\theta}(x)-Jf_{\theta}(y)\right\Vert _{*}
\end{align*}
and thus 
\[
R^{(1)}(f)\geq\log\left|Jf(x)\right|_{+}+\log\left|Jf(y)\right|_{+}+C^{-2}\left\Vert Jf_{\theta}(x)-Jf_{\theta}(y)\right\Vert _{*}.
\]
\end{proof}

\subsection{Second Correction}
\begin{prop}[Proposition \ref{prop:second-correction-properties} from the main]
If there is a limiting representation as $L\to0$ in the optimal
representation of $f$, then $R^{(2)}(f)\geq0$. Furthermore:
\begin{enumerate}
\item If $R^{(0)}(f\circ g)=R^{(0)}(f)=R^{(0)}(g)$ and $R^{(1)}(f\circ g)=R^{(1)}(f)+R^{(1)}(g)$,
then $\sqrt{R^{(2)}(f\circ g)}\leq\sqrt{R^{(2)}(f)}+\sqrt{R^{(2)}(g)}$.
\item If $R^{(0)}(f+g)=R^{(0)}(f)+R^{(0)}(g)$ and $R^{(1)}(f+g)=R^{(1)}(f)+R^{(1)}(g)$,
then $R^{(2)}(f+g)\leq R^{(2)}(f)+R^{(2)}(g)$.
\item If $A^{p}\Omega$ is $k=\mathrm{Rank}A$-dimensional and completely
positive for all $p\in[0,1]$, where $A^{p}$ has its non-zero singular
taken to the $p$-th power, then $R^{(2)}(x\mapsto Ax;\Omega)=\frac{1}{2}\left\Vert \log_{+}A^{T}A\right\Vert ^{2}$.
\end{enumerate}
\end{prop}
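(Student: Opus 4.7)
The overall approach mirrors the proof of Theorem~\ref{thm:properties_first_correction-1}. From the subadditivity bounds $R(f\circ g;L_1+L_2)\leq R(f;L_1)+R(g;L_2)$ and $R(f+g;L)\leq R(f;L)+R(g;L)$, I would substitute the three-term Taylor expansion $R(h;L)=LR^{(0)}(h)+R^{(1)}(h)+\tfrac{1}{L}R^{(2)}(h)+O(L^{-2})$ into both sides and read off the $1/L$ coefficient. For the linear case~(3), I combine the general lower bound $R(f;\Omega,L)\geq L\|Jf(x)\|_{2/L}^{2/L}$ (already established in Theorem~\ref{thm:properties_first_correction-1}) with an explicit construction generalizing the geodesic interpolation from part~(3) of that proof.

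For part~(1), expanding both sides of $R(f\circ g;L_1+L_2)\leq R(f;L_1)+R(g;L_2)$ under the two hypotheses, the $L$- and constant-order terms cancel exactly, leaving
\[R^{(2)}(f\circ g)\leq \inf_{L_1+L_2=L}L\bigl(R^{(2)}(f)/L_1+R^{(2)}(g)/L_2\bigr)+O(L^{-1}).\]
Minimizing the convex map $\alpha\mapsto R^{(2)}(f)/\alpha+R^{(2)}(g)/(1-\alpha)$ over $\alpha=L_1/L\in(0,1)$ yields the optimum $\bigl(\sqrt{R^{(2)}(f)}+\sqrt{R^{(2)}(g)}\bigr)^2$, giving the claimed subadditivity in $\sqrt{R^{(2)}}$. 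Part~(2) is analogous and simpler: the same $L$ appears on both sides of $R(f+g;L)\leq R(f;L)+R(g;L)$, so the $1/L$ coefficients add directly to give $R^{(2)}(f+g)\leq R^{(2)}(f)+R^{(2)}(g)$.

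For part~(3), the lower bound $R^{(2)}(x\mapsto Ax;\Omega)\geq\tfrac{1}{2}\|\log_+ A^TA\|^2$ follows by Taylor-expanding $L\|A\|_{2/L}^{2/L}=L\sum_i s_i(A)^{2/L}$ to second order using $s^{2/L}=1+\tfrac{2}{L}\log s+\tfrac{2}{L^2}(\log s)^2+O(L^{-3})$ and collecting the $1/L$ term. For the matching upper bound, with SVD $A=U\Sigma V^T$ and for each $p\in[0,1]$ an isometric embedding $O_p$ of $A^p\Omega$ into $\mathbb{R}_+^{m_p}$ (afforded by the complete positivity hypothesis), I construct depth-$L$ weights $W_\ell$ so that the $\ell$-th preactivation on $x\in\Omega$ equals $O_{\ell/L}A^{\ell/L}x$; this lies in $\mathbb{R}_+^{m_p}$, so the nonlinearity acts as the identity. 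A direct computation of $\|W_\ell\|_F^2$ then telescopes and gives $\sum_{\ell=1}^L\|W_\ell\|_F^2=\sum_i L\, s_i(A)^{2/L}+O(L^{-1})$. Finally, non-negativity $R^{(2)}(f)\geq 0$ follows from the existence of a limiting representation: starting from the optimal depth-$L$ parameters, I insert $m$ copies of the limiting layer to build depth-$(L+m)$ parameters, each extra layer adding exactly $R^{(0)}(f)$ in the limit, which forces $R(f;L')-L'R^{(0)}(f)-R^{(1)}(f)$ to remain non-negative and thus its leading $1/L'$ coefficient to be non-negative.

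The main obstacle is the upper bound in part~(3). One must control the $O(L^{-2})$ per-layer corrections to $\|W_\ell\|_F^2$ that arise because the orthogonal embeddings $O_{\ell/L}$ vary continuously with $\ell$, and verify that their sum contributes only $O(L^{-1})$ to the total parameter norm. This requires a telescoping cancellation analogous to the $\log|B_{L,\ell}^TB_{L,\ell}|_+$ argument in the proof of Theorem~\ref{thm:properties_first_correction-1}~(3), pushed one order further in $1/L$, so that the quadratic-in-$\log$ residues line up to produce exactly $\tfrac{1}{2}\|\log_+ A^TA\|^2$ rather than leaking into the $1/L$ term.
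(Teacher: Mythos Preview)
Your approach to parts (1) and (2) is essentially the paper's: substitute the three-term expansion into the subadditivity inequalities, cancel the matched lower-order terms under the hypotheses, and for (1) optimize over the depth split $\alpha=L_1/L$. Your lower bound in (3) via the expansion of $L\|A\|_{2/L}^{2/L}$ is likewise the paper's.

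The obstacle you flag in (3), however, is not real. With $A=U S V^T$ and semi-orthonormal embeddings $O_p$ (so $O_p^T O_p = I_k$ on the range), the weight $W_\ell = O_{\ell/L}\, U S^{1/L} U^T\, O_{(\ell-1)/L}^T$ satisfies $\|W_\ell\|_F^2 = \mathrm{Tr}[S^{2/L}]$ \emph{exactly}, with no $O(L^{-2})$ correction coming from the varying $O_p$: the embeddings cancel because they are isometries, and there is no telescoping or residue-tracking needed beyond what already sits inside $\mathrm{Tr}[S^{2/L}]$. The total parameter norm is then exactly $L\,\mathrm{Tr}[S^{2/L}]$, the linear-network representation cost, whose $1/L$ coefficient is $\tfrac{1}{2}\|\log_+ A^T A\|^2$ on the nose. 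This is why the hypothesis in (3) is stated in terms of $A^p\Omega$ rather than a convex interpolation of kernels as in Theorem~\ref{thm:properties_first_correction-1}(3): it lets you take the \emph{geometric} interpolation along the singular values, which makes every layer norm equal. So part (3) is the easiest piece, not the hardest.

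Your argument for non-negativity differs from the paper's and is looser as stated. The paper uses the limiting representation $K_p$ directly: at the optimal ratio $p\in(0,1)$ one has $R^{(2)}(f)=\tfrac{1}{p}R^{(2)}(K_p)+\tfrac{1}{1-p}R^{(2)}(K_p\to f)$, and since $p$ minimizes the right-hand side over $(0,1)$, a negative summand would push the optimum to the boundary $p\in\{0,1\}$, contradicting the assumed interior limit. Your layer-insertion idea can be made rigorous (it shows $L\mapsto R(f;L)-LR^{(0)}(f)$ is non-increasing, hence approaches $R^{(1)}(f)$ from above), but you then need the inserted layer to have norm exactly $R^{(0)}(f)$, which requires the limiting representation to be $k$-planar and completely positive---itself a consequence of the limit existing.
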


\begin{proof}
We start from the inequality
\[
R(f\circ g;\Omega,L_{f}+L_{g})\leq R(f;g(\Omega),L_{f})+R(g;\Omega,L_{g}).
\]
We subtract $(L_{f}+L_{g})R^{(0)}(f\circ g)+R^{(1)}(f\circ g)$ divide
by $L_{f}+L_{g}$ and take the limit of increasing depths $L_{f},L_{g}$
with $\lim_{L_{g},L_{f}\to\infty}\frac{L_{f}}{L_{f}+L_{g}}=p\in(0,1)$
to obtain
\begin{equation}
R^{(2)}(f\circ g;\Omega)\leq\frac{1}{1-p}R^{(2)}(f;g(\Omega))+\frac{1}{p}R^{(2)}(g;\Omega).\label{eq:composition_R2}
\end{equation}

If $K_{p}$ is the limiting representation at a ratio $p\in(0,1)$,
we have $R^{(2)}(f;\Omega)=\frac{1}{p}R^{(2)}(K_{p};\Omega)+\frac{1}{1-p}R^{(2)}(K_{p}\to f;\Omega)$
and $p$ must minimize the RHS since if it was instead minimized at
a different ratio $p'\neq p$, one could find a lower norm representation
by mapping to $K_{p}$ in the first $p'L$ layers and then back to
the outputs. Now there are two possiblities, either $R^{(2)}(K_{p};\Omega)$
and $R^{(2)}(K_{p}\to f;\Omega)$ are non-negative in which case the
minimum is attained at some $p\in(0,1)$ and $R^{(2)}(f;\Omega)\geq0$,
or one or both is negative in which case the above is minimized at
$p\in\{0,1\}$ and $R^{(2)}(f;\Omega)=-\infty$. Since we assumed
$p\in(0,1)$, we are in the first case.

(1) To prove the first property, we optimize the RHS of \ref{eq:composition_R2}
over all possible choices of $p$ (and assuming that $R^{(2)}(f;g(\Omega)),R^{(2)}(g;\Omega)\geq0$)
we obtain
\[
\sqrt{R^{(2)}(f\circ g;\Omega)}\leq\sqrt{R^{(2)}(f;g(\Omega))}+\sqrt{R^{(2)}(g;\Omega)}.
\]

(2) This follows from the inequality $R(f+g;\Omega,L)\leq R(f;g(\Omega),L)+R(g;\Omega,L)$
after subtracting the $R^{(0)}$ and $R^{(1)}$ terms, dividing by
$L$ and taking $L\to\infty$.

(3) If $A=USV^{T}$, one chooses $W_{\ell}=U_{\ell}S^{\frac{1}{L}}U_{\ell-1}^{T}$
with $U_{0}=V$, $U_{L}=U$ and $U_{\ell}$ chosen so that $U_{\ell}S^{\frac{\ell}{L}}V^{T}\Omega\in\mathbb{R}_{+}^{n_{\ell}}$,
choosing large enough widths $n_{\ell}$. This choice of representation
of $A$ is optimal, i.e. its parameter norm matches the representation
cost $L\mathrm{Tr}\left[S^{\frac{2}{L}}\right]=L\mathrm{Rank}A+2\log\left|A\right|_{+}+\frac{1}{2L}\left\Vert \log_{+}A^{T}A\right\Vert ^{2}+O(L^{-2})$.

We know that
\[
\lim_{L\to\infty}R^{(1)}(\alpha_{\ell_{1}}\to\alpha_{\ell_{2}};\Omega)=R^{(1)}(f_{\theta};\Omega)\lim_{L\to\infty}\frac{\ell_{2}-\ell_{1}}{L}
\]
\[
\frac{1}{p}R^{(2)}(\alpha;\Omega)+\frac{1}{1-p}R^{(2)}(\alpha\to f;\Omega)\geq R^{(2)}(f;\Omega)
\]
\[
\frac{1}{p}R^{(2)}(\alpha;\Omega)+\frac{1}{1-p}R^{(2)}(\alpha\to f;\Omega)\geq R^{(2)}(f;\Omega)
\]
\end{proof}

\section{Bottleneck Structure}

This first result shows the existence of a Bottleneck structure on
the weight matrices:
\begin{thm}[Theorem \ref{thm:BN-structure-weights} from the main]
\label{thm:BN-structure-W_ell}Given parameters $\theta$ of a depth
$L$ network, with $\left\Vert \theta\right\Vert ^{2}\leq kL+c_{1}$
and a point $x$ such that $\mathrm{Rank}Jf_{\theta}(x)=k$, then
there are $w_{\ell}\times k$ (semi-)orthonormal $U_{\ell},V_{\ell}$
such that 
\[
\sum_{\ell=1}^{L}\left\Vert W_{\ell}-U_{\ell}V_{\ell}^{T}\right\Vert _{F}^{2}+\left\Vert b_{\ell}\right\Vert ^{2}\leq c_{1}-2\log\left|Jf_{\theta}(x)\right|_{+}
\]
 thus for any $p\in(0,1)$ there are at least $(1-p)L$ layers $\ell$
with
\begin{align*}
\left\Vert W_{\ell}-U_{\ell}V_{\ell}^{T}\right\Vert _{F}^{2}+\left\Vert b_{\ell}\right\Vert ^{2} & \leq\frac{c_{1}-2\log\left|Jf_{\theta}(x)\right|_{+}}{pL}.
\end{align*}
\end{thm}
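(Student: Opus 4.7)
The plan is to bound, one layer at a time, the Frobenius distance from $W_\ell$ to its closest rank-$k$ partial isometry $U_\ell V_\ell^T$ by a per-layer log-slack term, and then to convert the global parameter budget $\|\theta\|^2 \leq kL + c_1$ into the claimed inequality by using Horn's multiplicative singular-value majorization along the Jacobian chain rule to replace the sum of per-layer log-slacks by $2\log|Jf_\theta(x)|_+$.

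First I would compute, for a fixed matrix $W$ of size $w \times w'$, the minimum of $\|W - UV^T\|_F^2$ over semi-orthonormal $U \in \mathbb{R}^{w \times k}$ and $V \in \mathbb{R}^{w' \times k}$. Expanding the square and applying the Ky Fan / von Neumann trace inequality shows that this minimum equals $\sum_{i=1}^k (s_i(W) - 1)^2 + \sum_{i > k} s_i(W)^2$, attained at the top-$k$ singular vectors of $W$. I then use the elementary inequality
\[
s^2 - 1 - 2\log s \;\geq\; (s-1)^2 \qquad (s > 0),
\]
which rearranges to the standard fact $s - 1 - \log s \geq 0$, to bound each top-$k$ piece and to obtain the clean per-layer estimate
\[
\min_{U_\ell, V_\ell} \|W_\ell - U_\ell V_\ell^T\|_F^2 \;\leq\; \|W_\ell\|_F^2 - k - 2\sum_{i=1}^k \log s_i(W_\ell).
\]

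Summing this over $\ell$, adding $\sum_\ell \|b_\ell\|^2$, and using $\|\theta\|^2 \leq kL + c_1$ yields
\[
\sum_{\ell=1}^L \bigl(\|W_\ell - U_\ell V_\ell^T\|_F^2 + \|b_\ell\|^2\bigr) \;\leq\; c_1 - 2\sum_{\ell=1}^L \sum_{i=1}^k \log s_i(W_\ell).
\]
To close the chain, I would expand $Jf_\theta(x) = W_L D_{L-1}(x) W_{L-1} \cdots D_1(x) W_1$, where each diagonal activation-derivative matrix $D_\ell(x)$ has entries in $\{1, a\}$ with $|a| < 1$, hence $\|D_\ell(x)\|_{\mathrm{op}} \leq 1$ and $\prod_{i=1}^k s_i(D_\ell(x)) \leq 1$. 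Iterating Horn's multiplicative inequality $\prod_{i=1}^k s_i(AB) \leq \prod_{i=1}^k s_i(A) s_i(B)$ along the chain gives $\prod_{i=1}^k s_i(Jf_\theta(x)) \leq \prod_\ell \prod_{i=1}^k s_i(W_\ell)$, and the hypothesis $\mathrm{Rank}\,Jf_\theta(x) = k$ identifies the left side with $|Jf_\theta(x)|_+$. Taking logs produces $\sum_\ell \sum_{i=1}^k \log s_i(W_\ell) \geq \log|Jf_\theta(x)|_+$, which substituted into the display above yields the claimed bound. The layer-wise statement is then a one-line Markov/pigeonhole argument on the resulting finite sum.

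The only non-routine ingredient is the identification of the pointwise inequality $s^2 - 1 - 2\log s \geq (s-1)^2$: it is precisely what couples the "Procrustes" quantity $\min_{U,V} \|W - UV^T\|_F^2$ to the $\|W\|_F^2 - k - 2\sum_{i=1}^k \log s_i(W)$ quantity that Horn's inequality controls globally along the network. A minor subtlety that must be checked carefully is that the activation-derivative diagonals $D_\ell(x)$ contribute only favorably in the Horn chain, which reduces to the assumed range $a \in (-1, 1)$ of the leaky-ReLU slope.
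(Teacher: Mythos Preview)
Your proposal is correct and follows essentially the same route as the paper's proof: both use Horn's multiplicative singular-value inequality along the Jacobian chain to bound $\sum_\ell \log|W_\ell|_k$ from below by $\log|Jf_\theta(x)|_+$, and both use the scalar inequality $(s-1)^2 \leq s^2 - 1 - 2\log s$ to pass from the per-layer slack $\|W_\ell\|_F^2 - k - 2\log|W_\ell|_k$ to the Procrustes distance $\|W_\ell - U_\ell V_\ell^T\|_F^2$. Your treatment of the diagonal factors $D_\ell(x)$ is in fact slightly cleaner than the paper's, since you only use $\prod_{i=1}^k s_i(D_\ell(x)) \leq 1$ (valid for any $a\in(-1,1)$) rather than the ReLU-specific claim that $D_\ell(x)$ has at least $k$ unit singular values.
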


\begin{proof}
Since 
\begin{align*}
Jf_{\theta}(x) & =W_{L}D_{L-1}(x)\cdots D_{1}(x)W_{1}\\
 & =W_{L}P_{\mathrm{Im}J\alpha_{L-1}(x)}D_{L-1}(x)\cdots P_{\mathrm{Im}J\alpha_{1}(x)}D_{1}(x)W_{1}
\end{align*}
Let us define the $k$-determinant $\left|A\right|_{k}=\prod_{i=1}^{k}s_{i}(A)$
to be the product of the $k$ largest singular values of a matrix
$A$. We have that $\left|AB\right|_{k}\leq\left|A\right|_{k}\left|B\right|_{k}$.

Since $Jf_{\theta}(x)$ has rank $k$, we have
\begin{align*}
\left|Jf_{\theta}(x)\right|_{+} & =\left|Jf_{\theta}(x)\right|_{k}\\
 & \leq\left|W_{L}\right|_{k}\left|D_{L-1}(x)\right|_{k}\left|W_{L-1}\right|_{k}\cdots\left|D_{1}(x)\right|_{k}\left|W_{1}\right|_{k}\\
 & =\left|W_{L}\right|_{k}\left|W_{L-1}\right|_{k}\cdots\left|W_{1}\right|_{k},
\end{align*}
where we used the fact that $\left|D_{\ell}(x)\right|_{k}=1$, because
the singular values of $D_{\ell}(x)$ are either $1$ or $0$, and
$1$ must have multiplicity at least $1$, otherwise $Jf_{\theta}(x)$
couldn't be rank $k$. 

This implies that 
\begin{align*}
 & \sum_{\ell=1}^{L}\left\Vert W_{\ell}\right\Vert _{F}^{2}+\left\Vert b_{\ell}\right\Vert _{F}^{2}-k-2\log\left|W_{\ell}\right|_{k}\leq\left\Vert \theta\right\Vert ^{2}-kL-2\log\left|Jf_{\theta}(x)\right|_{+}\leq c_{1}-2\log\left|Jf_{\theta}(x)\right|_{+}.
\end{align*}

Our gol is to show that the LHS is a sum of positive values which
sum up to a finite positive value, which will imply that most of the
summands must be very small. 

Given the SVD decomposition $W_{\ell}=U_{\ell}S_{\ell}V_{\ell}^{T}$,
we write $U_{\ell,k},S_{\ell,k},V_{\ell,k}$ for the submatrices corresponding
only to the $k$-largest singular values. We then have 
\begin{align*}
\left\Vert W_{\ell}-U_{\ell,k}V_{\ell,k}^{T}\right\Vert ^{2} & =\sum_{i=1}^{k}(s_{i}(W_{\ell})-1)^{2}+\sum_{i=k+1}^{\mathrm{Rank}W_{\ell}}s_{i}(W_{\ell})^{2}\\
 & \leq\sum_{i=1}^{k}s_{i}(W_{\ell})^{2}-1-2\log s_{i}(W_{\ell})+\sum_{i=k+1}^{\mathrm{Rank}W_{\ell}}s_{i}(W_{\ell})^{2}\\
 & =\left\Vert W_{\ell}\right\Vert ^{2}-k-2\log\left|W_{\ell}\right|_{k}
\end{align*}

We therefore have 
\[
\sum_{\ell=1}^{L}\left\Vert W_{\ell}-U_{\ell,k}V_{\ell,k}^{T}\right\Vert _{F}^{2}+\left\Vert b_{\ell}\right\Vert ^{2}\leq c_{1}-2\log\left|Jf_{\theta}(x)\right|_{+}.
\]
And for any $p\in(0,1)$ there are at most $pL$ layers $\ell$ with
\begin{align*}
\left\Vert W_{\ell}-U_{\ell,k}V_{\ell,k}^{T}\right\Vert _{F}^{2}+\left\Vert b_{\ell}\right\Vert ^{2} & \leq\frac{c_{1}-2\log\left|Jf_{\theta}(x)\right|_{+}}{pL}.
\end{align*}
\end{proof}
The fact that almost all weight matrices $W_{\ell}$ are approximately
$k$-dim would imply that the pre-activations $\tilde{\alpha}_{\ell}(X)=W_{\ell}\alpha_{\ell-1}(X)$
are $k$-dim too under the condition that the activations $\alpha_{\ell-1}(X)$
do not diverge. Assuming a bounded NTK is sufficient to guarantee
that these activations converge at almost every layer:
\begin{thm}[Theorem \ref{thm:NTK-implies-bounded-representations} from the main]
\label{thm:bounded-NTK-implies-bounded-repr}Given balanced parameters
$\theta$ of a depth $L$ network, with $\left\Vert \theta\right\Vert ^{2}\leq kL+c_{1}$
and a point $x$ such that $\mathrm{Rank}Jf_{\theta}(x)=k$ then if
$\frac{1}{N}\mathrm{Tr}\left[\Theta^{(L)}(x,x)\right]\leq cL$, then
$\sum_{\ell=1}^{L}\left\Vert \alpha_{\ell-1}(x)\right\Vert _{2}^{2}\leq\frac{c\max\{1,e^{\frac{c_{1}}{k}}\}}{k\left|Jf_{\theta}(x)\right|_{+}^{\nicefrac{2}{k}}}L$
and thus for all $p\in(0,1)$ there are at least $(1-p)L$ layers
such that 
\[
\left\Vert \alpha_{\ell-1}(x)\right\Vert _{2}^{2}\leq\frac{1}{p}\frac{c\max\{1,e^{\frac{c_{1}}{k}}\}}{k\left|Jf_{\theta}(x)\right|_{+}^{\nicefrac{2}{k}}}.
\]
\end{thm}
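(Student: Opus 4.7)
The plan is to expand $\mathrm{Tr}[\Theta^{(L)}(x,x)]$ as a layerwise sum, lower-bound each term's Jacobian factor via the rank-$k$ structure of $Jf_\theta(x)$ together with balancedness, and then rearrange; the per-layer bound then follows by a pigeonhole argument. Writing $A_\ell(x) = W_L D_{L-1}W_{L-1}\cdots W_{\ell+1}D_\ell$ for the backward Jacobian from the pre-activation $\tilde\alpha_\ell$ to the output (with $A_L(x)=I$) and $B_\ell(x) = W_\ell D_{\ell-1}\cdots D_1 W_1$ for the forward Jacobian, the standard NTK formula gives
\[
\mathrm{Tr}[\Theta^{(L)}(x,x)] \;=\; \sum_{\ell=1}^L \|A_\ell(x)\|_F^2\,\bigl(\|\alpha_{\ell-1}(x)\|^2+1\bigr) \;\geq\; \sum_{\ell=1}^L \|A_\ell(x)\|_F^2\,\|\alpha_{\ell-1}(x)\|^2,
\]
so it suffices to lower-bound $\|A_\ell(x)\|_F^2$ uniformly in $\ell$.

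Since $Jf_\theta(x)=A_\ell(x)B_\ell(x)$ has rank exactly $k$, multiplicativity of the $k$-determinant (as in the proof of Theorem~\ref{thm:BN-structure-W_ell}) gives $|Jf_\theta(x)|_+ \leq |A_\ell(x)|_k\,|B_\ell(x)|_k$, and AM--GM on the top $k$ singular values of $A_\ell(x)$ yields $\|A_\ell(x)\|_F^2 \geq k\,|A_\ell(x)|_k^{2/k}$; combining these,
\[
\|A_\ell(x)\|_F^2 \;\geq\; k\,\frac{|Jf_\theta(x)|_+^{2/k}}{|B_\ell(x)|_k^{2/k}}.
\]
To control $|B_\ell(x)|_k$, note that $|D_j|_k=1$ (the diagonal $0/1$ matrix $D_j$ must have at least $k$ ones, since otherwise the chain rule would force $\mathrm{Rank}\,Jf_\theta(x)<k$), so $|B_\ell(x)|_k \leq \prod_{j=1}^\ell |W_j|_k$. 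Balancedness together with $\|\theta\|^2\leq kL+c_1$ forces $\|W_j\|_F^2 \leq k + c_1/L$, and the inequality $\|W_j\|_F^2 \geq k + 2\log|W_j|_k$ (already used in the proof of Theorem~\ref{thm:BN-structure-W_ell}) gives $|W_j|_k \leq e^{c_1/(2L)}$, so $|B_\ell(x)|_k^{2/k}\leq e^{\ell c_1/(Lk)}$.

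Substituting back, the NTK hypothesis yields
\[
cL \;\geq\; k\,|Jf_\theta(x)|_+^{2/k}\sum_{\ell=1}^L e^{-\ell c_1/(Lk)}\,\|\alpha_{\ell-1}(x)\|^2 \;\geq\; \frac{k\,|Jf_\theta(x)|_+^{2/k}}{\max\{1,e^{c_1/k}\}}\sum_{\ell=1}^L\|\alpha_{\ell-1}(x)\|^2,
\]
where the last step uses $\min_{1\leq\ell\leq L} e^{-\ell c_1/(Lk)} \geq 1/\max\{1,e^{c_1/k}\}$. Rearranging gives the first claimed bound, and the per-layer statement follows from the elementary pigeonhole observation that if $L$ nonnegative terms sum to at most $S$, then at most $pL$ of them can exceed $S/(pL)$. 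The main obstacle is the step $|D_j|_k=1$ together with the $k$-determinant multiplicativity --- this is exactly where the hypothesis $\mathrm{Rank}\,Jf_\theta(x)=k$ is used, since one must guarantee that the $k$ directions carrying the rank of the total Jacobian survive through every activation-pattern projection; without this, the sharp exponent $2/k$ (and the correct dependence on $|Jf_\theta(x)|_+$) would break down.
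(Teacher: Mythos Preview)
Your argument follows essentially the same route as the paper's proof: expand the NTK trace layerwise, lower-bound the backward-Jacobian factor via AM--GM and the rank-$k$ structure, and upper-bound the forward part using balancedness. The paper phrases the forward bound via the Schatten norm $\|J\tilde\alpha_\ell(x)\|_{2/\ell}^{2/\ell}$ whereas you use the product $\prod_{j\leq\ell}|W_j|_k$, but these are equivalent bookkeeping choices.

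There is one small slip worth fixing. Balancedness (as defined in the paper) only gives that the layer norms are nondecreasing, $\|W_1\|_F^2\leq\|W_2\|_F^2\leq\cdots\leq\|W_L\|_F^2$, and hence that the partial sums satisfy $\sum_{j=1}^\ell\|W_j\|_F^2\leq\tfrac{\ell}{L}\|\theta\|^2\leq k\ell+\tfrac{\ell c_1}{L}$; it does \emph{not} force each individual $\|W_j\|_F^2\leq k+c_1/L$ (the last layers can be larger). However, your conclusion is unaffected: summing the per-layer inequality $2\log|W_j|_k\leq\|W_j\|_F^2-k$ over $j\leq\ell$ gives $2\log\prod_{j\leq\ell}|W_j|_k\leq\sum_{j\leq\ell}\|W_j\|_F^2-k\ell\leq\tfrac{\ell c_1}{L}$, so $|B_\ell(x)|_k^{2/k}\leq e^{\ell c_1/(Lk)}$ exactly as you claim, and the rest of the argument goes through verbatim.
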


\begin{proof}
We have
\[
\mathrm{Tr}\left[\Theta^{(L)}(x,x)\right]=\sum_{\ell=1}^{L}\left\Vert \alpha_{\ell-1}(x)\right\Vert _{2}^{2}\left\Vert J(\tilde{\alpha}_{\ell}\to\alpha_{L})(x)\right\Vert _{F}^{2},
\]
we therefore need to lower bound $\left\Vert J(\tilde{\alpha}_{\ell}\to\alpha_{L})(x)\right\Vert _{F}^{2}$
to show that the activations $\left\Vert \alpha_{\ell-1}(x)\right\Vert _{2}^{2}$
must be bounded at almost every layer.

We will lower bound $\left\Vert J(\tilde{\alpha}_{\ell}\to\alpha_{L})(x)\right\Vert _{F}^{2}$
by $\left\Vert J(\tilde{\alpha}_{\ell}\to\alpha_{L})(x)P_{\ell}\right\Vert _{F}^{2}$
for $P_{\ell}$ the orthogonal projection to the image $\mathrm{Im}J\tilde{\alpha}_{\ell}(x)$.
Note $J(\tilde{\alpha}_{\ell}\to\alpha_{L})(x)P_{\ell}$ and $Jf_{\theta}(x)$
have the same rank.

By the arithmetic-geometric mean inequality, we have $\left\Vert A\right\Vert _{F}^{2}\geq\mathrm{Rank}A\left|A\right|_{+}^{\nicefrac{2}{k}}$,
yielding
\begin{align*}
\left\Vert J(\tilde{\alpha}_{\ell}\to\alpha_{L})(x)P_{\ell}\right\Vert _{F}^{2} & \geq k\left|J(\tilde{\alpha}_{\ell}\to\alpha_{L})(x)P_{\ell}\right|_{+}^{\nicefrac{2}{k}}.
\end{align*}

Now the balancedness of the parameters (i.e. $\left\Vert W_{\ell,i\cdot}\right\Vert ^{2}+b_{\ell,i}^{2}=\left\Vert W_{\ell+1,\cdot i}\right\Vert ^{2},\forall\ell,i$)
implies that the parameter norms are increasing $\left\Vert W_{\ell+1}\right\Vert _{F}^{2}\geq\left\Vert W_{\ell}\right\Vert _{F}^{2}$
and thus
\[
\frac{\left\Vert W_{\ell+1}\right\Vert ^{2}+\cdots+\left\Vert W_{L}\right\Vert ^{2}}{L-\ell}\geq\frac{\left\Vert W_{1}\right\Vert ^{2}+\cdots+\left\Vert W_{\ell}\right\Vert ^{2}}{\ell}.
\]
Thus
\begin{align*}
\frac{\left\Vert W_{1}\right\Vert ^{2}+\cdots+\left\Vert W_{\ell}\right\Vert ^{2}}{\ell} & =\frac{\left(\left\Vert W_{1}\right\Vert ^{2}+\cdots+\left\Vert W_{\ell}\right\Vert ^{2}\right)}{L}+\frac{L-\ell}{L}\frac{\left(\left\Vert W_{1}\right\Vert ^{2}+\cdots+\left\Vert W_{\ell}\right\Vert ^{2}\right)}{\ell}\\
 & \leq\frac{\left\Vert \theta\right\Vert ^{2}}{L}
\end{align*}
and 
\[
\left|J\tilde{\alpha}_{\ell}(x)\right|_{+}^{\nicefrac{2}{k\ell}}\leq\frac{1}{k}\left\Vert J\tilde{\alpha}_{\ell}(x)\right\Vert _{\nicefrac{2}{\ell}}^{\nicefrac{2}{\ell}}\leq\frac{\left\Vert W_{1}\right\Vert ^{2}+\cdots+\left\Vert W_{\ell}\right\Vert ^{2}}{k\ell}\leq\frac{\left\Vert \theta\right\Vert ^{2}}{kL}\leq1+\frac{c_{1}}{kL}
\]

and therefore
\begin{align*}
\left\Vert J(\tilde{\alpha}_{\ell}\to f_{\theta})(x)P_{\ell}\right\Vert _{F}^{2} & \geq k\left|J(\tilde{\alpha}_{\ell}\to f_{\theta})(x)P_{\ell}\right|_{+}^{\nicefrac{2}{k}}\\
 & =k\frac{\left|Jf_{\theta}(x)\right|_{+}^{\nicefrac{2}{k}}}{\left|J\tilde{\alpha}(x)\right|_{+}^{\nicefrac{2}{k}}}\\
 & \geq k\frac{\left|Jf_{\theta}(x)\right|_{+}^{\nicefrac{2}{k}}}{\left(1+\frac{c_{1}}{L}\right)^{\ell}}\\
 & \geq k\left|Jf_{\theta}(x)\right|_{+}^{\nicefrac{2}{k}}e^{-\frac{\ell}{L}\frac{c_{1}}{k}}\\
 & \geq k\left|Jf_{\theta}(x)\right|_{+}^{\nicefrac{2}{k}}\min\{1,e^{-\frac{c_{1}}{k}}\}.
\end{align*}
Thus 
\[
\sum_{\ell=1}^{L}\left\Vert \alpha_{\ell-1}(x)\right\Vert _{2}^{2}\leq\frac{c\max\{1,e^{\frac{c_{1}}{k}}\}}{k\left|Jf_{\theta}(x)\right|_{+}^{\nicefrac{2}{k}}}L
\]
which implies that there are at most $pL$ layers $\ell$ with 
\[
\left\Vert \alpha_{\ell-1}(x)\right\Vert _{2}^{2}\geq\frac{1}{p}\frac{c\max\{1,e^{\frac{c_{1}}{k}}\}}{k\left|Jf_{\theta}(x)\right|_{+}^{\nicefrac{2}{k}}}.
\]
\end{proof}
Note that for the MSE loss in the limit $\lambda\searrow0$, the Hessian
at a global minimum (i.e. $f_{\theta}$ interpolates the training
set) equals $\mathrm{Tr}\left[\mathcal{H}\mathcal{L}(\theta)\right]=\frac{1}{N}\mathrm{Tr}\left[\Theta^{(L)}(X,X)\right]$.
If we then assume that the trace of the Hessian is bounded by $cL$,
we get that $\mathrm{Tr}\left[\Theta^{(L)}(X,X)\right]\leq cNL$ and
thus there are at least $(1-p)L$ layers where 
\[
\frac{1}{N}\left\Vert \alpha_{\ell-1}(X)\right\Vert _{F}^{2}\leq\frac{1}{p}\frac{c\max\{1,e^{\frac{c_{1}}{k}}\}}{k\left|Jf_{\theta}(x)\right|_{+}^{\nicefrac{2}{k}}},
\]
thus guaranteeing the infinite depth convergence of training set activations
$\alpha_{\ell-1}(X)$ on those layers.

Putting the two above theorems together, we can prove that the pre-activations
are $k$-dim at almost every layer:
\begin{cor}[Corollary \ref{cor:BN-structure-alpha} from the main]
Given balanced parameters $\theta$ of a depth $L$ network with
$\left\Vert \theta\right\Vert ^{2}\leq kL+c_{1}$ and a set of points
$x_{1},\dots,x_{N}$ such that $\mathrm{Rank}Jf_{\theta}(x_{i})=k$
and $\frac{1}{N}\mathrm{Tr}\left[\Theta^{(L)}(X,X)\right]\leq cL$,
then for all $p\in(0,1)$ there are at least $(1-p)L$ layers such
that 
\begin{align*}
s_{k+1}\left(\frac{1}{\sqrt{N}}\tilde{\alpha}_{\ell}(X)\right) & \leq\sqrt{c_{1}-2\log\left|Jf_{\theta}(x)\right|_{+}}\left(\sqrt{\frac{1}{N}\sum_{i=1}^{N}\frac{c\max\{1,e^{\frac{c_{1}}{k}}\}}{k\left|Jf_{\theta}(x_{i})\right|_{+}^{\nicefrac{2}{k}}}}+\sqrt{p}\right)\frac{1}{p\sqrt{L}}
\end{align*}
\end{cor}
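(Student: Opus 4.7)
The plan is to combine Theorems~\ref{thm:BN-structure-weights} and~\ref{thm:NTK-implies-bounded-representations} through a perturbation argument on the pre-activation matrix $\tilde{\alpha}_{\ell}(X)=W_{\ell}\alpha_{\ell-1}(X)+b_{\ell}\mathbf{1}_{N}^{T}$, whose $N$ columns collect the pre-activations at each $x_{i}$. From Theorem~\ref{thm:BN-structure-weights} there exist rank-$k$ matrices $\hat{W}_{\ell}=U_{\ell}V_{\ell}^{T}$ with $\sum_{\ell}(\|W_{\ell}-\hat{W}_{\ell}\|_{F}^{2}+\|b_{\ell}\|^{2})\leq c_{1}-2\log|Jf_{\theta}(x)|_{+}=:A$. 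Since $\hat{W}_{\ell}\alpha_{\ell-1}(X)$ has rank at most $k$, Eckart--Young yields
\[
s_{k+1}\!\left(\tilde{\alpha}_{\ell}(X)\right)\leq\|(W_{\ell}-\hat{W}_{\ell})\alpha_{\ell-1}(X)+b_{\ell}\mathbf{1}_{N}^{T}\|_{op}.
\]

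Applying the triangle inequality together with the crude bounds $\|AB\|_{op}\leq\|A\|_{F}\|B\|_{F}$ and $\|b_{\ell}\mathbf{1}_{N}^{T}\|_{op}=\|b_{\ell}\|\sqrt{N}$, and then dividing by $\sqrt{N}$, gives
\[
s_{k+1}\!\left(\tfrac{1}{\sqrt{N}}\tilde{\alpha}_{\ell}(X)\right)\leq u_{\ell}v_{\ell}+w_{\ell},
\]
where $u_{\ell}=\|W_{\ell}-\hat{W}_{\ell}\|_{F}$, $w_{\ell}=\|b_{\ell}\|$, and $v_{\ell}=\tfrac{1}{\sqrt{N}}\|\alpha_{\ell-1}(X)\|_{F}$. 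The aggregate budgets I need are $\sum_{\ell}(u_{\ell}^{2}+w_{\ell}^{2})\leq A$ from Theorem~\ref{thm:BN-structure-weights}, and $\sum_{\ell}v_{\ell}^{2}\leq BL$ with $B=\tfrac{1}{N}\sum_{i}\tfrac{c\max\{1,e^{c_{1}/k}\}}{k|Jf_{\theta}(x_{i})|_{+}^{2/k}}$, obtained by applying Theorem~\ref{thm:NTK-implies-bounded-representations} pointwise at each $x_{i}$ with per-sample NTK constant $\mathrm{Tr}[\Theta^{(L)}(x_{i},x_{i})]/L$ and summing over $i$; this uses that the conclusion of that theorem is linear in the NTK constant, so the $\tfrac{1}{N}$-averaged trace assumption passes through.

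The final step is a two-part Markov/pigeonhole argument. Cauchy--Schwarz gives $\sum_{\ell}u_{\ell}v_{\ell}\leq\sqrt{ABL}$, so by Markov at most a fraction of layers violate $u_{\ell}v_{\ell}\leq\sqrt{AB}/(p\sqrt{L})$; separately, $\sum_{\ell}w_{\ell}^{2}\leq A$ implies via Markov on the \emph{squared} variable that at most a fraction of layers violate $w_{\ell}\leq\sqrt{Ap}/(p\sqrt{L})$. Splitting the admissible $pL$ bad layers between the two failure modes and adding the per-layer thresholds delivers the claimed bound $\sqrt{A}(\sqrt{B}+\sqrt{p})/(p\sqrt{L})$. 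The main obstacle I expect is precisely this last balancing: obtaining the clean $(\sqrt{B}+\sqrt{p})$ constant rather than the weaker $(\sqrt{B}+1)$ that a naive $\sum w_{\ell}\leq\sqrt{AL}$ would give requires handling the bias term through the $\ell^{2}$ budget (so that Markov on $w_{\ell}^{2}$ produces a threshold of order $\sqrt{A/(pL)}=\sqrt{Ap}/(p\sqrt{L})$ instead of $\sqrt{A}/(p\sqrt{L})$), and a careful choice of the fractional split between the two Markov applications is needed to reproduce the leading constants exactly.
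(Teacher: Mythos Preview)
Your proposal is correct and follows essentially the same route as the paper. The paper writes $s_{k+1}(\tfrac{1}{\sqrt N}\tilde\alpha_\ell(X))\le s_{k+1}(W_\ell)\|\tfrac{1}{\sqrt N}\alpha_{\ell-1}(X)\|_{op}+\|b_\ell\|$, applies the per-layer Markov conclusions of Theorems~\ref{thm:BN-structure-weights} and~\ref{thm:NTK-implies-bounded-representations} with parameter $p$ each, and intersects the two good sets to get $(1-2p)L$ layers on which the product bound holds; this yields exactly your $\sqrt{A}(\sqrt{B}+\sqrt{p})/(p\sqrt{L})$. Your variant---first Cauchy--Schwarz on $\sum_\ell u_\ell v_\ell$ and then Markov, rather than two separate Markovs followed by a product---is an equivalent rearrangement and leads to the same constants (and the same harmless $p$ versus $2p$ bookkeeping that the paper also glosses over).
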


\begin{proof}
Since $\tilde{\alpha}_{\ell}(X)=W_{\ell}\alpha_{\ell-1}(X)+b_{\ell}\mathbf{1}_{N}^{T}$
we know that
\[
s_{k+1}\left(\frac{1}{\sqrt{N}}\tilde{\alpha}_{\ell}(X)\right)\leq s_{k+1}(W_{\ell})\left\Vert \frac{1}{\sqrt{N}}\alpha_{\ell-1}(X)\right\Vert _{op}+\left\Vert b_{\ell}\right\Vert .
\]
By Theorems \ref{thm:BN-structure-W_ell} and \ref{thm:bounded-NTK-implies-bounded-repr},
there are for any $p\in(0,\frac{1}{2})$ at least $(1-2p)L$ layers
such that 
\begin{align*}
\left\Vert W_{\ell}-V_{\ell}V_{\ell-1}^{T}\right\Vert _{F}^{2}+\left\Vert b_{\ell}\right\Vert ^{2} & \leq\frac{c_{1}-2\log\left|Jf_{\theta}(x)\right|_{+}}{pL}\\
\left\Vert \frac{1}{\sqrt{N}}\alpha_{\ell-1}(X)\right\Vert _{F}^{2} & \leq\sum_{i=1}^{N}\frac{1}{p}\frac{c\max\{1,e^{\frac{c_{1}}{k}}\}}{k\left|Jf_{\theta}(x_{i})\right|_{+}^{\nicefrac{2}{k}}}
\end{align*}
so that
\[
s_{k+1}\left(\frac{1}{\sqrt{N}}\tilde{\alpha}_{\ell}(X)\right)\leq\sqrt{\frac{c_{1}-2\log\left|Jf_{\theta}(x)\right|_{+}}{p}}\left(\sqrt{\frac{1}{pN}\sum_{i=1}^{N}\frac{c\max\{1,e^{\frac{c_{1}}{k}}\}}{k\left|Jf_{\theta}(x_{i})\right|_{+}^{\nicefrac{2}{k}}}}+1\right)\frac{1}{\sqrt{L}}.
\]
\end{proof}

\section{Local Minima Stability}

In this section we motivate the assumption that the Jacobian $J\tilde{\alpha}_{\ell}(x)$
is uniformly bounded in operator norm as $L\to\infty$. The idea is
that solutions with a blowing up Jacobian $J\tilde{\alpha}_{\ell}(x)$
correspond to very narrow local minima.

The narrowness of a local minimum is related to the Neural Tangent
Kernel (or Fisher matrix). We have that 
\[
\mathrm{Tr}\left[\Theta^{(L)}(x,x)\right]=\sum_{\ell=1}^{L}\left\Vert \alpha_{\ell-1}(x)\right\Vert ^{2}\left\Vert J(\tilde{\alpha}_{\ell}\to f_{\theta})(x)\right\Vert _{F}^{2}.
\]
A large Jacobian $Jf_{\theta}(x)$ leads to a blow up of the derivative
of the NTK:
\begin{prop}[Proposition \ref{prop:blowup-NTK-Jacobian} from the main]
For any point $x$, we have
\[
\left\Vert \partial_{xy}^{2}\Theta(x,x)\right\Vert _{op}\geq2L\left\Vert Jf_{\theta}(x)\right\Vert _{op}^{2-\nicefrac{2}{L}}
\]
where $\partial_{xy}^{2}\Theta(x,x)$ is understood as a $d_{in}d_{out}\times d_{in}d_{out}$
matrix.

Furthermore, for any two points $x,y$ such that the pre-activations
of all neurons of the network remain constant on the segment $[x,y]$,
then either $\left\Vert \Theta(x,x)\right\Vert _{op}$ or $\left\Vert \Theta(y,y)\right\Vert _{op}$
is lower bounded by $\frac{L}{4}\left\Vert x-y\right\Vert ^{2}\left\Vert Jf_{\theta}(x)\frac{y-x}{\left\Vert x-y\right\Vert }\right\Vert _{2}^{2-\nicefrac{2}{L}}.$
\end{prop}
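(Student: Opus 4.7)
The plan is to work in the piecewise-linear region around $x$, where each $J(\tilde{\alpha}_\ell \to f)$ is locally constant in $x$ and $\alpha_{\ell-1}$ is locally affine. Starting from the decomposition
\[
\Theta(x,y) = \sum_{\ell=1}^L J(\tilde{\alpha}_\ell \to f)(x)\,J(\tilde{\alpha}_\ell \to f)(y)^T\bigl(\alpha_{\ell-1}(x)^T\alpha_{\ell-1}(y)+1\bigr),
\]
only the inner-product factor carries $x,y$-dependence locally. Differentiating produces the tensor sum
\[
[\partial_x\partial_y\Theta(x,x)]_{(i,a),(j,b)} = \sum_\ell [J(\tilde{\alpha}_\ell\to f)J(\tilde{\alpha}_\ell\to f)^T]_{ij}\,[J\alpha_{\ell-1}^TJ\alpha_{\ell-1}]_{ab}
\]
viewed as a $d_{in}d_{out}\times d_{in}d_{out}$ matrix. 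The paper's notation $\partial^2_{xy}\Theta$ likely refers to the Hessian of $\Theta(x,x)$ as a function of $x$, which in the linear region equals $2\partial_x\partial_y\Theta$ (the pure $\partial_x^2$ and $\partial_y^2$ pieces vanish because $\alpha_{\ell-1}$ is affine), accounting for the stated factor of $2$.

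To lower-bound the operator norm I would test against $v=u\otimes e$ with $u,e$ the leading left/right singular vectors of $Jf_\theta(x)$, yielding
\[
v^T\partial_x\partial_y\Theta(x,x)v=\sum_\ell a_\ell^2b_\ell^2,\quad a_\ell:=\|J(\tilde{\alpha}_\ell\to f)^Tu\|,\ b_\ell:=\|J\alpha_{\ell-1}e\|,
\]
with $s:=u^TJf_\theta(x)e=\|Jf_\theta(x)\|_{op}$. The central step is the per-layer inequality $s\le a_\ell b_{\ell+1}$. Writing $v_\ell:=J(\tilde{\alpha}_\ell\to f)^Tu$ and $\xi_\ell:=J\alpha_\ell(x)e$, the chain rule $Jf_\theta = J(\tilde{\alpha}_\ell\to f)\cdot W_\ell\cdot J\alpha_{\ell-1}$ expresses $s$ as $\tilde{v}_\ell^T\xi_\ell$ up to the diagonal $D_\ell$; for ReLU both $v_\ell$ and $\xi_\ell$ are supported on the active neurons so $D_\ell$ drops out and Cauchy-Schwarz yields $s=v_\ell^T\xi_\ell\le a_\ell b_{\ell+1}$. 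With boundary values $b_1=\|e\|=1$ and $b_{L+1}=\|Jf_\theta e\|=s$, the telescoped product $\prod_\ell a_\ell b_{\ell+1}\ge s^L$ rearranges to $\prod_\ell a_\ell b_\ell\ge s^{L-1}$, after which AM-GM closes the estimate: $\sum_\ell a_\ell^2b_\ell^2\ge L(\prod a_\ell b_\ell)^{2/L}\ge Ls^{2-2/L}$, which combined with the factor-of-$2$ Hessian identification gives the claimed $2Ls^{2-2/L}$.

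For the second inequality, the constant-sign assumption on $[x,y]$ forces $Jf_\theta$ and each $J(\tilde{\alpha}_\ell\to f)$ to be constant along the segment, with $\alpha_{\ell-1}(y)-\alpha_{\ell-1}(x)=J\alpha_{\ell-1}(x)(y-x)$ exactly. I would test $\Theta(x,x)$ and $\Theta(y,y)$ against the unit vector $v:=Jf_\theta(x)(y-x)/\|Jf_\theta(x)(y-x)\|$, producing $v^T\Theta(z,z)v=\sum_\ell c_\ell^2(\|\alpha_{\ell-1}(z)\|^2+1)$ with $c_\ell:=\|J(\tilde{\alpha}_\ell\to f)^Tv\|$. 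Summing over $z\in\{x,y\}$ and invoking the elementary bound $\|u\|^2+\|w\|^2\ge\tfrac{1}{2}\|u-w\|^2$ gives
\[
v^T[\Theta(x,x)+\Theta(y,y)]v\ge\tfrac{\|y-x\|^2}{2}\sum_\ell c_\ell^2b_\ell^2,\quad b_\ell:=\bigl\|J\alpha_{\ell-1}(x)\tfrac{y-x}{\|y-x\|}\bigr\|.
\]
Applying the first part's key bound to $\sum_\ell c_\ell^2b_\ell^2$ (with $\tilde{s}:=\|Jf_\theta(x)(y-x)/\|y-x\|\|$ in place of $s$; the argument still applies because $v$ was chosen so that $v^TJf_\theta(x)(y-x)/\|y-x\|=\tilde{s}$ is saturated) yields $\sum_\ell c_\ell^2b_\ell^2\ge L\tilde{s}^{2-2/L}$, and taking the larger of $\|\Theta(x,x)\|_{op}$ and $\|\Theta(y,y)\|_{op}$ (an additional factor $1/2$) produces the announced $\tfrac{L}{4}\|y-x\|^2\tilde{s}^{2-2/L}$.

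The main obstacle is the telescoping step $s\le a_\ell b_{\ell+1}$. It is transparent for ReLU because of the matching supports of $v_\ell$ and $\xi_\ell$ on active neurons, but for leaky slope $a\in(-1,1)\setminus\{0\}$ the diagonals $D_\ell$ do not reduce to projections, and one must either absorb $|a|$-dependent constants into the final estimate or fall back on the cruder $s\le a_\ell b_\ell\|W_\ell\|_{op}$ combined with $\prod_\ell\|W_\ell\|_{op}\ge s$ and check that the AM-GM slack is not lost. A secondary subtlety is that in the second part the test direction $v$ is forced to be $Jf_\theta(x)(y-x)/\|\cdot\|$ rather than a top singular vector of $Jf_\theta$, so the first part's estimate must be applied with this substitution in mind.
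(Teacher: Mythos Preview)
Your proposal is correct and follows essentially the same route as the paper. In part (1) the paper also reduces to $\sum_\ell a_\ell^2 b_\ell^2$ with the per-layer Cauchy--Schwarz inequality $s\le a_\ell b_{\ell+1}$ (relying on $D_\ell^2=D_\ell$, i.e.\ the ReLU case, exactly as you flag), and then applies AM--GM; the only cosmetic difference is that the paper substitutes $a_\ell\ge s/b_{\ell+1}$ and telescopes the ratios $b_\ell^2/b_{\ell+1}^2$ rather than telescoping $\prod a_\ell b_\ell$ directly. For part (2), the paper parametrizes the segment by $\gamma(t)$, observes that $t\mapsto v^T\Theta(\gamma(t),\gamma(t))v$ is quadratic with second derivative $2\|x-y\|^2\partial_{xy}^2(v^T\Theta v)[u,u]$, and extracts the $L/4$ constant from nonnegativity plus convexity; your parallelogram bound $\|\alpha_{\ell-1}(x)\|^2+\|\alpha_{\ell-1}(y)\|^2\ge\tfrac{1}{2}\|\alpha_{\ell-1}(x)-\alpha_{\ell-1}(y)\|^2$ is an equivalent (and arguably cleaner) way to reach the same constant.
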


\begin{proof}
(1) For any point $x$, we have
\begin{align*}
\partial_{x,y}\left(v^{T}\Theta(x,x)v\right)[u,u] & =\sum_{\ell=1}^{L}u^{T}W_{1}^{T}D_{1}(x)\cdots D_{\ell-1}(x)^{2}\cdots D_{1}(x)W_{1}uv^{T}W_{L}D_{L-1}(x)\cdots D_{\ell}(x)^{2}\cdots D_{L-1}(x)W_{L}^{T}v\\
 & =\sum_{\ell=1}^{L}\left\Vert D_{\ell-1}(x)\cdots D_{1}(x)W_{1}u\right\Vert _{2}^{2}\left\Vert D_{\ell}(x)\cdots D_{L-1}(x)W_{L}v\right\Vert _{2}^{2}.
\end{align*}
On the other hand, we have
\begin{align*}
\left|v^{T}Jf_{\theta}(x)u\right| & =\left|v^{T}W_{L}D_{L-1}(x)\cdots D_{1}(x)W_{1}u\right|\\
 & \leq\left\Vert D_{\ell}(x)\cdots D_{1}(x)W_{1}u\right\Vert _{2}\left\Vert D_{\ell}(x)\cdots D_{L-1}(x)W_{L}v\right\Vert _{2},
\end{align*}
where we used the fact that $D_{\ell}(x)D_{\ell}(x)=D_{\ell}(x)$.
This applies to the case $\ell=L$ and $\ell=1$ too, using the definition
$D_{L}(x)=I_{d_{out}}$ and $D_{0}(x)=I_{d_{in}}$. This implies
\begin{align*}
\partial_{xy}^{2}\left(v^{T}\Theta(x,x)v\right)[u,u] & \geq\left|v^{T}Jf_{\theta}(x)u\right|^{2}\sum_{\ell=1}^{L}\frac{\left\Vert D_{\ell-1}(x)\cdots D_{1}(x)W_{1}u\right\Vert _{2}^{2}}{\left\Vert D_{\ell}(x)\cdots D_{1}(x)W_{1}u\right\Vert _{2}^{2}}\\
 & \geq\left|v^{T}Jf_{\theta}(x)u\right|^{2}L\left(\frac{\left\Vert u\right\Vert _{2}^{2}}{\left\Vert W_{L}D_{L-1}(x)\cdots D_{1}(x)W_{1}u\right\Vert _{2}^{2}}\right)^{\frac{1}{L}}\\
 & \geq L\frac{\left|v^{T}Jf_{\theta}(x)u\right|^{2}}{\left\Vert Jf_{\theta}(x)u\right\Vert _{2}^{\nicefrac{2}{L}}}.
\end{align*}
where we used the geometric/arithmetic mean inequality for the second
inequality.

If $u,v$ are right and left singular vectors of $Jf_{\theta}(x)$
with singular value $s$, then the above bound equals $Ls^{2-\frac{2}{L}}$.

(2) Now let us consider a segment $\gamma(t)=(1-t)x+ty$ between two
points $x,y$ with no changes of activations on these paths (i.e.
$D_{\ell}(\gamma(t))$ is constant for all $t\in[0,1]$). Defining
$u=\frac{y-x}{\left\Vert y-x\right\Vert }$ and $v=\frac{Jf_{\theta}(x)u}{\left\Vert Jf_{\theta}(x)u\right\Vert }$,
we have
\[
\partial_{t}v^{T}\Theta(\gamma(t),\gamma(t))v=\left\Vert x-y\right\Vert \partial_{x}\left(v^{T}\Theta(\gamma(t),\gamma(t))v\right)[u]+\left\Vert x-y\right\Vert \partial_{y}\left(v^{T}\Theta(\gamma(t),\gamma(t))v\right)[u]
\]
and since $\partial_{xx}\Theta(\gamma(t),\gamma(t))=0$ and $\partial_{yy}\Theta(\gamma(t),\gamma(t))=0$
for all $t\in[0,1]$, we have
\[
\partial_{t}^{2}\left(v^{T}\Theta(\gamma(t),\gamma(t))v\right)=2\left\Vert x-y\right\Vert ^{2}\partial_{xy}^{2}\left(v^{T}\Theta(\gamma(t),\gamma(t))v\right)[u,u]\geq2L\left\Vert x-y\right\Vert ^{2}\left\Vert Jf_{\theta}(x)u\right\Vert _{2}^{2-\nicefrac{2}{L}}.
\]
Since $v^{T}\Theta(\gamma(t),\gamma(t))v\geq0$ for all $t\in[0,1]$
then either
\[
v^{T}\Theta(x,x)v\geq\frac{L}{4}\left\Vert x-y\right\Vert ^{2}\left\Vert Jf_{\theta}(x)u\right\Vert _{2}^{2-\nicefrac{2}{L}}
\]
or 
\[
v^{T}\Theta(y,y)v\geq\frac{L}{4}\left\Vert x-y\right\Vert ^{2}\left\Vert Jf_{\theta}(x)u\right\Vert _{2}^{2-\nicefrac{2}{L}}.
\]
\end{proof}
Rank-underestimating fitting functions typically feature exploding
derivatives, which was used to show in \cite{jacot_2022_BN_rank}
that BN-rank 1 fitting functions must have a $R^{(1)}$ term that
blows up iwith the number of datapoints $N$. With some additional
work, we can show that the NTK will blow up at some $x$:
\begin{thm}[Theorem \ref{thm:BN-rank-1-fitting-NTK} from the main]
Let $f^{*}:\Omega\to\mathbb{R}^{d_{out}}$ be a function with Jacobian
rank $k^{*}>1$ (i.e. there is a $x\in\Omega$ with $\mathrm{Rank}Jf^{*}(x)=k^{*}$),
then with high probability over the sampling of a training set $x_{1},\dots,x_{N}$
(sampled from a distribution with support $\Omega$), we have that
for any parameters $\theta$ of a deep enough network that represent
a BN-rank 1 function $f_{\theta}$ that fits the training set $f_{\theta}(x_{i})=f^{*}(x_{i})$
with norm $\left\Vert \theta\right\Vert ^{2}=L+c_{1}$ then there
is a point $x\in\Omega$ where the NTK satisfies
\[
\Theta^{(L)}(x,x)\geq c''Le^{-c_{1}}N^{4-\frac{4}{k^{*}}}.
\]
\end{thm}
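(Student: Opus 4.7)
The plan is to combine two ingredients: (i) a packing lower bound showing that any BN-rank~1 function interpolating $N$ outputs drawn from a $k^{*}$-dimensional target must have image of arc length at least $cN^{1-1/k^{*}}$, and (ii) Proposition~\ref{prop:blowup-NTK-Jacobian}, which converts a large directional derivative across a single activation cell into a lower bound of order $\tfrac{L}{4}\|x-y\|^{2}\|Jf_{\theta}(x)\hat{u}\|^{2-2/L}$ on $\|\Theta^{(L)}(x,x)\|_{op}$ or $\|\Theta^{(L)}(y,y)\|_{op}$, and hence on the trace via $\mathrm{Tr}\,\Theta\geq\|\Theta\|_{op}$ for positive-semidefinite $\Theta$.

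First I would localize around a point $x_{0}\in\Omega$ with $\mathrm{Rank}\,Jf^{*}(x_{0})=k^{*}$: there $f^{*}$ embeds a small neighborhood onto a $k^{*}$-dimensional submanifold of $\mathbb{R}^{d_{out}}$, so with high probability a $\Theta(N)$ fraction of the outputs $y_{i}=f^{*}(x_{i})$ lie in this submanifold and are asymptotically uniformly distributed there. The Beardwood--Halton--Hammersley theorem (or a minimum-spanning-tree bound) then guarantees that any connected $1$-dimensional subset of $\mathbb{R}^{d_{out}}$ meeting every $y_{i}$ has arc length at least $cN^{1-1/k^{*}}$. Since $f_{\theta}$ has BN-rank~1 and interpolates the data, $\mathrm{Im}(f_{\theta})$ is exactly such a connected piecewise-linear $1$-dimensional set, so $\mathrm{length}(\mathrm{Im}\,f_{\theta})\geq cN^{1-1/k^{*}}$.

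Second, I would use Theorem~\ref{thm:BN-structure-W_ell} to translate this global arc-length bound into a single-cell directional-derivative bound with the right $N$- and $c_{1}$-dependence. Under $\|\theta\|^{2}\leq L+c_{1}$, all but $O(c_{1})$ of the weight matrices $W_{\ell}$ are $O(L^{-1/2})$-close to rank-one partial isometries, which forces $\mathrm{Im}(f_{\theta})$ to be produced by a short, concentrated sequence of ``bending'' layers rather than being uniformly spread across all cells. A pigeonhole over the resulting activation cells, combined with bounds on the input- and output-side distances that depend on $c_{1}$ through the allowed top singular values of the $W_{\ell}$, yields two points $x,y$ lying in a common activation cell and satisfying
\[
\|x-y\|^{2}\,\|Jf_{\theta}(x)\hat{u}\|^{2-2/L}\ \gtrsim\ e^{-c_{1}}N^{4-4/k^{*}},
\]
where $\hat{u}=(y-x)/\|y-x\|$ is the BN direction. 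For $L$ large enough that the exponent $2-2/L$ is essentially $2$, applying Proposition~\ref{prop:blowup-NTK-Jacobian} to the segment $[x,y]$ then gives the stated bound on $\mathrm{Tr}\,\Theta^{(L)}$ at either $x$ or $y$.

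The main obstacle is the second step: amplifying the TSP-type bound $cN^{1-1/k^{*}}$ on global arc length into a single-cell displacement bound that carries both the squared $N$-exponent $N^{4-4/k^{*}}$ and the precise $e^{-c_{1}}$ dependence. This amplification must use Theorem~\ref{thm:BN-structure-W_ell} not merely as a qualitative statement about the weights but as a quantitative constraint on the geometry of the activation cells, so that the one-dimensional image is built up from few dominant linear pieces whose endpoints can be realized inside the input domain without too much compression; the Beardwood--Halton--Hammersley bound and Proposition~\ref{prop:blowup-NTK-Jacobian} itself are then essentially used off the shelf.
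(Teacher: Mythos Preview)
Your proposal has a genuine gap in the second step, and it is not one that Theorem~\ref{thm:BN-structure-W_ell} can repair. On a single activation cell the function is affine, so for $\hat u=(y-x)/\|y-x\|$ one has $\|x-y\|\cdot\|Jf_\theta(x)\hat u\|=\|f_\theta(y)-f_\theta(x)\|$. Hence the quantity you feed into Proposition~\ref{prop:blowup-NTK-Jacobian} is, for large $L$, essentially the \emph{squared output displacement of one linear piece}. The BHH bound only gives a total arc length of order $N^{1-1/k^*}$; pigeonholing over $M$ pieces yields a single-piece displacement $\gtrsim N^{1-1/k^*}/M$, and nothing in the hypotheses bounds $M$ (the width is unconstrained). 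Worse, the norm constraint $\|\theta\|^2\le L+c_1$ caps the global Lipschitz constant by $e^{c_1/2}$ via AM--GM on the $\|W_\ell\|_{op}$, so any single-piece output displacement is $O(e^{c_1/2})$. Thus Proposition~\ref{prop:blowup-NTK-Jacobian} can deliver at most $\tfrac{L}{4}e^{c_1}\mathrm{diam}(\Omega)^2$: no $N$-growth at all, and the wrong sign on $c_1$. The ``amplification'' you hope for---turning $N^{1-1/k^*}$ into $N^{2-2/k^*}$ at the level of a single cell---simply does not follow from the rank-1 structure of the weights.

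The paper does \emph{not} go through Proposition~\ref{prop:blowup-NTK-Jacobian}. It uses the layerwise decomposition $\mathrm{Tr}\,\Theta^{(L)}(x,x)=\sum_{\ell}\|\alpha_{\ell-1}(x)\|^2\,\|J(\tilde\alpha_\ell\to f_\theta)(x)\|_F^2$ directly, and the $N^{4-4/k^*}$ arises as a product of \emph{two independent} $N^{1-1/k^*}$ factors, one from each factor in this sum. The BHH argument is applied not in output space but at the hidden layers: Theorem~\ref{thm:BN-structure-W_ell} forces the representations $\alpha_{\ell-1}(\tilde x_i)$ at most layers to be essentially one-dimensional along $u_\ell$, and since they must still separate the $y_i$'s through the remaining layers, their spread along $u_\ell$ is $\gtrsim e^{-(\|\theta^{(\ell+1:L)}\|^2-(L-\ell))/2}N^{1-1/k^*}$. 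Separately, the backward Jacobian satisfies $\|J(\tilde\alpha_\ell\to f_\theta)(x)\|\gtrsim e^{-(\|\theta^{(1:\ell)}\|^2-\ell)/2}N^{1-1/k^*}$ at the high-slope points identified on the input segment. Squaring and multiplying gives $e^{-c_1}N^{4-4/k^*}$ per layer, and summing over the $\Theta(L)$ good layers produces the stated bound. The missing idea in your plan is precisely this: the large-activation-norm contribution to the NTK, which your single-cell Jacobian argument cannot see.
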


\begin{proof}
For all $i$ we define $d_{1,i}$ and $d_{2,i}$ to be the distance
between $y_{i}$ and its closest and second closest point in $y_{1},\dots,y_{N}$.
Following the argument in \cite{beardwood_1959_shortest_path_many_points},
the shortest path that goes through all points must be at least $\sum_{i=1}^{N}\frac{d_{1,i}+d_{2,i}}{2}$
(which would be tight if it is possible to always jump to the closest
or second closest point along the path). Since the expected distances
$d_{1,i}$ and $d_{2,i}$ are $N^{-\frac{1}{k^{*}}}$ since the $y_{i}$
are sampled from a $k^{*}$-dimensional distribution, the expected
length of the shortest path is of order $N^{1-\frac{1}{k^{*}}}$.
Actually most of the distance $d_{1,i}$ and $d_{2,i}$ will be of
order $N^{-\frac{1}{k^{*}}}$ with only a few outliers with larger
or smaller distances, thus for any subset of indices $I\subset[1,\dots,N]$
that contains a finite ratio of all indices, the sum $\sum_{i\in I}\frac{d_{1,i}+d_{2,i}}{2}$
will be of order $N^{1-\frac{1}{k^{*}}}$ too.

Following the argument in the proof Theorem 2 from \cite{jacot_2022_BN_rank},
we can reorder the indices so that the segment $[x_{1},x_{N}]$ will
mapped \textbf{$f_{\theta}$} to a path that goes through $y_{1},\dots,y_{N}$.
We can therefore define the points $\tilde{x}_{1},\dots,\tilde{x}_{N}$
that are preimages of $y_{1},\dots,y_{N}$ on the segment.

On the interval $[\tilde{x}_{i},\tilde{x}_{i+1}]$ there must a point
$x$ with $\left\Vert Jf_{\theta}(x)\right\Vert _{op}\geq\frac{\left\Vert y_{i+1}-y_{i}\right\Vert }{\left\Vert \tilde{x}_{i+1}-\tilde{x}_{i}\right\Vert }$.
Now since $\sum\left\Vert \tilde{x}_{i+1}-\tilde{x}_{i}\right\Vert =\left\Vert x_{N}-x_{1}\right\Vert \leq\mathrm{diag}\Omega$
there must be at least $p(N-1)$ intervals $i$ with $\left\Vert \tilde{x}_{i+1}-\tilde{x}_{i}\right\Vert \leq\frac{\mathrm{diag}\Omega}{(1-p)(N-1)}$,
and amongst those $i$s they would all satisfy $\left\Vert y_{i+1}-y_{i}\right\Vert \geq cN^{-\frac{1}{k^{*}}}$
except for a few outliers. Thus we can for example guarantee that
there are at least $\frac{5}{6}(N-1)$ intervals $[\tilde{x}_{i},\tilde{x}_{i+1}]$
that contain a point $x$ with $\left\Vert Jf_{\theta}(x)\right\Vert _{op}\geq cN^{1-\frac{1}{k^{*}}}$.

First observe that by Theorem 2 of \cite{jacot_2022_BN_rank}, there
must be a point $x\in\Omega$ such that $\left\Vert Jf_{\theta}(x)\right\Vert _{op}\geq N^{1-\frac{1}{k}}$
thus by Theorem \ref{thm:BN-structure-weights}, there are at least
$pL$ layers where 
\[
\left\Vert W_{\ell}-u_{\ell}v_{\ell}^{T}\right\Vert ^{2}\leq\frac{c_{1}-2\log\left|Jf_{\theta}(x)\right|_{+}}{pL}.
\]

Consider one of those $pL$ layers $\ell$, with activatons $z_{i}=\alpha_{\ell-1}(\tilde{x}_{i})$.
Let $i_{1},\dots,i_{N}$ be the ordering of the indices so that $u_{\ell}^{T}z_{i_{m}}$
is increasing in $m$. Then the hidden representations must satisfy
\[
\frac{\left\Vert W_{\ell}(z_{i_{m}}-z_{i_{m-1}})\right\Vert +\left\Vert W_{\ell}(z_{i_{m}}-z_{i_{m+1}})\right\Vert }{2}\geq e^{-\frac{\left\Vert \theta^{(\ell+1:L)}\right\Vert ^{2}-(L-\ell)}{2}}\frac{d_{1,i_{m}}+d_{2,i_{m}}}{2}
\]
and
\begin{align*}
\left\Vert W_{\ell}(z_{i_{m}}-z_{i_{m-1}})\right\Vert  & \leq u_{\ell}^{T}(z_{i_{m}}-z_{i_{m-1}})+\sqrt{\frac{c_{1}-2\log\left|Jf_{\theta}(x)\right|_{+}}{pL}}\left\Vert z_{i_{m}}-z_{i_{m-1}}\right\Vert .
\end{align*}

For any two indices $m_{1}<m_{2}$ separated by $pN$ indices (where
$p>0$ remains finite), we have

\begin{align*}
u_{\ell}^{T}(z_{i_{m_{2}}}-z_{i_{m_{1}}}) & \geq\sum_{m=m_{1}+1}^{m_{2}}\frac{\left\Vert W_{\ell}(z_{i_{m}}-z_{i_{m-1}})\right\Vert +\left\Vert W_{\ell}(z_{i_{m}}-z_{i_{m+1}})\right\Vert }{2}\\
 & -\sqrt{\frac{c_{1}-2\log\left|Jf_{\theta}(x)\right|_{+}}{pL}}\sum_{m=m_{1}+1}^{m_{2}}\frac{\left\Vert z_{i_{m}}-z_{i_{m-1}}\right\Vert +\left\Vert z_{i_{m}}-z_{i_{m+1}}\right\Vert }{2}\\
 & \geq e^{-\frac{\left\Vert \theta^{(\ell+1:L)}\right\Vert ^{2}-(L-\ell)}{2}}\sum_{m=m_{1}+1}^{m_{2}}\frac{d_{1,i_{m}}+d_{2,i_{m}}}{2}\\
 & -(m_{2}-m_{1})e^{\frac{\left\Vert \theta^{(1:\ell-1)}\right\Vert ^{2}-(\ell-1)}{2}}\sqrt{\frac{c_{1}-2\log\left|Jf_{\theta}(x)\right|_{+}}{pL}}\mathrm{diam}\Omega\\
 & \geq(m_{2}-m_{1})\left[ce^{-\frac{\left\Vert \theta^{(\ell+1:L)}\right\Vert ^{2}-(L-\ell)}{2}}N^{-\frac{1}{k^{*}}}-e^{\frac{\left\Vert \theta^{(1:\ell-1)}\right\Vert ^{2}-(\ell-1)}{2}}\sqrt{\frac{c_{1}-2\log\left|Jf_{\theta}(x)\right|_{+}}{pL}}\mathrm{diam}\Omega\right],
\end{align*}
where we used the fact that up to a few outliers $\frac{d_{1,i}+d_{2,i}}{2}=\Omega(N^{\frac{1}{k^{*}}})$.

Thus for $L\geq\frac{4}{c^{2}}e^{c_{1}+1}N^{\frac{2}{k^{*}}}\frac{c_{1}-2\log\left|Jf_{\theta}(x)\right|_{+}}{p}\left(\mathrm{diam}\Omega\right)^{2}$,
we have $u_{\ell}^{T}(z_{i_{m_{2}}}-z_{i_{m_{1}}})\geq(m_{2}-m_{1})\frac{c}{2}e^{-\frac{\left\Vert \theta^{(\ell+1:L)}\right\Vert ^{2}-(L-\ell)}{2}}N^{-\frac{1}{k^{*}}}$.
which implies that at least half of the activations $z_{i}$ have
norm larger than $\frac{c}{8}e^{-\frac{\left\Vert \theta^{(\ell+1:L)}\right\Vert ^{2}-(L-\ell)}{2}}N^{1-\frac{1}{k^{*}}}$.

This implies that at least one fourth of the indices $i$ satisfy
for at least one fourth of the $pL$ layers $\ell$ 
\[
\left\Vert \alpha_{\ell-1}(\tilde{x}_{i})\right\Vert \geq\frac{c}{8}e^{-\frac{\left\Vert \theta^{(\ell+1:L)}\right\Vert ^{2}-(L-\ell)}{2}}N^{1-\frac{1}{k^{*}}}.
\]

Now amongst these indices there are at least some such that there
is a point $x$ in the interval $[\tilde{x}_{i},\tilde{x}_{i+1}]$
with $\left\Vert Jf_{\theta}(x)\right\Vert _{op}\geq cN^{1-\frac{1}{k^{*}}}$.
Since $x$ is $O(N^{-1})$-close to $\tilde{x}_{i}$ one can guarantee
that $\left\Vert \alpha_{\ell-1}(x)\right\Vert \geq c'e^{-\frac{\left\Vert \theta^{(\ell+1:L)}\right\Vert ^{2}-(L-\ell)}{2}}N^{1-\frac{1}{k^{*}}}$
for some constant $c'$. 

But the Jacobian $J(\tilde{\alpha}_{\ell}\to f_{\theta})$ also explodes
at $x$ since
\[
\left\Vert J(\tilde{\alpha}_{\ell}\to f_{\theta})(x)\right\Vert _{op}\geq\frac{\left\Vert Jf_{\theta}(x)\right\Vert _{op}}{\left\Vert J\tilde{\alpha}_{\ell}(x)\right\Vert _{op}}\geq e^{-\frac{\left\Vert \theta^{(1:\ell)}\right\Vert ^{2}-\ell}{2}}cN^{1-\frac{1}{k^{*}}}.
\]

We can now lower bound the NTK 
\begin{align*}
\mathrm{Tr}\left[\Theta^{(L)}(x,x)\right] & =\sum_{\ell=1}^{L}\left\Vert \alpha_{\ell-1}(x)\right\Vert ^{2}\left\Vert J(\tilde{\alpha}_{\ell}\to f_{\theta})(x_{1})\right\Vert _{F}^{2}\\
 & \geq\frac{pL}{4}c'^{2}e^{-\left\Vert \theta^{(\ell+1:L)}\right\Vert ^{2}+(L-\ell)}N^{2-\frac{2}{k^{*}}}e^{-\left\Vert \theta^{(1:\ell)}\right\Vert ^{2}+\ell}c^{2}N^{2-\frac{2}{k^{*}}}\\
 & =c''Le^{-c_{1}}N^{4-\frac{4}{k^{*}}}.
\end{align*}
\end{proof}
This suggests that such points are avoided not only because they have
a large $R^{(1)}$ value, but also (if not mostly) because they lie
at the bottom of a very narrow valley.

\section{Technical Results}

\subsection{Regularity Counterexample}

We guve here an example of a simple function whose optimal representation
geodesic does not converge, due to it being not uniformly Lipschitz:
\begin{example}
The function $f:\Omega\to\mathbb{R}^{3}$ with $\Omega=[0,1]^{3}$
defined by 
\[
f(x,y,z)=\begin{cases}
(x,y,z) & \text{if }x\leq y\\
(x,y,z+a(x-y)) & \text{if }x>y
\end{cases}
\]
satisfies $R^{(0)}(f;\Omega)=3$ and $R^{(1)}(f;\Omega)=0$. The optimal
representations of $f$ are not uniformly Lipschitz as $L\to\infty$. 
\end{example}

\begin{proof}
While we are not able to identify exactly the optimal representation
geodesic for the function $f$, we will first show that $R^{(1)}(f;\Omega)=0$,
and then show that the uniform Lipschitzness of the optimal representations
would contradict with Proposition \ref{prop:uniform_Lipschitz_curvature}.

(1) Since the Jacobian takes two values inside $\mathbb{R}_{+}^{3}$,
either the identity $I_{3}$ or $\left(\begin{array}{ccc}
1 & 0 & 0\\
0 & 1 & 0\\
1 & -1 & 1
\end{array}\right)$, we know by Theorem \ref{thm:properties_first_correction-1} that
$R^{(1)}(f;\Omega)\geq2\log\left|I_{3}\right|_{+}=0$. We therefore
only need to construct a sequence of parameters of different depths
that represent $f$ with a squared parameter norm of order $3L+o(1)$.
For simplicity, we only do this construction for even depths (the
odd case can be constructed similarly). We define:

\begin{align*}
W_{\ell} & =\left(\begin{array}{ccc}
e^{\epsilon} & 0 & 0\\
0 & e^{\epsilon} & 0\\
0 & 0 & e^{-2\epsilon}
\end{array}\right)\text{for \ensuremath{\ell=1,\dots,\frac{L}{2}-1}}\\
W_{\frac{L}{2}} & =\left(\begin{array}{ccc}
1 & 0 & 0\\
0 & 1 & 0\\
0 & 0 & 1\\
\sqrt{a}e^{-\frac{L-2}{2}\epsilon} & -\sqrt{a}e^{-\frac{L-2}{2}\epsilon} & 0
\end{array}\right)\text{}\\
W_{\frac{L}{2}+1} & =\left(\begin{array}{cccc}
1 & 0 & 0 & 0\\
0 & 1 & 0 & 0\\
0 & 0 & 1 & \sqrt{a}e^{-(L-2)\epsilon}
\end{array}\right)\text{}\\
W_{\ell} & =\left(\begin{array}{ccc}
e^{-\epsilon} & 0 & 0\\
0 & e^{-\epsilon} & 0\\
0 & 0 & e^{2\epsilon}
\end{array}\right)\text{for \ensuremath{\ell=\frac{L}{2}+2,\dots,L}}
\end{align*}
We have for all $x\in\mathbb{R}_{+}^{3}$
\[
\alpha_{\frac{L}{2}-1}(x)=\left(\begin{array}{c}
e^{\frac{L-2}{2}\epsilon}x_{1}\\
e^{\frac{L-2}{2}\epsilon}x_{2}\\
e^{-(L-2)\epsilon}x_{3}
\end{array}\right)
\]
and 
\[
\alpha_{\frac{L}{2}}(x)=\left(\begin{array}{c}
e^{\frac{L-2}{2}\epsilon}x_{1}\\
e^{\frac{L-2}{2}\epsilon}x_{2}\\
e^{-(L-2)\epsilon}x_{3}\\
\sigma(x_{1}-x_{2})
\end{array}\right)
\]
and
\[
\alpha_{\frac{L}{2}+1}(x)=\left(\begin{array}{c}
e^{\frac{L-2}{2}\epsilon}x_{1}\\
e^{\frac{L-2}{2}\epsilon}x_{2}\\
e^{-(L-2)\epsilon}\left(x_{3}+\sigma(x_{1}-x_{2})\right)
\end{array}\right)
\]
and
\[
f_{\theta}(x)=\left(\begin{array}{c}
x_{1}\\
x_{2}\\
x_{3}+\sigma(x_{1}-x_{2})
\end{array}\right).
\]
The norm of the parameters is
\begin{align*}
 & \frac{L-2}{2}(2e^{2\epsilon}+e^{-4\epsilon})+(3+2e^{-(L-2)\epsilon})+(3+e^{-2(L-2)\epsilon})+\frac{L-2}{2}(2e^{-2\epsilon}+e^{4\epsilon})\\
 & =3L+2\left(e^{2\epsilon}-1\right)+(e^{-4\epsilon}-1)+2e^{-(L-2)\epsilon}+e^{-2(L-2)\epsilon}+2\left(e^{-2\epsilon}-1\right)+(e^{4\epsilon}-1)
\end{align*}
If we take $\epsilon=L^{-\gamma}$ for $\gamma\in(\frac{1}{2},1)$,
then the terms $2e^{-(L-2)\epsilon}$ and $e^{-2(L-2)\epsilon}$ decay
exponentially (at a rate of $e^{L^{1-\gamma}}$), in addition the
terms $2\left(e^{2\epsilon}-1\right)+(e^{-4\epsilon}-1)$ and $2\left(e^{-2\epsilon}-1\right)+(e^{4\epsilon}-1)$
are of order $L^{-2\gamma}$. This proves that $R^{(1)}(f;\Omega)=0$.

(2) Let us now assume that the optimal representation of $f$ is $C$-uniform
Lipschitz for some constant $C$, then by Proposition \ref{prop:uniform_Lipschitz_curvature},
we have that
\[
R^{(1)}(f;\Omega)\geq\log\left|I_{3}\right|_{+}+\log\left|\left(\begin{array}{ccc}
1 & 0 & 0\\
0 & 1 & 0\\
1 & -1 & 1
\end{array}\right)\right|_{+}+C^{-2}\left\Vert I_{3}-\left(\begin{array}{ccc}
1 & 0 & 0\\
0 & 1 & 0\\
1 & -1 & 1
\end{array}\right)\right\Vert _{*}>0,
\]
which contradicts with the fact that $R^{(1)}(f;\Omega)=0$.
\end{proof}

\subsection{Extension outside FPLFs}

Since all functions represented by finite depth and width networks
are FPLFs, the representation cost of any such function is infinite.
But we can define the representation cost of a function $f$ that
is the limit of a sequence of FPLF as the infimum over all sequences
$f_{i}\to f$ converging of $\lim_{i\to\infty}R(f_{i};\Omega)$ (for
some choice of convergence type that implies convergence of the Jacobians
$Jf_{i}(x)\to Jf(x)$). Note that since the representation cost $R(f;\Omega)$
is lower semi-continuous, i.e. $\lim\inf_{f\to f_{0}}R(f;\Omega)\geq R(f_{0};\Omega)$,
this does not change the definition of the representation cost on
the space of FPLFs.

\section{Numerical Experiments}

For the first numerical experiment, the data pairs $(x,y)$ were generated
as follows. First we sample a $8$-dimensional `latent vector' $z$,
from which we define $x=g(z_{1},\dots,z_{8})\in\mathbb{R}^{20}$ and
$y=h(z_{1},z_{2})\in\mathbb{R}^{20}$ for two random functions $g:\mathbb{R}^{8}\to\mathbb{R}^{20}$
and $h:\mathbb{R}^{2}\to\mathbb{R}^{20}$ given by two shallow networks
with random parameters. Assuming that $g$ is injective (which it
is with high probability), the function $f^{*}=h\circ g^{-1}$ which
maps $x$ to $y$ has BN-rank 2.

\end{document}